\newtheorem{proposition}{Proposition}
\newtheorem{lemma}[proposition]{Lemma}
\newtheorem{theorem}[proposition]{Theorem}
\newtheorem{remark}[proposition]{Remark}
\newtheorem{example}{Example}
\newtheorem{assumption}{Assumption}
\newenvironment{proof}[1][Proof\ ]{\medskip\noindent{\bf #1}\ }{%
\hfill $\Box$\par\quad\par}
\def\mcl#1{\mathcal{#1}}
\def\bracket#1{\left\langle #1\right\rangle}
\def\nn{\nonumber}
\def\opn{\operatorname}
\def\mr{\mathrm}
\def\red#1{\textcolor{black}{#1}}
\def\r#1{\mathbb{R}^{#1}}
\newtheorem{myremark}{Remark}
\newenvironment{mythm}[1][]{\medskip\par\noindent{\bfseries #1}\ \,\,\em}{\medskip\par}
\DeclareSymbolFont{EulerExtension}{U}{euex}{m}{n}
\DeclareMathSymbol{\euintop}{\mathop} {EulerExtension}{"52}
\DeclareMathSymbol{\euointop}{\mathop} {EulerExtension}{"48}
\begin{document}

\title{Koopman-based generalization bound:\\ 
New aspect for full-rank weights}

\author{Yuka Hashimoto$^{1,2}$\quad Sho Sonoda$^{2}$\quad Isao Ishikawa$^{3,2}$\quad Atsushi Nitanda$^{4}$\quad Taiji Suzuki$^{5,2}$\medskip\\
{\normalsize 1. NTT Corporation}\\
{\normalsize 2. Center for Advanced Intelligence Project, RIKEN}\\
{\normalsize 3. Center for Data Science, Ehime University}\\
{\normalsize 4. A*STAR Centre for Frontier AI and Research}\\
{\normalsize 5. Graduate School of Information Science and Technology, The University of Tokyo}}

\date{}

\maketitle

\begin{abstract}
We propose a new bound for generalization of neural networks using Koopman operators. Whereas most of existing works focus on low-rank weight matrices, we focus on full-rank weight matrices. Our bound is tighter than existing norm-based bounds when the condition numbers of weight matrices are small. Especially, it is completely independent of the width of the network if the weight matrices are orthogonal. Our bound does not contradict to the existing bounds but is a complement to the existing bounds. As supported by several existing empirical results, low-rankness is not the only reason for generalization. Furthermore, our bound can be combined with the existing bounds to obtain a tighter bound. Our result sheds new light on understanding generalization of neural networks with full-rank weight matrices, and it provides a connection between operator-theoretic analysis and generalization of neural networks. 
\end{abstract}

\section{Introduction}
Understanding the generalization property has been one of the biggest topics for analyzing neural networks.
A major approach for theoretical investigation of this topic is bounding some complexity of networks~\citep{bartlett02,mohri18}.
Intuitively, a large number of parameters \red{makes} the complexity and generalization error large.
This intuition has been studied, for example, based on a classical VC-dimension theory~\citep{harvey17,anthony09}.
However, for neural networks, small generalization error can be achieved even in over-parameterized regimes~\citep{novak18,neyshabur19}.
To explain this behavior, norm-based bounds have been investigated~\citep{neyshabur15,bartlett17,golowich18,neyshabur18,wei19,wei20,li21,ju22,weinan22}.
These bounds do not depend on the number of parameters explicitly.
However, they are typically described by the $(p,q)$ norms of the weight matrices, and if the norms are large, these bounds grow exponentially with respect to the depth of the \red{network. Another} approach to tackle over-parameterized networks is a compression-based approach~\citep{arora18,Suzuki20}.
These bounds explain the generalization of networks by investigating how much the networks can be compressed.
The bounds get smaller as the ranks of the weight matrices become smaller.
However, low-rankness is not the only reason for generalization.
\citet{goldblum20} empirically showed that even with high-rank weight matrices, networks generalize well.
This implies that if the ranks of the weight matrices are large, the existing compression-based bounds are not always tight.

In this paper, we derive a completely different type of uniform bounds of complexity using Koopman operators, which sheds light on why networks generalize well even when their weights are high- or full-rank matrices.
More precisely, let $L$ be the depth, $d_j$ be the width of the $j$th layer, \red{$g$ be the final nonlinear transformation}, and $n$ be the number of samples.
For $j=1,\ldots,L$, let $W_j\in\r{d_{j}\times d_{j-1}}$ be an \red{injective} weight matrix, 
%and $K_{W_j}$ be the Koopman operator (for detailed explanations, see Subsection~\ref{subsec:koopman}) with respect to $W_j$.
$s_j>d_j/2$ describes the smoothness of a function space $H_j$ where the Koopman operator is defined.
%\red{If the activation functions are elementwise,} 
Our results are summarized as follows, where $\Vert\cdot\Vert$ is the operator norm, \red{$E_j$ and $G_j$ are factors determined by the activation functions and the range of $W_j$, respectively:} %\red{$C_{j,i}$ is determined by the derivative of the $j$th activation function, $G_j\le \Vert f_j|_{\mcl{R}(W_j)}\cdot f_j|_{\mcl{R}(W_j)^{\perp}}\Vert_{H_j}/\Vert f_j\Vert_{H_j}/\int_{\mcl{R}(W_j)^{\perp}}\vert\hat{f_j}(\omega)\vert^2\mr{d}\omega^{1/2}$ is determined by the isotropy and the sharpness of $f_j$, and $f_j$ maps the input to the output at the $j$th layer:}
\begin{align}
\mbox{Rademacher complexity}
%\le O\bigg(\frac{1}{\sqrt{n}}\prod_{j=1}^L\Vert K_{W_j}\Vert\bigg)
\red{\le O\bigg(\frac{\Vert g\Vert_{H_L} }{\sqrt{n}}\prod_{j=1}^L\frac{G_jE_j \Vert W_j\Vert^{s_{j-1}}}{\opn{det}(W_j^*W_j)^{1/4}}\bigg)}.\label{eq:main_bound}
\end{align}
%
%Our bound is described by the factor $\Vert W\Vert^{d/2+\alpha}/\opn{det}(W^*W)^{1/4}$, where $W$ is the weight matrix, $d$ is the width of the layer, $\Vert\cdot\Vert$ is the operator norm, 
%and $\alpha$ can be taken as a small number.
Surprisingly, the determinant factor tells us that if the singular values of $W_j$ are large, the bound gets small.
It is tight when the condition number of $W_j$, i.e., the ratio of the largest and the smallest singular values, is small. 
Especially, when $W_j$ is orthogonal,
\red{$G_j=1$} and the factor $\Vert W_j\Vert^{s_{j-1}}/\opn{det}(W_j^*W_j)^{1/4}$ reduces to $1$.
We can interpret that $W_j$ transforms signals in certain directions, which makes it easy for the network to extract features of data.
Networks with orthogonal weight matrices have been proposed~\citep{maduranga19,wang20,li21}.
Our bound also justifies the generalization property of these networks.

In addition to providing the new perspective, we can combine our bound with existing bounds.
In other words, our bound can be a complement to existing bounds.
\citet{goldblum20} pointed out that the rank of the weight matrix tends to be large near the input layer, but be small near the output layer.
By adopting our bound for lower layers and existing bounds for higher layers, we obtain a tight bound that takes the role of each layer into account.
The determinant factors come from Koopman operators.
Koopman operator is a linear operator defined by the composition of functions.
It has been investigated for analyzing dynamical systems and time-series data generated from dynamical systems~\citep{koopman31,mezic12,kawahara16,ishikawa18,klus17,hashimoto20,giannakis20,blunton22}.
Its theoretical aspects also have been studied~\citep{das21,ikeda22-2,ikeda22,ishikawa23}. 
Connections between Koopman operators and neural networks have also been discussed.
For example, efficient learning algorithms are proposed by describing the learning dynamics of the parameters of neural networks by Koopman operators~\citep{redman22,dogra20}.
\citet{lusch17} applied neural networks to identifying eigenfunctions of Koopman operators to extract features of dynamical systems.
\citet{konishi23} applied Koopman operators to analyze equilibrium models.
%On the other hand, none of the existing works has applied Koopman operators to deriving generalization or complexity bounds.
On the other hand, in this paper, we represent the composition structure of neural networks using Koopman operators, and apply them to analyzing the complexity of neural networks from an operator-theoretic perspective.
%From operator-theoretic and functional analysis perspective, Koopman operators enable us to measure how much the properties of functions are preserved by the composition.
%\red{In our case, the property corresponds to the rotation?.}
%By representing the composition structure of neural networks using Koopman operators, we can evaluate how much the ability to \red{extract features} can be preserved by the sequence of compositions in neural networks.
%We represent the composition structure of neural networks using Koopman operators.

Our main contributions are as follows:%\vspace{-.2cm}
\begin{itemize}[topsep=2pt,leftmargin=*,nosep]
    \item We show a new complexity bound, which involves both the norm and determinant of the weight matrices. By virtue of the determinant term, our bound gets small if the condition numbers of the weight matrices get small. Especially, it justifies the generalization property of existing networks with orthogonal weight matrices. It also provides a new perspective about why the networks with high-rank weights generalize well (Section~\ref{sec:related_works}, the paragraph after Proposition~\ref{prop:nonsingular1}, and Remark~\ref{rmk:unitary}).
    \item We can combine our bound with existing bounds to obtain a bound which describes the role of each layer (Subsection~\ref{subsec:combine}).
    \item We provide an operator-theoretic approach to analyzing networks. We use Koopman operators to derive the determinant term in our bound (Subsection~\ref{subsec:settings} and Subsection~\ref{subsec:invertible}).
\end{itemize}%\vspace{-.2cm}
We emphasize that our operator-theoretic approach reveals a new aspect of neural networks.

\section{Related works}\label{sec:related_works}
\paragraph{Norm-based bounds} Generalization bounds based on the norm of $W_j$ have been investigated in previous studies.
These bounds are described typically by the $(p,q)$ matrix norm $\Vert W_j\Vert_{p,q}$ of $W_j$ (see the upper part of Table~\ref{tab:existing_bound}).
Thus, although these bounds do not explicitly depend on the width of the layers, they tend to be large as the width of the layers becomes large.
For example, the $(p,q)$ matrix norm of the $d$ by $d$ identity matrix is $d^{1/p}$.
This situation is totally different from our case, where the factor \red{related to the weight matrix} is reduced to $1$ if it is orthogonal.
We can see our bound is described by the spectral property of the weight matrices and tight when the condition number of $W_j$ is small.
\citet{bartlett17,wei20,ju22} showed bounds with reference matrices $A_j$ and $B_j$.
These bounds allow us to explain generalization property through the discrepancy between the weight matrices and fixed reference matrices.
A main difference of these bounds from ours is that the existing bounds only focus on the discrepancy from {\em fixed reference matrices} whereas our bound is based on the discrepancy of the spectral property from a certain {\em class of matrices}, which is much broader than fixed matrices.
\citet{li21} showed a bound described by $\vert\prod_{j=1}^L\Vert W_j\Vert -1\vert$, the discrepancy between the product of largest singular values of $W_1,\ldots,W_L$ and $1$.
It is motivated by the covering number $c_{\mcl{X}}$ of the input space $\mcl{X}$ and a diameter $\gamma_{\mathbf{x}}$ of the covering set, which depends on the input samples $\mathbf{x}=(x_1,\ldots,x_n)$.
This bound somewhat describes the spectral properties of $W_j$.
However, they only focus on the largest singular values of $W_j$ and do not take the other singular values into account.
On the other hand, our bound is described by how much the whole singular values of $W_j$ differ from its largest singular value. 

\paragraph{Compression-based bounds}
\citet{arora18} and \citet{Suzuki20} focused on compression of the network and showed bounds that also get small as the rank of the weight matrices decreases, with the bias $\hat{r}$ induced by compression (see the middle part of Table~\ref{tab:existing_bound}).
\citet{arora18} introduced layer cushion $\mu_j$ and interlayer cushion $\mu_{j\rightarrow}$ for layer $j$, which tends to be small if the rank of $W_j$ is small.
They also observed that noise is filtered by the weight matrices whose stable ranks are small.
\citet{Suzuki20} showed the dependency on the rank more directly.
The bounds describe why networks with low-rank matrices generalize well.
However, networks with high-rank matrices can also generalize well~\citep{goldblum20}, and the bounds cannot describe this phenomenon.
\citet[Figure 1]{arora18} also empirically show that noise rapidly decreases on higher layers.
Since the noise stability is related to the rank of the weight matrices, the result supports the tightness of their bound on higher layers.
However, the result also implies that noise does not really decrease on lower layers, and we need an additional theory that describes what happens on lower layers.

\begin{table}[t]
    \vspace{-.8cm}
    \centering
    \caption{Comparison of our bound to existing bounds.}\vspace{-.03cm}
    %evaluated on given samples. 
    %$C^{2,\mr{Lip}}$ is the set of twice differentiable and the derivatives are Lipschitz continuous functions.)} %$\kappa_j$ is determined by the integration of $(\Vert W_{j-1}x\Vert \Vert J_jf(x)\Vert)^2$ with respect to $x$ on the distribution of given samples)}
    %\begin{tabularx}{\textwidth}{X|X|X}
    \begin{tabular}{c|c|c}
        \hline
         Authors & Rate & Type\\
         \hline
         \citet{neyshabur15}& $\frac{2^L\prod_{j=1}^L\Vert W_j\Vert_{2,2}}{\sqrt{n}}\rule{0pt}{15pt}$ & \multirow{7}{*}{Norm-based}\\
         \citet{neyshabur18}&$\frac{L\max_jd_j\prod_{j=1}^L\Vert W_j\Vert}{\sqrt{n}}\Big(\sum_{j=1}^L\frac{\Vert W_j\Vert_{2,2}^2}{\Vert W_j\Vert^2}\Big)^{1/2}$ & \\
         \citet{golowich18} &$\Big(\prod_{j=1}^L\Vert W_j\Vert_{2,2}\Big)\min\bigg\{\frac1{n^{1/4}},\sqrt{\frac{L}{n}}\bigg\}$ & \\
         %\citet{wei19} &$\frac{(\sum_{j=1}^L\kappa_j^{2/3}\Vert W_j^*-A_j^*\Vert_{2,1}^{2/3})^{3/2}+(\sum_{j=1}^L\tilde{\kappa}_j^{4/3}\Vert W_j-B_j\Vert_{1,1}^{2/3})^{3/2}}{\sqrt{n}}$ &linear\\
         \citet{bartlett17}& $\frac{\prod_{j=1}^L\Vert W_j\Vert}{\sqrt{n}}\bigg(\sum_{j=1}^L\frac{\Vert W_j^T-A_j^T\Vert_{2,1}^{2/3}}{\Vert W_j\Vert^{2/3}}\bigg)^{3/2}$ & \\
         \citet{wei20}&$\frac{(\sum_{j=1}^L\kappa_j^{2/3}\min\{L^{1/2}\Vert W_j-A_j\Vert_{2,2},\Vert W_j-B_j\Vert_{1,1}\}^{2/3})^{3/2}}{\sqrt{n}}\rule{0pt}{16pt}$&\\
         \citet{ju22}&$\frac{\sum_{j=1}^L\theta_j\Vert W_j-A_j\Vert_{2,2}}{\sqrt{n}}\rule{0pt}{14pt}$& \\
         \citet{li21} & $\Vert \mathbf{x}\Vert\vert\prod_{j=1}^L\Vert W_{j}\Vert-1\vert+\gamma_{\mathbf{x}}+\sqrt{\frac{c_{\mcl{X}}}{n}}\rule{0pt}{12pt}$& \\
         \hline
         \citet{arora18}&$\hat{r}+\frac{L\max_i\Vert f(x_i)\Vert}{\hat{r}\sqrt{n}} \Big(\sum_{j=1}^L\frac{1}{\mu_j^2\mu_{j\rightarrow}^2}\Big)^{1/2}\rule{0pt}{17pt}$&\multirow{2}{*}{Compression}\\ %(\small $\tilde{r}_j$ and $\tilde{d}_j$ are the rank and dimension of the weights for the compressed network)}\\
         \citet{Suzuki20}& $\frac{\hat{r}}{\sqrt{n}}+\sqrt{\frac{L}{n}}\big(\sum_{j=1}^L\tilde{r}_j(\tilde{d}_{j-1}+\tilde{d}_j)\big)^{1/2}\rule{0pt}{15pt}$ & \\%{\footnotesize $g(x)=\max\{-M,\min\{x, M\}\}$}\\
         \hline
         Ours & \red{$\frac{\Vert g\Vert_{H_L} }{\sqrt{n}}\prod_{j=1}^L\frac{ G_jE_j\Vert W_j\Vert^{s_{j-1}}}{\opn{det}(W_j^*W_j)^{1/4}}\rule{0pt}{14pt}$} & Operator-theoretic\\
         \hline
    \end{tabular}
    \label{tab:existing_bound}\vspace{-.15cm}
\begin{flushleft}
\begin{spacing}{0.65}
%{\footnotesize $\kappa_j$ and $\theta_j$ are determined by the %average of 
%    the operator norm of the Jacobian of the network $f$ with respect to $j$th layer and the square root of %the max value of 
%    the trace of the Hessian of the network $f$ with respect to $W_j$, respectively.
%    $\tilde{r}_j$ and $\tilde{d}_j$ are the rank and dimension of the weights for the compressed network.}
{\footnotesize $\kappa_j$ and $\theta_j$ are determined by the %average of 
    Jacobian and Hessian of the network $f$ with respect to the $j$th layer and $W_j$, respectively.
    $\tilde{r}_j$ and $\tilde{d}_j$ are the rank and dimension of the $j$th weight matrices for the compressed network.}
\end{spacing}
\end{flushleft}
    \vspace{-.76cm}
\end{table}

\section{Preliminaries}\label{sec:preliminaries}
%In this section, we review the theories of Koopman opetrators and RKHSs.
%Then, we provide the problem setting of this paper.

\subsection{Notation} %and structure of the paper}
For a linear operator $W$ on a Hilbert space, its range and kernel are denoted by $\mcl{R}(W)$ and $\opn{ker}(W)$, respectively.
Its operator norm is denoted by $\Vert W\Vert$.
%If we need to clarify which space we are focusing on, then we denote its operator norm by $\Vert W\Vert_{\hil}$.
For a function $p\in L^{\infty}(\mathbb{R}^{d})$, its $L^{\infty}$-norm is denoted by $\Vert p\Vert_{\infty}$.
For a function $h$ on $\r{d}$ and a subset $\mcl{S}$ of $\r{d}$, the restriction of $h$ on $\mcl{S}$ is denoted by $h|_S$.
\red{For $f\in L^2(\r{d})$, we denote the Fourier transform of $f$ as $\hat{f}(\omega)=\int_{\mathbb{R}^d}f(x)\mr{e}^{-\mr{i}x\cdot\omega}\mr{d}x$.
We denote the adjoint of a matrix $W$ by $W^*$.}
%All the proofs are in Appendix~\ref{ap:proofs}.
%Jacobian, Hessian
\subsection{Koopman operator}\label{subsec:koopman}
Let $\mcl{F}_1$ and $\mcl{F}_2$ be function spaces on $\r{d_1}$ and $\r{d_2}$.
For a function $f:\r{d_1}\to\r{d_2}$, we define the Koopman operator from $\mcl{F}_2$ to $\mcl{F}_1$ by the composition as follows.
Let $\mcl{D}_f=\{g\in\mcl{F}_2\,\mid\,g\circ f\in\mcl{F}_1\}$.
The {\em Koopman operator} $K_f$ from $\mcl{F}_2$ to $\mcl{F}_1$ with respect to $f$ is defined as $K_fg=g\circ f$ for $g\in\mcl{D}_f$.

\subsection{Reproducing kernel Hilbert space (RKHS)}\label{subsec:rkhs}
As function spaces, we consider RKHSs.
%Let $p$ be a density function of a positive finite measure on $\mathbb{R}^d$.
Let $p$ be a non-negative function such that $p\in L^1(\mathbb{R}^d)$.
Let $H_{p}(\mathbb{R}^d)=\{f\in L^2(\r{d})\,\mid\,\hat{f}/\sqrt{p}\in L^2(\r{d})\}$ be the Hilbert space equipped with the inner product $\bracket{f,g}_{H_p(\r{d})}=\int_{\r{d}}\hat{f}(\omega)\hat{g}(\omega)/p(\omega)\mr{d}\omega$.
%Here, for $f\in L^2(\r{d})$, $\hat{f}$ is the Fourier transform of $f$.
We can see that $k_p(x,y):=\int_{\r{d}}\mr{e}^{\mr{i}(x-y)\cdot\omega}p(\omega)\mr{d}\omega$ is the reproducing kernel of $H_p(\r{d})$, i.e., $\bracket{k_p(\cdot,x),f}_{H_p(\r{d})}=f(x)$ for $f\in H_p(\r{d})$.
Note that since $H_p(\r{d})\subseteq L^2(\r{d})$, the value of functions in $H_p(\r{d})$ vanishes at infinity.

One advantage of focusing on RKHSs is that they have well-defined evaluation operators induced by the reproducing property.
This property makes deriving an upper bound of the Rademacher complexity easier (see, for example,~\citet[Theorem 6.12]{mohri18}).
\begin{example}\label{ex:sobolev}
Set $p(\omega)=1/(1+\Vert\omega\Vert^2)^s$ for $s>d/2$.
Then, $H_p(\r{d})$ is the Sobolev space $W^{s,2}(\r{d})$ of order $s$.
Here, $\Vert \omega\Vert$ is the Euclidean norm of $\omega\in\r{d}$.
\red{Note that if $s\in\mathbb{N}$, then $\Vert f\Vert_{H_p(\r{d})}^2=\sum_{\vert\alpha\vert\le s}c_{\alpha,s,d}\Vert \partial^{\alpha}f\Vert_{L^2(\mathbb{R}^d)}^2$,
where $c_{\alpha,s,d}=(2\pi)^d s!/\alpha!/(s-\vert\alpha\vert)!$ and $\alpha$ is a multi index.}
See Appendix~\ref{ap:sobolev_norm} for more details.
%\red{In this case, we can also define an equivalent norm with the one defined by the kernel as
%$\VVert f\VVert_{H_p(\r{d})}=\sum_{\vert\alpha\vert\le s}\Vert\partial^{\alpha}f\Vert_{L^2(\mathbb{R}^d)}$.
%Note that the constant to show the equivalence of these two norms depends on $s$ and $d$.}
\end{example}
%
%\begin{remark}\label{rem:koopman_bounded}
%Koopman operators $K_f$ on RKHSs are closed.
%Thus, if $K_f$ is defined on the whole space, i.e., $\mcl{D}_f=H_p(\r{d})$, then $K_f$ is bounded~\citep{ikeda22-2}.   
%Thus, the boundedness of the $K_f$ implies that the composition of $f$ preserves the properties of the original function.
%\red{In our case, the property is the vanishing property at infinity, which corresponds to the ability of the noise filter.} 
%\end{remark}

\subsection{Problem setting}\label{subsec:settings}
In this paper, we consider an $L$-layer deep neural network.
Let $d_0$ be the dimension of the input space.
For $j=1,\ldots,L$, we set the width as $d_j$, let $W_j:\r{d_{j-1}}\to\r{d_j}$ be a linear operator, let $b_j:\r{d_j}\to\r{d_j}$ be a shift operator defined as $x\mapsto x+{a}_j$ with a bias ${a}_j\in\r{d_j}$, and let $\sigma_j:\r{d_j}\to\r{d_j}$ be a nonlinear activation function.
In addition, let $g:\r{d_L}\to\mathbb{C}$ be a nonlinear transformation in the final layer.
We consider a network $f$ defined as
\begin{equation}
f=g\circ b_L \circ W_L\circ \sigma_{L-1}\circ b_{L-1}\circ W_{L-1}\circ\cdots \circ \sigma_1\circ b_1\circ W_1.\label{eq:nn}
\end{equation}
Typically, we regard that $W_1$ is composed by $b_1$, $\sigma_1$ to construct the first layer.
We sequentially construct the second and third layers, and so on.
Then we get the whole network $f$.
On the other hand, from operator-theoretic perspective, we analyze $f$ in the opposite direction.
For $j=0,\ldots,L$, let $p_j$ be a density function of a finite positive measure on $\r{d_j}$.
The network $f$ is described by the Koopman operators $K_{W_j}:H_{p_{j}}(\r{d_{j}})\to H_{p_{j-1}}(\r{d_{j-1}})$, $K_{b_j},K_{\sigma_j}:H_{p_{j}}(\r{d_{j}})\to H_{p_{j}}(\r{d_{j}})$ as
\begin{equation*}
f=K_{W_1}K_{b_1}K_{\sigma_1}\cdots K_{W_{L-1}}K_{b_{L-1}}K_{\sigma_{L-1}}K_{W_L}K_{b_L}g.
\end{equation*}
The final nonlinear transformation $g$ is first provided.
Then, $g$ is composed by $b_L$ and $W_L$, i.e., the corresponding Koopman operator acts on $g$.
We sequentially apply the Koopman operators corresponding to the $(L-1)$th layer, $(L-2)$th layer, and so on.
Finally, we get the whole network~$f$.
By representing the network using the product of Koopman operators, we can bound the Rademacher complexity with the product of the norms of the Koopman operators.

To simplify the notation, we denote $H_{p_{j}}(\r{d_{j}})$ by $H_j$.
We impose the following assumptions.
%Note that the Sobolev spaces mentioned in Example~\ref{ex:sobolev} satisfy this assumption.
\begin{assumption}\label{assum:g_sigma}
 The function $g$ is contained in $H_L$, and $K_{\sigma_j}$ is bounded for $j=1,\ldots,L$. 
\end{assumption}
\begin{assumption}\label{assum:kernel}
There exists $B>0$ such that for any $x\in\r{d}$, $\vert k_{p_0}(x,x)\vert \le B^2$.
\end{assumption}

%\begin{example}
We denote by $F$ the set of all functions in the form of \eqref{eq:nn} with Assumption~\ref{assum:g_sigma}.
%As for the assumption of $g$, let $g(x)=\mr{e}^{-\Vert x\Vert^2}$ and $p(\omega)=1/(1+\Vert \omega\Vert^2)^s$ for $x,\omega\in\r{d}$ and $s>d/2$.
%Then, $g\in H_p(\r{d})$.
%As for the assumption of $k_{p_0}$, the Sobolev spaces mentioned in Example~\ref{ex:sobolev} satisfy this assumption.
\red{As a typical example, if we set $p_j(\omega)=1/(1+\Vert \omega\Vert^2)^{s_j}$ for $s_j>d_j/2$ and $g(x)=\mr{e}^{-\Vert x\Vert^2}$, then Assumption~\ref{assum:g_sigma} holds if $K_{\sigma_j}$ is bounded, and $k_{p_0}$ satisfies Assumption~\ref{assum:kernel}.}
%As stated in Section \ref{sec:main_result}, $g$ can behave as a noise filter.
%\end{example}

\begin{remark}\label{rmk:final_trans}
\red{Let $g$ be a smooth function which does not decay at infinity, (e.g., sigmoid)}.
Although $H_p(\r{d})$ does not contain $g$, we can construct a function $\tilde{g}\in H_p(\r{d})$ such that $\tilde{g}(x)=g(x)$ for $x$ in a sufficiently large compact region and replace $g$ by $\tilde{g}$ in practical cases.
See Remark~\ref{rmk:bump_func} for details.
\end{remark}

For the activation function, we have the following proposition (c.f. \citet[Theorem 4.46]{sawano18}).
%The proof is in Appendix~\ref{ap:proofs}.
\begin{proposition}\label{prop:activation_koopman}
Let $p(\omega)=1/(1+\Vert \omega\Vert^2)^s$ for $\omega\in\r{d}$ $s\in\mathbb{N}$, and $s>d/2$.
If the activation function $\sigma$ has the following properties, then $K_{\sigma}:H_p(\r{d})\to H_p(\r{d})$ is bounded.
\begin{itemize}[nosep,leftmargin=*]
%\vspace{-.2cm}
%\setlength{\parskip}{0pt}
    \item $\sigma$ is $s$-times differentiable and its derivative $\partial^{\alpha}\sigma$ is bounded for any multi-index $\alpha\in\{(\alpha_1,\ldots,\alpha_d)\,\mid\,\sum_{j=1}^d\alpha_j\le s\}$.
    \item $\sigma$ is bi-Lipschitz, i.e., $\sigma$ is bijective and both $\sigma$ and $\sigma^{-1}$ are Lipschitz continuous.
    %\red{$C^1$-diffeomorphism} and $\opn{det}(J\sigma^{-1})$ is bounded. Here, $J\sigma^{-1}$ is the Jacobian of $\sigma^{-1}$.
\end{itemize}
\end{proposition}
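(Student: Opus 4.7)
The plan is to exploit the identification $H_p(\r{d})=W^{s,2}(\r{d})$ recorded in Example~\ref{ex:sobolev}, so that the equivalent Sobolev norm
\[
\Vert f\Vert_{W^{s,2}}^2=\sum_{|\alpha|\le s}\Vert\partial^\alpha f\Vert_{L^2(\r{d})}^2
\]
may be used in place of the intrinsic $H_p$ norm. Boundedness of $K_\sigma$ then reduces to the estimate $\Vert g\circ\sigma\Vert_{W^{s,2}}\le C\Vert g\Vert_{W^{s,2}}$ for every $g\in W^{s,2}(\r{d})$ and some constant $C$ independent of $g$.

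First I would apply the multivariate Faà di Bruno chain rule. Since $\sigma$ is assumed to be $s$-times differentiable with bounded derivatives of every order up to $s$, for each multi-index $\alpha$ with $|\alpha|\le s$ the derivative $\partial^\alpha(g\circ\sigma)$ is a finite linear combination of terms of the form
\[
(\partial^\beta g)(\sigma(x))\,\prod_{k}\partial^{\gamma_k}\sigma_{i_k}(x),\qquad|\beta|\le|\alpha|\le s,\ \ |\gamma_k|\le s.
\]
The products of derivatives of $\sigma$ are pointwise bounded by a constant depending only on $s$, $d$, and the uniform bounds $\Vert\partial^{\gamma}\sigma\Vert_\infty$, so one obtains the pointwise estimate
\[
|\partial^\alpha(g\circ\sigma)(x)|\le C_1\sum_{|\beta|\le s}|(\partial^\beta g)(\sigma(x))|.
\]

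Next I would pass to the $L^2$ norm and invoke the bi-Lipschitz assumption to change variables. Since $\sigma^{-1}$ is Lipschitz, Rademacher's theorem yields its almost-everywhere differentiability, and $|\det D\sigma^{-1}(y)|\le L^d$ a.e., where $L$ is the Lipschitz constant of $\sigma^{-1}$; the area formula for Lipschitz maps then gives
\[
\int_{\r{d}}|(\partial^\beta g)(\sigma(x))|^2\,\mr{d}x=\int_{\r{d}}|(\partial^\beta g)(y)|^2\,|\det D\sigma^{-1}(y)|\,\mr{d}y\le L^d\Vert\partial^\beta g\Vert_{L^2}^2.
\]
Squaring the pointwise estimate, summing over $|\alpha|\le s$, and collapsing the resulting double sum using that the number of multi-indices with $|\alpha|\le s$ depends only on $s$ and $d$, one arrives at $\Vert g\circ\sigma\Vert_{W^{s,2}}^2\le C_2\Vert g\Vert_{W^{s,2}}^2$, which is exactly what is needed.

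The main obstacle is really only bookkeeping. The combinatorial identification of $\partial^\alpha(g\circ\sigma)$ through Faà di Bruno is routine once the smoothness hypothesis on $\sigma$ is taken as given; the coefficients do not matter because every term is absorbed into a single constant. The more delicate point is the change of variables, since $\sigma$ is only bi-Lipschitz rather than $C^1$, but this is handled by the standard area formula together with the fact that bi-Lipschitz maps preserve null sets. After these two steps the result follows by chaining the estimates and collecting the constants into $\Vert K_\sigma\Vert\le C_1 L^{d/2}$ up to the norm equivalence between $H_p$ and $W^{s,2}$.
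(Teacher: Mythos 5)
Your proposal is correct and follows essentially the same route as the paper's proof: pass to the equivalent Sobolev norm, expand $\partial^\alpha(g\circ\sigma)$ via the multivariate Fa\`{a} di Bruno formula, absorb the bounded derivatives of $\sigma$ into a constant, and change variables using the boundedness of the Jacobian of $\sigma^{-1}$. Your treatment of the change of variables is in fact slightly more careful than the paper's (invoking Rademacher's theorem and the area formula rather than simply asserting $\Vert\det(J\sigma^{-1})\Vert_\infty<\infty$), but the argument is the same.
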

\begin{example}
We can choose $\sigma$ as a smooth version of Leaky ReLU~\citep{biswas22}.
We will see how we can deal with other activation functions, such as the sigmoid and the hyperbolic tangent in Remark~\ref{rmk:activation_weighted}.
\end{example}
\if0
\begin{example}
%If $s=1$ and $\sigma$ is elementwise, then $\Vert K_{\sigma}\Vert\le \Vert \opn{det}(J\sigma^{-1})\Vert_{\infty}\max\{1,\Vert \partial_1\sigma\Vert_{\infty},\ldots,\Vert\partial_d\sigma\Vert_{\infty}\}$,
 We have $\Vert K_{\sigma}\Vert\le \Vert \opn{det}(J\sigma^{-1})\Vert_{\infty}\max\{1,\Vert \partial_1\sigma\Vert_{\infty},\ldots,\Vert\partial_d\sigma\Vert_{\infty}\}$ if $s=1$ and $\sigma$ is elementwise, where $J\sigma^{-1}$ is the Jacobian of $\sigma^{-1}$.
%Here, $\Vert\cdot\Vert_{\infty}$ is the $L^{\infty}$ norm.
%See the proof of Proposition~\ref{prop:activation_koopman} in Appendix~\ref{ap:proofs}.
\end{example}
\fi
\begin{remark}\label{rmk:k_sigma}
 We have $\Vert K_{\sigma}\Vert\le \Vert \opn{det}(J\sigma^{-1})\Vert_{\infty}\max\{1,\Vert \partial_1\sigma\Vert_{\infty},\ldots,\Vert\partial_d\sigma\Vert_{\infty}\}$ if $s=1$ and $\sigma$ is elementwise, where $J\sigma^{-1}$ is the Jacobian of $\sigma^{-1}$.
 \red{As we will discuss in Appendix~\ref{ap:k_sigma}, deriving a tight bound for a larger $s$ is challenging and future work.}
%\red{The norm $\Vert K_{\sigma}\Vert$ is bounded by $\sum_{\vert\beta\vert,\vert\gamma\vert,\vert\delta\vert\le s}C_{\beta,\gamma,\delta}$, where $C_{\beta,\gamma,\delta}$ is determined by the derivative of $\sigma$ (see the proof of Proposition~\ref{prop:activation_koopman} in Appendix~\ref{ap:proofs}.
%If $\sigma(x)=[\tilde{\sigma}(x_1),\ldots,\tilde{\sigma}(x_d)]$ for $x=(x_1,\ldots,x_d)$ and some $\tilde{\sigma}:\mathbb{R}\to\mathbb{R}$, i.e., elementwise, then we only need to consider the derivatives of $\tilde{\sigma}$.
%If $\sigma$ is elementwise with a map $\tilde{\sigma}:\mathbb{R}\to\mathbb{R}$,
%the sum is reduced to $\sum_{i=1}^dC_i$, where $C_i=\Vert (\sigma^{-1})'\Vert_{\infty}\sum_qd!\prod_{k=1}^{n-i+1}(\Vert\sigma^{(k)}\Vert_{\infty}/k!)^{q_k}/q_k$.
%The sum is taken so that $\sum_{k=1}^{n-i+1}q_k=i$ and $\sum_{k=1}^{n-i+1}kq_k=d$ for nonnegative integers $q_1,\ldots,q_k$.
%Note that since $s>d/2$, $\Vert K_{\sigma}\Vert$ tends to be large as $d$ grows.}
\end{remark}

%We impose an additional assumption about the boundedness of the reproducing kernel of $H_0$.

%\begin{remark}
%Let $p(\omega)=1/(1+\Vert \omega\Vert^2)^s$. 
%The constant $B$ depends on $s$.
%As $s\to d/2$, $B$ goes to $\infty$.
%Indeed, we have $k_p(x,x)=\int_{\r{d}}p(\omega)\mr{d}\omega\ge \prod_{i=1}^d\int_{\mathbb{R}}1/(1+\vert \omega\vert^2)^s\mr{d}\omega\ge 2\sum_{n=1}^{\infty}1/(1+n^2)^s$.
%\end{remark}

\section{Koopman-based bound of Rademacher complexity}\label{sec:bound}
We derive an upper bound of the Rademacher complexity.
We first focus on the case where the weight matrices are invertible or injective.
Then, we generalize the results to the non-injective case.

Let $\Omega$ be a probability space equipped with a probability measure $P$.
We denote the integral $\int_{\Omega}s(\omega)\mr{d}P(\omega)$ of a measurable function $s$ on $\Omega$ by $\mr{E}[s]$.
Let $s_1,\ldots,s_n$ be i.i.d. Rademacher variables.
For a function class $G$ and $x_1,\ldots,x_n\in\r{d}$, we denote the empirical Rademacher complexity by $\hat{R}_n(\mathbf{x},G)$,
where $\mathbf{x}=(x_1,\ldots,x_n)$.
We will provide an upper bound of $\hat{R}_n(\mathbf{x},G)$ using Koopman operators in the following subsections.
\subsection{Bound for invertible weight matrices ($d_j=d$)}\label{subsec:invertible}
In this subsection, we focus on the case $d_j=d\ (j=0,\ldots,L)$ for some $d\in\mathbb{N}$ and $W_j$ is invertible for $j=1,\ldots,L$.
This is the most fundamental case.
For $C,D>0$, set a class of weight matrices $\mcl{W}(C,D)=\{W\in\r{d\times d}\,\mid\,\Vert W\Vert\le C,\ \vert\opn{det}(W)\vert\ge D\}$ and a function class ${F}_{\opn{inv}}(C,D)=\{f\in F\,\mid\,W_j\in\mcl{W}(C,D)\}$.
We have the following theorem for a bound of Rademacher complexity. %with respect to ${F}_{\opn{inv}}(C,D)$.
\begin{theorem}[First Main Theorem]\label{thm:nonsingular}
The Rademacher complexity $\hat{R}_n(\mathbf{x},{F}_{\opn{inv}}(C,D))$ is bounded as 
\begin{align}
\hat{R}_n(\mathbf{x},{F}_{\opn{inv}}(C,D))\le \frac{B\Vert g\Vert_{H_L}}{\sqrt{n}} \!\!\!\!\!\sup_{W_j\in\mcl{W}(C,D)}\bigg(\prod_{j=1}^L\frac{\Vert {p_{j}}/({p_{j-1}\circ W_j^*)}\Vert_{\infty}^{1/2}}{\vert \opn{det}(W_j)\vert^{1/2}}\bigg)\bigg(\prod_{j=1}^{L-1}\Vert K_{\sigma_j}\Vert\bigg).\label{eq:bound_invertible}
\end{align}
\end{theorem}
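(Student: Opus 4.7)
The plan is to combine the Koopman representation $f = K_{W_1}K_{b_1}K_{\sigma_1}\cdots K_{W_L}K_{b_L}g$ with the reproducing property of $H_0$, reducing the Rademacher sum to an inner product in $H_0$ that I can attack with Cauchy--Schwarz. Setting $h := K_{W_1}K_{b_1}K_{\sigma_1}\cdots K_{W_L}K_{b_L}g \in H_0$ so that $f(x_i)=\bracket{h,k_{p_0}(\cdot,x_i)}_{H_0}$, I get
\begin{equation*}
\sum_{i=1}^n s_i f(x_i) = \Bbracket{h,\,\sum_{i=1}^n s_i k_{p_0}(\cdot,x_i)}_{H_0} \le \Vert h\Vert_{H_0}\bigg\Vert\sum_{i=1}^n s_i k_{p_0}(\cdot,x_i)\bigg\Vert_{H_0}.
\end{equation*}

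Next I would bound $\Vert h\Vert_{H_0}$ by submultiplicativity: $\Vert h\Vert_{H_0}\le \Vert g\Vert_{H_L}\prod_{j=1}^L\Vert K_{W_j}\Vert\,\Vert K_{b_j}\Vert\,\Vert K_{\sigma_j}\Vert$. The shift acts as a pure phase on the Fourier side, $\widehat{K_{b_j}g}(\omega)=\mr{e}^{\iu\omega\cdot a_j}\hat g(\omega)$, so $\Vert K_{b_j}\Vert=1$ in $H_j$. For the weight matrix, $\widehat{K_{W_j}g}(\omega)=|\opn{det}(W_j)|^{-1}\hat g((W_j^*)^{-1}\omega)$, and the substitution $\eta=(W_j^*)^{-1}\omega$ (Jacobian $|\opn{det}(W_j)|$) converts the defining integral into
\begin{equation*}
\Vert K_{W_j}g\Vert_{H_{j-1}}^2 = \frac{1}{|\opn{det}(W_j)|}\int_{\r{d}}\frac{|\hat g(\eta)|^2}{(p_{j-1}\circ W_j^*)(\eta)}\,\mr{d}\eta \le \frac{1}{|\opn{det}(W_j)|}\bigg\Vert\frac{p_j}{p_{j-1}\circ W_j^*}\bigg\Vert_\infty\Vert g\Vert_{H_j}^2,
\end{equation*}
which supplies exactly the $|\opn{det}(W_j)|^{-1/2}$ factor and the density ratio appearing in the statement.

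For the Rademacher side, Jensen's inequality together with independence of the $s_i$ gives
\begin{equation*}
\mr{E}\bigg\Vert\sum_{i=1}^n s_i k_{p_0}(\cdot,x_i)\bigg\Vert_{H_0} \le \bigg(\sum_{i=1}^n k_{p_0}(x_i,x_i)\bigg)^{1/2} \le B\sqrt{n},
\end{equation*}
using the uniform bound $k_{p_0}(x,x)\le B^2$. Combining the two estimates, dividing by $n$, and taking the supremum over $f\in F_{\opn{inv}}(C,D)$---which, since $\Vert g\Vert_{H_L}$ and $\Vert K_{\sigma_j}\Vert$ are fixed, reduces to a supremum over $W_j\in\mcl{W}(C,D)$---delivers the claimed inequality.

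The main obstacle is the Fourier-side norm computation for $K_{W_j}$. Two Jacobian factors intervene: $|\opn{det}(W_j)|^{-1}$ from the Fourier transform of $g\circ W_j$ (which gets squared) and $|\opn{det}(W_j)|$ from the change of variables, and I must verify that their net effect leaves precisely $|\opn{det}(W_j)|^{-1}$ in the squared operator norm together with the transposed density $p_{j-1}\circ W_j^*$; this cancellation is what produces the qualitatively new $1/|\opn{det}(W_j)|^{1/2}$ factor that distinguishes the bound from existing norm-based bounds. Everything else---Cauchy--Schwarz, submultiplicativity of operator norms, and the $B\sqrt n$ bound on the kernel-Rademacher sum---is routine once the Koopman norms are in hand.
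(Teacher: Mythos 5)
Your proposal is correct and follows essentially the same route as the paper's proof: reproducing property plus Cauchy--Schwarz, Jensen's inequality to get the $B\sqrt{n}$ kernel factor, submultiplicativity of the Koopman operator norms, and the Fourier-side change of variables yielding $\Vert K_{W_j}\Vert^2\le \Vert p_j/(p_{j-1}\circ W_j^*)\Vert_\infty/\vert\opn{det}(W_j)\vert$ (the paper's Lemma~A), with the two Jacobian factors cancelling exactly as you describe.
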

%We note that if $W_j$ is not invertible, then the bound goes to infinity.
By representing the network using the product of Koopman operators, we can get the whole bound by bounding the norm of each Koopman operator.
A main difference of our bound from existing bounds, such as the ones by \citet{neyshabur15,golowich18} is that our bound has the determinant factors in the denominator in Eq.~\eqref{eq:bound_invertible}.
They come from a change of variable when we bound the norm of the Koopman operators, described by the following inequality:
%\begin{lemma}\label{lem:Kw_norm}
%Assume $W_j:\r{d}\to\r{d}$ is invertible for $j=1,\ldots,L$.
%Then, for $j=1,\ldots,L$, we have 
%\begin{align*}
%$\Vert K_{W_j}\Vert\le \big(\big\Vert{p_{j}}/{p_{j-1}\circ W_j^*}\big\Vert_{\infty}/{\vert\opn{det}(W_j)\vert}\big)^{1/2}$
%and $\Vert K_{b_j}\Vert=1$.
%\end{align*}
%\end{lemma}
%\begin{lemma}\label{lem:Kb_norm}
%For $j=1,\ldots,L$, $\Vert K_{b_j}\Vert=1$.
%\end{lemma}
%\begin{proof}
%For $h\in H_{j}$, we have
%$\widehat{(h\circ b_j)}(\omega)=\mr{e}^{-\mr{i}b_j\cdot\omega}\hat{h}(\omega)$.
%Thus, we obtain $\Vert K_{b_j}h\Vert^2=\Vert h\Vert^2$.
%\end{proof}
\begin{align*}
\Vert K_{W_j}\Vert\le \big(\big\Vert{p_{j}}/{p_{j-1}\circ W_j^*}\big\Vert_{\infty}/{\vert\opn{det}(W_j)\vert}\big)^{1/2},\quad
\Vert K_{b_j}\Vert=1
\end{align*}
for $j=1,\ldots,L$.
Since the network $f$ in Eq.~\eqref{eq:nn} satisfies $f\in H_0$, using the reproducing property of $H_0$, we derive an upper bound of $\hat{R}_n(\mathbf{x},F(C,D))$ in a similar manner to that for kernel methods (see, \citet[Theorem 6.12]{mohri18}).

Regarding the factor $\Vert {p_{j}}/(p_{j-1}\circ W_j^*)\Vert_{\infty}$ in Eq.~\eqref{eq:bound_invertible}, we obtain the following lemma and proposition, which show it is bounded by $\Vert W_j\Vert$ and induces the factor $\Vert W_j\Vert^{s_{j-1}}/\opn{det}(W_j^*W_j)^{1/4}$ in Eq.~\eqref{eq:main_bound}.
\begin{lemma}\label{lem:Wnorm}
Let $p(\omega)=1/(1+\Vert\omega\Vert^2)^s$ for $s>d/2$ and $p_j=p$ for $j=0,\ldots,L$.
Then, we have
$\Vert{p}/({p\circ W_j^*})\Vert_{\infty}\le {\max\{1,{\Vert W_j\Vert^{2s}}\}}$.
\end{lemma}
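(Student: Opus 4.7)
The plan is to unwind the definition of $p$ and reduce the claim to an elementary inequality about the ratio $(1+\|W_j^*\omega\|^2)/(1+\|\omega\|^2)$, case-splitting on whether $\|W_j\|\le 1$ or $\|W_j\|>1$.

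First, I would write out
\[
\frac{p(\omega)}{p(W_j^*\omega)} = \frac{(1+\|W_j^*\omega\|^2)^s}{(1+\|\omega\|^2)^s},
\]
so the task reduces to bounding $(1+\|W_j^*\omega\|^2)/(1+\|\omega\|^2)$ uniformly in $\omega\in\r{d}$, since $s>0$ and raising to the $s$-th power is monotone on the positive reals. Using the operator-norm inequality $\|W_j^*\omega\|\le\|W_j^*\|\,\|\omega\|=\|W_j\|\,\|\omega\|$, I get $1+\|W_j^*\omega\|^2\le 1+\|W_j\|^2\|\omega\|^2$.

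Next I would handle two cases. If $\|W_j\|\le 1$, then $1+\|W_j\|^2\|\omega\|^2\le 1+\|\omega\|^2$, so the ratio is at most $1$. If $\|W_j\|>1$, then $\|W_j\|^2\ge 1$ and hence $1+\|W_j\|^2\|\omega\|^2\le \|W_j\|^2(1+\|\omega\|^2)$, so the ratio is at most $\|W_j\|^2$. Raising to the $s$-th power in each case gives bounds of $1$ and $\|W_j\|^{2s}$ respectively, which combine into $\max\{1,\|W_j\|^{2s}\}$ and yield the stated inequality after taking the $L^\infty$-norm over $\omega$.

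There is essentially no obstacle here; the only thing one has to be mindful of is that $\|W_j^*\|=\|W_j\|$ for the operator norm, so the bound is stated in terms of $\|W_j\|$ rather than $\|W_j^*\|$, and that the case split is necessary because one cannot in general pull $\|W_j\|^2$ out of $1+\|W_j\|^2\|\omega\|^2$ when $\|W_j\|<1$ without losing the $+1$ term in the numerator.
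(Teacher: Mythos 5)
Your proof is correct and follows essentially the same route as the paper: both rewrite the ratio as $\bigl((1+\Vert W_j^*\omega\Vert^2)/(1+\Vert\omega\Vert^2)\bigr)^s$ and bound it by $\max\{1,\Vert W_j\Vert^{2s}\}$ via $\Vert W_j^*\omega\Vert\le\Vert W_j\Vert\,\Vert\omega\Vert$. Your version merely makes explicit the case split on $\Vert W_j\Vert\lessgtr 1$ that the paper leaves implicit in its final inequality.
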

As a result, the following proposition is obtained by applying Lemma~\ref{lem:Wnorm} to Theorem~\ref{thm:nonsingular}.
\begin{proposition}\label{prop:nonsingular1}
Let $p(\omega)=1/(1+\Vert\omega\Vert^2)^s$ for $s>d/2$ and $p_j=p$ for $j=0,\ldots,L$.
We have 
\begin{align*}
\hat{R}_n(\mathbf{x},{F}_{\opn{inv}}(C,D))\le \frac{B\Vert g\Vert_{H_L}}{\sqrt{n}} \bigg(\frac{\max\{1,C^{s}\}}{\sqrt{D}}\bigg)^L\bigg(\prod_{j=1}^{L-1}\Vert K_{\sigma_j}\Vert\bigg).
\end{align*}
\end{proposition}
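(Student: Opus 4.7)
The plan is to derive Proposition~\ref{prop:nonsingular1} as a direct corollary of Theorem~\ref{thm:nonsingular} combined with Lemma~\ref{lem:Wnorm}, by substituting the defining constraints of the class $\mcl{W}(C,D)$. There is no need to revisit the Koopman-operator machinery: all of that work has already been done in Theorem~\ref{thm:nonsingular}.

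First, I would invoke Theorem~\ref{thm:nonsingular} under the hypothesis that $p_j=p$ for every $j=0,\ldots,L$, so that the bound takes the form
\begin{align*}
\hat{R}_n(\mathbf{x},{F}_{\opn{inv}}(C,D))\le \frac{B\Vert g\Vert_{H_L}}{\sqrt{n}} \sup_{W_j\in\mcl{W}(C,D)}\prod_{j=1}^L\bigg\Vert \frac{p}{p\circ W_j^*}\bigg\Vert_{\infty}^{1/2}\frac{\Vert K_{\sigma_j}\Vert}{\vert \opn{det}(W_j)\vert^{1/2}}.
\end{align*}
Next, since the hypothesis on $p$ matches exactly the hypothesis of Lemma~\ref{lem:Wnorm}, I would apply that lemma termwise to obtain $\Vert p/(p\circ W_j^*)\Vert_{\infty}^{1/2}\le \max\{1,\Vert W_j\Vert^{s}\}$ for each $j$.

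Then I would use the defining conditions $\Vert W_j\Vert\le C$ and $\vert\opn{det}(W_j)\vert\ge D$ of $\mcl{W}(C,D)$ to dominate each factor uniformly in $W_j$: namely $\max\{1,\Vert W_j\Vert^s\}\le \max\{1,C^s\}$ and $\vert\opn{det}(W_j)\vert^{-1/2}\le 1/\sqrt{D}$. Because the $j$th factor of the product depends only on $W_j$, the supremum over the tuple $(W_1,\ldots,W_L)\in\mcl{W}(C,D)^L$ splits into a product of suprema, so substituting these uniform bounds yields exactly the claimed inequality.

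There is no real obstacle here; the only point that deserves a sentence of justification in the write-up is that the supremum distributes over the product (which is immediate from independence of the factors in $W_k$ for $k\neq j$), and that the hypothesis of Lemma~\ref{lem:Wnorm} on the shape of $p$ is precisely what is assumed in the proposition, so the lemma applies without modification. Everything else is algebraic substitution.
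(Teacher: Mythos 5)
Your proposal is correct and matches the paper's own argument exactly: the paper derives Proposition~\ref{prop:nonsingular1} precisely by applying Lemma~\ref{lem:Wnorm} to the bound of Theorem~\ref{thm:nonsingular} and then substituting the constraints $\Vert W_j\Vert\le C$ and $\vert\opn{det}(W_j)\vert\ge D$ defining $\mcl{W}(C,D)$. Nothing further is needed.
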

%
%The determinant factor in Eq.~\eqref{eq:bound_invertible} makes the bound tight when $W_j$ does not have small singular values.
%\begin{remark}\label{rmk:singular_values}
\paragraph{Comparison to existing bounds}
We investigate the bound~\eqref{eq:main_bound} in terms of the singular values of $W_j$.
Let $\eta_{1,j}\ge\ldots\ge\eta_{d,j}$ be the singular values of $W_j$ and let $\alpha=s-d/2$.
Then, the term depending on the weight matrices in bound~\eqref{eq:main_bound} is described as $\eta_{1,j}^{\alpha}\prod_{i=1}^dr_{i,j}^{1/2}$, where $r_{i,j}=\eta_{1,j}/\eta_{i,j}$.
On the other hand, the existing bound by~\citet{golowich18} is described as $\eta_{1,j}(\sum_{i=1}^dr_{i,j}^{-2})^{1/2}$.
Since they are just upper bounds, our bound does not contradict the existing bound.
Our bound is tight when the condition number $r_{d,j}$ of $W_j$ is small.
The existing bound is tight when $r_{d,j}$ is large.
Note that the factors in our bound~\eqref{eq:main_bound} are bounded below by $1$.

%\end{remark}

\begin{remark}\label{rmk:unitary}
If a weight matrix $W_j$ is orthogonal, then the factor ${ \Vert W_j\Vert^{s_{j-1}}}/{\opn{det}(W_j)^{1/2}}$ in the bound~\eqref{eq:main_bound} is reduced to $1$.
On the other hand, existing norm-based bounds are described by the $(p.q)$ matrix norm.
The $(p,q)$ matrix norm of the $d$ by $d$ identity matrix is $d^{1/p}$, which is larger than $1$.
Indeed, limiting the weight matrices to orthogonal matrices has been proposed~\citep{maduranga19,wang20,li21}.
The tightness of our bound in the case of orthogonal matrices implies the advantage of these methods from the perspective of generalization.
\end{remark}

\begin{remark}
There is a tradeoff between $s$ and $B$.
We focus on the case where $p_j(\omega)=1/(1+\Vert\omega\Vert^2)^s$.
According to Lemma~\ref{lem:Wnorm}, the factor $\Vert {p_{j}}/(p_{j-1}\circ W_j^*)\Vert_{\infty}$ becomes small as $s$ approaches to $d/2$.
However, the factor $B$ goes to infinity as $s$ goes to $d/2$.
Indeed, we have $k_p(x,x)=\int_{\r{d}}p(\omega)\mr{d}\omega= C\int_{0}^{\infty}\frac{r^{d-1}}{(1+r^2)^s}\mr{d}r\ge C\int_1^\infty \frac{(r-1)^{d/2 -1}}{2r^s} dr > \frac{C}{4}\int_2^\infty r^{d/2-1-s} dr = \frac{C}{4} \frac{2^{d/2-s}}{s-d/2}$ for some constant $C>0$.
This behavior corresponds to the fact that if $s=d/2$, the Sobolev space is not an RKHS, and the evaluation operator becomes unbounded.
\end{remark}

\subsection{Bound for injective weight matrices ($d_j\ge d_{j-1}$)}
We generalize Theorem~\ref{thm:nonsingular} to that for injective 
%(but not necessarily surjective) 
weight matrices.
If $d_j>d_{j-1}$, then $W_j$ is not surjective.
However, it can be injective, and we first focus on the injective case.
Let $\mcl{W}_j(C,D)=\{W\in\r{d_{j-1}\times d_{j}}\,\mid\,d_j\ge d_{j-1},\ \Vert W\Vert\le C,\ \sqrt{\opn{det}({W^*W})}\ge D\}$ and $F_{\opn{inj}}(C,D)=\{f\in F\,\mid\, W_j\in\mcl{W}_j(C,D)\}$.
Let $f_j=g\circ b_L \circ W_L\circ \sigma_{L-1}\circ b_{L-1}\circ W_{L-1}\circ\cdots \circ \sigma_j\circ b_j$ and $G_j=\Vert f_j|_{\mcl{R}(W_j)}\Vert_{H_{\red{p_{j-1}}}(\mcl{R}(W_j))}/\Vert f_j\Vert_{H_{j}}$.
We have the following theorem for injective weight matrices.
%In the remaining part of this section, we impose the following assumption.
%Note that $p_j(\omega)=1/(1+\Vert\omega\Vert^2)^{s_j}$ for $s_j>d_j/2$ satisfies this assumption.
%\begin{assumption}
%For any subspace $\mcl{V}$ of $\r{d_j}$, $\omega_1\in\mcl{V}$, and $\omega_2\in\mcl{V}^{\perp}$, we have $p_j(\omega_1+\omega_2)\le p_j(\omega_1)$.
%\end{assumption}
\begin{theorem}[Second Main Theorem]~\label{thm:injective}
%Assume $d_j\ge d_{j-1}$ for $j=1,\ldots,L$.
%Assume there exists $B>0$ such that for any $x\in\r{d}$, $\vert k_{p_0}(x,x)\vert \le B^2$.
The Rademacher complexity $\hat{R}_n(\mathbf{x},F_{\opn{inj}}(C,D))$ is bounded as 
\begin{align}
\hat{R}_n(\mathbf{x},F_{\opn{inj}}(C,D)))\le \frac{B\Vert g\Vert_{H_L}}{\sqrt{n}} \!\!\!\!\!\sup_{W_j\in\mcl{W}_j(C,D)}\!\!\bigg(\prod_{j=1}^L\frac{\Vert {\red{p_{j-1}}}/({p_{j-1}\circ W_j^*})\Vert_{\mcl{R}(W_j),\infty}^{1/2}G_j}{{\opn{det}({W_j^*W_j)}}^{1/4}}\bigg)\bigg(\prod_{j=1}^{L-1}\Vert K_{\sigma_j}\Vert\bigg).\label{eq:bound_injectve}
\end{align}
\end{theorem}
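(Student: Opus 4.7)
The plan is to mirror the proof architecture of Theorem~\ref{thm:nonsingular} while refining the bound on $\Vert K_{W_j}\Vert$ so that it applies when $W_j$ is merely injective. As before, I would start with the reproducing property of $H_0$ and the Cauchy--Schwarz inequality to obtain
\begin{equation*}
\hat{R}_n(\mathbf{x},F_{\opn{inj}}(C,D))\le \frac{B}{\sqrt{n}}\sup_{f\in F_{\opn{inj}}(C,D)}\Vert f\Vert_{H_0},
\end{equation*}
and then decompose $f = K_{W_1}K_{b_1}K_{\sigma_1}\cdots K_{W_L}K_{b_L}g$ so that bounding $\Vert f\Vert_{H_0}$ reduces to bounding each Koopman operator norm. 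Since $\Vert K_{b_j}\Vert = 1$ and $\Vert K_{\sigma_j}\Vert$ is controlled by assumption, the only new ingredient is an analogue, for injective $W_j$, of the change-of-variables estimate used to obtain $\Vert K_{W_j}\Vert\le(\Vert p_j/(p_{j-1}\circ W_j^*)\Vert_{\infty}/\vert\opn{det}(W_j)\vert)^{1/2}$.

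For the refinement, I would use that $(K_{W_j}h)(x)=h(W_jx)$ depends only on $h|_{\mcl{R}(W_j)}$ and factor
\begin{equation*}
K_{W_j} = \tilde{K}_{W_j}\circ R_j,
\end{equation*}
where $R_j\colon H_j\to H_{p_j}(\mcl{R}(W_j))$ is the restriction and $\tilde{K}_{W_j}$ is the Koopman operator induced by the bijection $W_j\colon\r{d_{j-1}}\to\mcl{R}(W_j)$. The crucial trick is that I would not bound $\Vert R_j\Vert$ (which only gives $1$); instead, when $K_{W_j}$ is applied to the specific intermediate function $f_j$, I would directly estimate $\Vert R_j f_j\Vert_{H_{p_j}(\mcl{R}(W_j))} = G_j\Vert f_j\Vert_{H_j}$, which produces the $G_j$ factor. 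For $\Vert \tilde{K}_{W_j}\Vert$ I would parameterize $\mcl{R}(W_j)$ by an isometry $\r{d_{j-1}}\to\mcl{R}(W_j)$, write $W_j$ as this isometry composed with an invertible $d_{j-1}\times d_{j-1}$ factor of Jacobian determinant $\sqrt{\opn{det}(W_j^*W_j)}$, and repeat the invertible-case argument almost verbatim. This replaces $\vert\opn{det}(W_j)\vert$ by $\sqrt{\opn{det}(W_j^*W_j)}$, which becomes $\opn{det}(W_j^*W_j)^{1/4}$ after the square root intrinsic to the $L^2$-type bound; at the same time the Fourier-side supremum $\Vert p_j/(p_{j-1}\circ W_j^*)\Vert_{\infty}$ naturally restricts to $\mcl{R}(W_j)$, since only frequencies there are probed by $\tilde{K}_{W_j}$.

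Assembly is then a telescoping iteration: starting from $\Vert f_L\Vert_{H_L}\le\Vert g\Vert_{H_L}$ (using $\Vert K_{b_L}\Vert=1$) and using $f_{j-1}=K_{b_{j-1}}K_{\sigma_{j-1}}K_{W_j}f_j$, the refined step gives
\begin{equation*}
\Vert f_{j-1}\Vert_{H_{j-1}}\le \Vert K_{\sigma_{j-1}}\Vert\, \Vert\tilde{K}_{W_j}\Vert\, G_j\, \Vert f_j\Vert_{H_j},
\end{equation*}
together with the analogous estimate for $f=K_{W_1}f_1$, which collapses to the stated product. The main obstacle, to my mind, is the careful justification of the restriction step: I must verify that $H_{p_j}(\mcl{R}(W_j))$ is a genuine RKHS with kernel $k_{p_j}|_{\mcl{R}(W_j)\times\mcl{R}(W_j)}$, that $R_j$ is norm-nonincreasing so that $G_j\le 1$ (ensuring the bound is a true tightening over Theorem~\ref{thm:nonsingular}), and that the change of variables on the lower-dimensional subspace $\mcl{R}(W_j)$ really yields the Gram determinant $\sqrt{\opn{det}(W_j^*W_j)}$ rather than some other combination of singular values. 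These are essentially trace-theorem/area-formula facts, but threading them through the Fourier-weighted RKHS norm requires care.
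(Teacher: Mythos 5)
Your proposal matches the paper's proof in all essentials: the paper likewise reduces to bounding $\Vert f\Vert_{H_0}$ via the reproducing property and Cauchy--Schwarz, uses the QR decomposition $W_j=Q_jR_j$ (your isometry-times-invertible-factor) to get $\widehat{(h\circ W_j)}(\omega)=\hat h(W_j^{-*}\omega)/\opn{det}(W_j^*W_j)^{1/2}$, changes variables so the Fourier integral restricts to $\mcl{R}(W_j)$ and yields the factors $\Vert p_j/(p_{j-1}\circ W_j^*)\Vert_{\mcl{R}(W_j),\infty}^{1/2}$ and $\opn{det}(W_j^*W_j)^{-1/4}$, extracts $G_j$ by evaluating the restricted norm on the specific intermediate function $f_j$ exactly as you describe, and telescopes. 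One caveat: you need not (and should not) insist on proving $G_j\le 1$ --- the theorem carries $G_j$ explicitly in the bound and the paper only estimates it separately in Lemma~\ref{lem:G_j}, so no norm-nonincreasing property of the restriction operator is required.
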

%In the same manner as Theorem~\ref{thm:nonsingular},
Since $W_j$ is not always square, we do not have $\opn{det}(W_j)$ in Theorem~\ref{thm:nonsingular}.
However, we can replace $\opn{det}(W_j)$ with $\opn{det}(W_j^*W_j)^{1/2}$.
As Lemma~\ref{lem:Kw_norm}, we have the following lemma about the factor $\Vert {p_{j}}/({p_{j-1}\circ W_j^*})\Vert_{\mcl{R}(W_j),\infty}$ in Eq.~\eqref{eq:bound_injectve}.
\if0
\begin{lemma}\label{lem:K_w_norm_injective}
Assume $W_j:\r{d_{j-1}}\to\r{d_j}$ is injective.
For $h\in H_j$, let $G_j=\Vert h\Vert_{H_{j}}/\Vert h|_{\mcl{R}(W_j)}\Vert_{H_{\red{p_{j-1}}}(\mcl{R}(W_j))}$.
Then, we have
%Let $h\in H_{p_j}(\mcl{R}(W_j))$, and let $\tilde{h}\in H_j$ satisfy $\tilde{h}\mid_{\mcl{R}(W_j)}=h$.
%Then, there exists $\tilde{h}_0\in H_j$ satisfying $\tilde{h}_0\mid_{\mcl{R}(W_j)}=\tilde{h}\mid_{\mcl{R}(W_j)}$, and we have
%\begin{equation*}
$\Vert K_{W_j}h\Vert^2\le G_j\Vert h\Vert_{H_{j}}^2\Vert {p_{j}}/({p_{j-1}\circ W_j^*})\Vert_{\mcl{R}(W_j),\infty}/{{\opn{det}({W_j^*W_j)}}^{1/2}}$.
%\end{equation*}
\end{lemma}
\fi
%
%Regarding the factor $\Vert {p_{j}}/({p_{j-1}\circ W_j^*})\Vert_{\mcl{R}(W_j),\infty}$, we can derive the following lemma in the same manner as Lemma~\ref{lem:Wnorm}.

\begin{lemma}\label{lem:Wnorm_injective}
Let $p_j(\omega)=1/(1+\Vert\omega\Vert^2)^{s_j}$ for $s_j>d_j/2$ and for $j=0,\ldots,L$.
%Let $s_j\ge s_{j-1}$.
Then, we have
\begin{align*}
\bigg\Vert \frac{p_{j-1}}{p_{j-1}\circ W_j^*}\bigg\Vert_{\mcl{R}(W_j),\infty}\le \max\{1,{\Vert W_j\Vert^{2s_{j-1}}}\}.
\end{align*}
\end{lemma}

Applying Lemma~\ref{lem:Wnorm_injective} to Theorem~\ref{thm:injective}, we finally obtain the bound~\eqref{eq:main_bound}.
Regarding the factor $G_j$, the following lemma shows that $G_j$ is determined by the isotropy of $f_j$.
%Note that if $p_j(\omega)=1/(1+\Vert\omega\Vert^2)^{s_j}$ for $s_j>d_j/2$, then the assumption in Lemma~\ref{lem:G_j} is satisfied.
\begin{lemma}\label{lem:G_j}
\red{Let $p_j(\omega)=1/(1+\Vert\omega\Vert^2)^{s_j}$ for $s_j>d_j/2$ and for $j=0,\ldots,L$.
Let $s_j\ge s_{j-1}$ and $\tilde{H}_j$ be the Sobolev space on $\mcl{R}(W_j)^{\perp}$ with $p(\omega)=1/(1+\Vert\omega\Vert^2)^{s_j-s_{j-1}}$.}
Then, $G_j\le G_{j,0}\Vert f_j|_{\mcl{R}(W_j)}\cdot f_j|_{\mcl{R}(W_j)^{\perp}}\Vert_{H_j}/\Vert f_j\Vert_{H_j}$, where 
$G_{j,0}=\Vert f_j\Vert_{\red{\tilde{H}_j}}^{-1}$.
\end{lemma}
\begin{remark}\label{rmk:G_j}
\red{The factor $G_j$ is expected to be small.
Indeed, if $f_j(x)=\mr{e}^{-c\Vert x\Vert^2}$ (It is true if $g$ is Gaussian and $j=L$.), $G_j$ gets small as $s_j$ and $d_j$ gets large.
Moreover, $G_j$ can alleviate the dependency of $\Vert K_{\sigma_j}\Vert$ on $d_j$ and $s_j$.
See Appendix~\ref{ap:G_j} for more details.}
\end{remark}
%Indeed, if $f_j(x)=\mr{e}^{-c\Vert x\Vert^2}$ (It is true if $g$ is Gaussian and $j=L$.), then $G_j\le(2c/\pi)^{\opn{dim}(\mcl{R}(W_j)^{\perp})/4}$.
%\red{Note that if $c\le \pi/2$, then $G_j\le 1$, but if $c\le \pi/2$, then $G_j$ grows exponentially with respect to $\opn{dim}(\mcl{R}(W_j)^{\perp})$.
%This means that if $f_j$ is sharp, then $G_j$ grows.
%However, as we discuss in Appendix, this factor can be canceled by the factor comes from noninjective layers.}

%Our theory focuses mainly on the case where $W_j$ are injective.
%However, we also discuss how we can deal with non-injective weight matrices.
%In addition, we combine our bound with other bounds.
\subsection{Bounds for non-injective weight matrices}\label{subsec:non-invertible}
The bounds in Theorems~\ref{thm:nonsingular} and \ref{thm:injective} are only valid for injective weight matrices, and the bound goes to infinity if they become singular. %i.e. $\opn{ker}(W_j)\neq\{0\}$.
This is because if $W_j:\r{d_{j-1}}\to\r{d_j}$ has singularity, $h\circ W$ for $h\in H_j$ becomes constant along the direction of $\opn{ker}(W_j)$.
As a result, $h\circ W_j$ \red{is} not contained in $H_{j-1}$ and $K_{W_j}$ becomes unbounded.
To deal with this situation, we propose two approaches that are valid for non-injective weight matrices, graph-based and weighted Koopman-based ones.
\subsubsection{Graph-based bound}\label{subsec:injective_based}
The first approach to deal with this situation is constructing an injective operator related to the graph of $W_j$.
For $j=1,\ldots,L$, let \red{$r_j=\opn{dim}(\opn{ker}(W_j))$, $\delta_j=\sum_{k=0}^jr_k$}, and $\tilde{W}_j$ be defined as \red{$\tilde{W}_j(x_1,x_2)=(W_jx_1,P_jx_1,x_2)$ for $x_1\in\r{d_{j-1}}$ and $x_2\in\r{\delta_{j-2}}$ for $j\ge 2$ and $\tilde{W}_jx=(W_jx,P_jx)$ for $j=1$.
Here, $P_j$ is the projection onto $\opn{ker}(W_j)$.}
Then, $\tilde{W}_j$ is injective (See Appendix~\ref{ap:injective}). %since if $\tilde{W}_j(x_1,x_2)=\tilde{W}_j(y_1,y_2)$, then $x_1=y_1$ and $x_2=y_2$.
Let $\tilde{\sigma}_j:\r{\delta_j}\to\r{\delta_j}$ and $\tilde{b}_j:\r{\delta_j}\to\r{\delta_j}$ be defined respectively as $\tilde{\sigma}_j(x_1,x_2)=(\sigma_j(x_1),x_2)$ and $\tilde{b}_j(x_1,x_2)=(b_j(x_1),x_2)$ for $x_1\in\r{d_{j}}$ and $x_2\in\r{\delta_{j-1}}$.
In addition, let $\tilde{g}\in H_{p_L}(\r{\delta_L})$ be defined as $\tilde{g}(x_1,x_2)=g(x_1)\psi(x_2)$ for $x_1\in\r{d_L}$ and $x_2\in\r{\delta_{L-1}}$, where $\psi$ is a rapidly decaying function on $\r{\delta_{L-1}}$ so that $\tilde{g}\in H_{p_L}(\r{\delta_{L-1}+d_L})$.
Consider the following network:
\begin{align}
\tilde{f}(x)&=\tilde{g}\circ \tilde{b}_L \circ \tilde{W}_L\circ \cdots \circ \tilde{\sigma}_1\circ \tilde{b}_1\circ \tilde{W}_1(x)\nn\\
&=\psi(\red{P_{L}}\sigma_{L-1}\circ b_{L-1} \circ W_{L-1}\circ \cdots \circ \sigma_1\circ b_1\circ W_1(x),\ldots,\red{P_2}\sigma_1\circ b_1\circ W_1(x),\red{P_1}x)\nn\\
&\qquad\qquad\cdot g(b_{L} \circ W_{L}\circ \cdots \circ \sigma_1\circ b_1\circ W_1(x)).\label{eq:nn_mod}
\end{align}
Since $\opn{det}(\tilde{W}_j^*\tilde{W}_j)=\opn{det}(W_j^*W_j+I)$ and $\Vert\tilde{W}_j\Vert=\sqrt{\Vert I+W_j^*W_j\Vert}$,
we set $\tilde{\mcl{W}}_j(C,D)=\{{W}\in\r{d_{j-1}\times d_{j}}\,\mid\,\sqrt{\Vert I+W^*W\Vert}\le C,\ \sqrt{\opn{det}(W^*W+I)}\ge D\}$ and $\tilde{F}(C,D)=\{\tilde{f}\,\mid\,f\in F,\ {W}_j\in\tilde{\mcl{W}}_j(C,D)\}$.
Moreover, put $\tilde{H}_L=H_{p_L}(\r{\delta_L})$. 
 By Theorem~\ref{thm:injective}, we obtain the following bound, where the determinant factor does not go to infinity by virtue of the identity $I$ appearing in $\tilde{W}_j$.
\begin{proposition}~\label{prop:graph}
%Let $\tilde{f}_j=\tilde{g}\circ \tilde{b}_L \circ \tilde{W}_L\circ \cdots \circ \tilde{\sigma}_j\circ \tilde{b}_j$.
%Let $\tilde{G}_j=\Vert \tilde{f}_j\Vert_{H_{j}}/\Vert \tilde{f}_j\mid_{\mcl{R}(\tilde{W}_j)}\Vert_{H_{p_{j}}(\mcl{R}(\tilde{W}_j)}$.
%Assume there exists $B>0$ such that for any $x\in\r{d}$, $\vert k_{p_0}(x,x)\vert \le B^2$.
The Rademacher complexity $\hat{R}_n(\mathbf{x},\tilde{F}(C,D))$ is bounded as 
\begin{align*}
\hat{R}_n(\mathbf{x},\tilde{F}(C,D)))\le \frac{B\Vert \tilde{g}\Vert_{\tilde{H}_{L}}}{\sqrt{n}} \sup_{{W}_j\in\mcl{W}_j(C,D)}\bigg(\prod_{j=1}^L\frac{\Vert {p_{j}}/({p_{j-1}\circ \tilde{W}_j^*})\Vert_{\mcl{R}(\tilde{W}_j),\infty}^{1/2}\tilde{G}_j}{{\opn{det}({W_j^*W_j+I)}}^{1/4}}\bigg)\bigg(\prod_{j=1}^{L-1}\Vert K_{\sigma_j}\Vert\bigg).
\end{align*}
\end{proposition}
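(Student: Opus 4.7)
The plan is to apply Theorem~\ref{thm:injective} directly to the lifted network $\tilde{f}$, treating it as an ordinary deep network whose weight operators $\tilde{W}_j$, shifts $\tilde{b}_j$, activations $\tilde{\sigma}_j$, and final transformation $\tilde{g}$ are precisely the objects constructed above. The whole purpose of the graph construction is that $\tilde{W}_j$ is injective regardless of whether $W_j$ is, so Theorem~\ref{thm:injective} becomes applicable in the non-injective regime where its original form fails.

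First I would verify injectivity of $\tilde{W}_j$. From $\tilde{W}_j(x_1, x_2) = (W_j x_1, x_1, x_2)$ the second block recovers $x_1$ and the third recovers $x_2$, so $\tilde{W}_j$ is injective on $\r{\delta_{j-1}}$. Hence every $\tilde{f} \in \tilde{F}(C,D)$ fits the hypotheses of Theorem~\ref{thm:injective}.

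Next I would compute the spectral quantities of $\tilde{W}_j$. Writing $\tilde{W}_j$ in block form and multiplying, $\tilde{W}_j^* \tilde{W}_j$ is block-diagonal with blocks $W_j^*W_j + I$ and $I$. This yields the identities $\opn{det}(\tilde{W}_j^* \tilde{W}_j) = \opn{det}(W_j^*W_j + I)$ and $\Vert \tilde{W}_j\Vert^2 = \Vert I + W_j^*W_j\Vert$, which justify the definition of $\tilde{\mcl{W}}_j(C,D)$ and produce the factor $\opn{det}(W_j^*W_j + I)^{1/4}$ in the denominator of the stated bound. Note that, in contrast to the original $\opn{det}(W_j^*W_j)^{1/4}$, the identity shift keeps this quantity bounded away from zero, which is the whole point of passing to the graph.

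Finally I would check that the Koopman factors in the lifted theorem reduce to the factors appearing in the proposition. Because $\tilde{\sigma}_j$ and $\tilde{b}_j$ act as the identity on the appended coordinates, one expects $\Vert K_{\tilde{\sigma}_j}\Vert \le \Vert K_{\sigma_j}\Vert$ and $\Vert K_{\tilde{b}_j}\Vert = 1$, and the quantity $\tilde{G}_j$ in the bound should be the analogue of $G_j$ for the lifted functions $\tilde{f}_j$. This is the step I expect to be the main technical obstacle: the Hilbert space $H_{p_L}(\r{\delta_L})$ need not decompose as a tensor product unless $p_L$ factorizes across coordinates, so transferring norm identities cleanly requires either a Fourier-side computation using the product form $\tilde{g}(x_1,x_2) = g(x_1)\psi(x_2)$, or a careful separable choice of $p_j$ together with the rapidly decaying factor $\psi$. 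Once these identities are in hand, substitution of the spectral identities from the previous paragraph into Theorem~\ref{thm:injective} applied to $\tilde{f}$ yields the claimed inequality.
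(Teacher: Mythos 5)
Your proposal is correct and follows exactly the route the paper takes: the paper gives no separate proof of Proposition~\ref{prop:graph} beyond the remark that it follows from Theorem~\ref{thm:injective} applied to the lifted network, using the injectivity of $\tilde{W}_j$ and the identities $\opn{det}(\tilde{W}_j^*\tilde{W}_j)=\opn{det}(W_j^*W_j+I)$ and $\Vert\tilde{W}_j\Vert^2=\Vert I+W_j^*W_j\Vert$, all of which you verify. Your closing caveat about transferring the Koopman-norm and $\tilde{G}_j$ factors to the lifted spaces is a legitimate detail that the paper itself leaves implicit, and your proposed resolution (product structure of $\tilde{g}$ and a Fourier-side computation) is the right one.
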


\begin{remark}\label{rmk:bump_func}
The difference between the networks \eqref{eq:nn_mod} and \eqref{eq:nn} is the factor $\psi$.
If we set $\psi$ as $\psi(x)=1$ for $x$ in a sufficiently large region $\Omega$, then $\tilde{f}(x)=f(x)$ for $x\in\Omega$.
We can set $\psi$, for example, as a smooth bump function~\citep[Chapter 13]{tu08}.
\red{See Appendix~\ref{ap:bump_function} for the definition.}
If the data is in a compact region, then the output of each layer is also in a compact region since $\Vert W_j\Vert\le \sqrt{\Vert I+W_j^*W_j\Vert}\le C$.
Thus, it is natural to assume that $f$ can be replaced by $\tilde{f}$ in practical cases.
\end{remark}
\begin{remark}\label{rmk:g_grow}
The factor $\Vert \tilde{g}\Vert_{\tilde{H}_L}$ grows as $\Omega$ becomes large.
This is because $\Vert \psi\Vert^2_{L^2(\r{\delta_{L-1}})}$ becomes large as the volume of $\Omega$ gets large.
%Thus, under the condition that $\Vert \partial^{\alpha}\psi\Vert$ does not change, $\Vert \tilde{g}\Vert$ becomes large as $\Omega$ gets large. 
\if0
if $p_L$ can be decomposed as $p_L(\omega)={p}_{1,L}(\omega_1)p_{2,L}(\omega_2)$ for $\omega_1\in\r{d_L}$ and $\omega_2\in\r{\sum_{k=0}^{L-1}d_k}$, then $\Vert \tilde{g}\Vert$ is calculated as follows:
The Fourier transform of $\tilde{g}$ is decomposed as
\begin{align*}
\hat{\tilde{g}}(\omega)
&=\int_{\r{\sum_{k=0}^L}}\tilde{g}(x_1,x_2)\mr{e}^{-\mr{i}(x_1+x_2)\cdot\omega}\mr{d}x\\
&=\int_{\r{d_L}}g(x_1)\mr{e}^{-\mr{i}x_1\cdot\omega_1}\mr{d}x_1
\int_{\r{\sum_{k=0}^{L-1}}}\psi(x_2)\mr{e}^{-\mr{i}x_2\cdot\omega_2}\mr{d}x_2
=\hat{g}(\omega_1)\hat{\psi}(\omega_2)
\end{align*}
for $\omega_1\in\r{d_L}$ and $\omega_2\in\r{\sum_{k=0}^{L-1}}$.
Thus, we have
\begin{align*}
\Vert\tilde{g}\Vert^2=\int_{\r{d_L}}\int_{\r{\sum_{k=0}^L}}\vert\hat{g}(\omega_1)\hat{\psi}(\omega_2)\vert^2\frac1{p_{1,L}(\omega_1)p_{2,L}(\omega)}\mr{d}\omega_1\mr{d}\omega_2
=\Vert g\Vert_{H_{p_{1,L}}(\r{d_L})}^2\Vert \psi\Vert_{H_{p_{2,L}}(\r{\sum_{k=0}^{L-1}d_k})}^2.
\end{align*}
Since the norm of $\psi$ grows as $\Omega$ becomes large, the norm of $\tilde{g}$ also grows as $\Omega$ becomes large.
\fi
This fact does not contradict the fact that the bound in Theorem~\ref{thm:nonsingular} goes to infinity if $W_j$ is singular.
More details are explained in Appendix~\ref{ap:g_grow}.
\end{remark}

\subsubsection{Weighted Koopman-based bound}\label{subsec:weighted_Koopman}
Instead of constructing the injective operators in Subsection~\ref{subsec:injective_based}, we can also use weighted Koopman operators.
For $\psi_j:\r{d_{j}}\to\mathbb{C}$, define $\tilde{K}_{W_j}h=\psi_j\cdot h\circ W_j$, which is called the weighted Koopman operator.
We construct the same network as $\tilde{f}$ in Eq.~\eqref{eq:nn_mod} using $\tilde{K}_{W_j}$.
\begin{align*}
\tilde{f}(x)&=\tilde{K}_{W_1}K_{b_1}K_{\sigma_1}\cdots \tilde{K}_{W_L}K_{b_L}g(x)\\
&=\psi_1(x)\psi_2(\sigma_1\circ b_1\circ W_1(x))\cdots\psi_{L}(\sigma_{L-1}\circ b_{L-1} \circ W_{L-1}\circ \cdots \circ \sigma_1\circ b_1\circ W_1(x))\\
&\qquad \qquad \cdot g(b_{L} \circ W_{L}\circ \cdots \circ \sigma_1\circ b_1\circ W_1(x))\\
&=\psi(\sigma_{L-1}\circ b_{L-1} \circ W_{L-1}\circ \cdots \circ \sigma_1\circ b_1\circ W_1(x),\ldots,\sigma_1\circ b_1\circ W_1(x),x)\nn\\
&\qquad\qquad\cdot g(b_{L} \circ W_{L}\circ \cdots \circ \sigma_1\circ b_1\circ W_1(x)),
\end{align*}
where $\psi(x_1,\ldots,x_L)=\psi_1(x_1)\cdots\psi_L(x_L)$ for $x_j\in\r{d_{j-1}}$.
Let $W_{\opn{r},j}=W_j|_{\opn{ker}(W_j)^{\perp}}$ be the restricted operator of $W_j$ to $\opn{ker}(W_j)^{\perp}$.
Set $\mcl{W}_{\opn{r},j}(C,D)=\{W\in\r{d_{j-1}\times d_{j}}\,\mid\,\Vert W\Vert\le C,\ \vert\opn{det}(W_{\opn{r}})\vert\ge D\}$
and $F_{\mr{r}}(C,D)=\{f\in F\,\mid\,{W}_{j}\in{\mcl{W}}_{\mr{r},j}(C,D)\}$.
By letting $\psi_j$ decay in the direction of $\opn{ker}(W_j)$, e.g., the smooth bump function, $\tilde{K}_{W_j}h$ for $h\in H_j$ decays in all the directions. We only need to care about $W_{\opn{r},j}$, not \red{the whole of $W_j$}.
Note that $\psi_j$ has the same role as $\psi$ in Eq.~\eqref{eq:nn_mod}.
If the data is in a compact region, we replace $f$ by $\tilde{f}$, which coincides with $f$ on the compact region.
By virtue of the decay property of $\psi_j$, we can bound $\tilde{K}_{W_j}$ even if $W_j$ is singular.
%Thus, we can derive a similar bound to Proposition~\ref{prop:graph}.
\begin{proposition}\label{prop:weighted_koopman}
Let $p_j(\omega)=1/(1+\Vert\omega\Vert^2)^{s_j}$ with $s_j\in\mathbb{N}$, $s_j\ge s_{j-1}$, and $s_j>d_j/2$.
Let $\tilde{H}_{j-1}=H_{p_{j-1}}(\opn{ker}(W_j))$ and $\psi_j$ be a function satisfying $\psi_j(x)=\psi_{j,1}(x_1)$ for some $\psi_{j,1}\in \tilde{H}_{j-1}$, where $x=x_1+x_2$ for $x_1\in\opn{ker}(W_j)$ and $x_2\in\opn{ker}(W_j)^{\perp}$.
\red{Moreover, let $G_j=\Vert f_j\Vert_{H_{j-1}(\opn{ker}(W_j)^{\perp})}/\Vert f_j\Vert_{H_j}$.}
Then, we have
\begin{align*}
\red{\hat{R}_n(\mathbf{x},{F}_{\mr{r}}(C,D)))\le \frac{B\Vert {g}\Vert_{H_L}}{\sqrt{n}}\!\!\!\!\!\!\sup_{{W}_j\in\mcl{W}_{\opn{r},j}(C,D)}\!\!\bigg(\prod_{j=1}^L\frac{\Vert \psi_{j,1}\Vert_{\tilde{H}_{j-1}}\!\!G_j\max\{1,\Vert W_j\Vert^{s_{j-1}}\}}{\vert\opn{det}(W_{{\opn{r},j}})\vert^{1/2}}\bigg)\bigg(\prod_{j=1}^{L-1}\Vert K_{\sigma_j}\Vert\bigg).}
\end{align*}
\end{proposition}
\if0
\begin{lemma}\label{lem:Kw_weighted_norm}
Let $p_j(\omega)=1/(1+\Vert\omega\Vert^2)^{s_j}$ with $s_j\in\mathbb{N}$, $s_j\ge s_{j-1}$, and $s_j>d_j/2$.
Let $\psi_j$ be a function satisfying $\psi_j(x)=\psi_{j,1}(x_1)$ for some $\psi_{j,1}\in H_{p_{j-1}}(\opn{ker}(W_j))$, where $x=x_1+x_2$ for $x_1\in\opn{ker}(W_j)$ and $x_2\in\opn{ker}(W_j)^{\perp}$.
%Assume $\partial^{\alpha}\psi_j=\psi_{j,\alpha,1}(x_1)\psi_{j,\alpha,2}(x_2)$ for $x_1\in\opn{ker}(W_j)$, $x_2\in\opn{ker}(W_j)^{\perp}$, and some $\psi_{j,\alpha,1}\in H_p(\opn{ker}(W_j))$ and $\psi_{j,\alpha,2}\in H_p(\opn{ker}(W_j)^{\perp})$.
%Let $h\in H_j$ and $G_j=\Vert h\Vert_{H_{j}}/\Vert h\mid_{\mcl{R}(W_j)}\Vert_{H_{p_{j}}(\mcl{R}(W_j))}$.
Then, we have
\begin{equation*}
\Vert K_{W_j}h\Vert^2\le \Vert h\Vert_{H_{j}}^2O\bigg(G_j^2\max\{1,\Vert W_j\Vert^{2s_{j-1}}\}\frac1{\vert{\opn{det}({W_{\opn{r},j})}}\vert}\bigg).
\end{equation*}
\end{lemma}
\fi
%\begin{remark}
%By considering the derivative in the direction $\opn{ker}(W_j)$ and $\opn{ker}(W_j)^{\perp}$, we can see that setting $\psi$ as the Gaussian function satisfies the assumption of $\psi$. 
%\end{remark}
%

%\begin{remark}
%\red{The weight function $\psi_j$ has the same role as $\psi$ in Eq.~\eqref{eq:nn_mod}.
%If the data is in a compact region, we replace $f$ by $\tilde{f}$, which coincides with $f$ on the compact region.}
%\end{remark}

\begin{remark}\label{rmk:activation_weighted}
Although we focus on the singularity of $K_{W_j}$ and use the weighted Koopman operator with respect to $W_j$, we can deal with the singularity of $K_{\sigma_j}$ in the same manner as $K_{W_j}$, i.e., by constructing the weighted Koopman operator $\tilde{K}_{\sigma_j}$ with respect to $\sigma_j$.
For example, the sigmoid and hyperbolic tangent do not satisfy the assumption for $\sigma_j$ stated in Proposition~\ref{prop:activation_koopman}.
This is because the Jacobian of $\sigma^{-1}$ is not bounded.
However, $\tilde{K}_{\sigma_j}$ is bounded by virtue of $\psi_j$.
\end{remark}

\begin{remark}\label{rmk:psi_norm}
\red{The norm of $\psi_j$ can be canceled by the factor $G_j$.
See Appendix~\ref{ap:psi_norm} for more details.}
\end{remark}

\if0
\begin{remark}
Although we can avoid the bound going to infinity, if $d_j\le d_{j-1}$, then the factors $\Vert {p_{j}}/({p_{j-1}\circ \tilde{W}_j^*})\Vert_{\mcl{R}(\tilde{W}_j),\infty}$ in Proposition~\ref{prop:graph} and $\max\{1,\Vert W_j\Vert^{s_{j-1}}\}$ in Proposition~\ref{prop:weighted_koopman} tend to be large.
We focus on the case of $p_j(\omega)=1/(1+\Vert \omega\Vert^2)^{s_j}$ to explain this behavior.
Regarding Proposition~\ref{prop:graph}, we have to set $s_j$ larger than $\delta_{j-1}/2$ so that $H_j$ becomes an RKHS.
Thus, according to Lemma~\ref{lem:Wnorm_injective}, the order of the factor $\Vert {p_{j}}/({p_{j-1}\circ \tilde{W}_j^*})\Vert_{\mcl{R}(\tilde{W}_j),\infty}$ can be larger than $\Vert W_j\Vert^{\delta_{j-1}/2}$.
On the other hand, the dimension of $W_j^*W_j+I$ is $d_j$, which means the determinant factor cannot cancel the dependency on the width of the layers.
Regarding Proposition~\ref{prop:weighted_koopman}, 
%if $W_j$ is not surjective, $d_j\ge d_{j-1}$ is not always true although we assume $s_j\ge s_{j-1}$.
%Indeed, assume $d_j<d_{j-1}$.
the dimension of $W_{\mr{r},j}$ is less than or equal to $d_j$ whereas we have $s_{j-1}> d_{j-1}/2>d_j/2$. Thus, we cannot cancel the dependency of the factor $\max\{1,\Vert W_j\Vert^{s_{j-1}}\}$ on the width of the layer by the determinant factor.
\end{remark}
\fi

\subsection{Combining the Koopman-based bound with other bounds}\label{subsec:combine}
%In many networks, the width grows sequentially near the input layer, i.e., $d_{i-1}\le d_i$ for small $i$ and decays near the output layer, i.e., $d_{i-1}\ge d_i$ for large $i$.
%As we discussed in Subsection~\ref{subsec:non-invertible}, our Koopman-based bound is more suitable for the case $d_{i-1}\le d_i$ (see Proposition~\ref{prop:injective}).
In this subsection, we show that our Koopman-based bound is flexible enough to be combined with another bound.
As stated in Subsection~\ref{subsec:invertible}, the case where our bound is tight differs from the case where existing bounds such as the one by \citet{golowich18} are tight.
We can combine these bounds to obtain a tighter bound.
%We use the Koopman-based bound near the input and the existing bound near the output.
For $1\le l\le L$, let $F_{1:l}$ be the set of all functions in the form 
\begin{equation}
\sigma_l\circ b_l \circ W_l\circ \sigma_{l-1}\circ b_{l-1}\circ W_{l-1}\circ\cdots \circ \sigma_1\circ b_1\circ W_1\label{eq:nn_middle}
\end{equation}
with Assumption~\ref{assum:g_sigma}, and let $F_{1:l,\opn{inj}}(C,D)=\{f\in F_{1:l}\,\mid\, W_j\in\mcl{W}_j(C,D)\}$.
For $l\le L-1$, consider the set of all functions in $H_l$ which \red{have} the form 
\begin{equation*}
g\circ b_L\circ W_L\circ \sigma_{L-1}\circ b_{L-1}\circ W_{L-1}\circ\cdots \circ \sigma_{l+1}\circ b_{l+1}\circ W_{l+1}
\end{equation*}
and consider any nonempty subset $F_{l+1:L}$ of it.
For $l=L$, we set $F_{L+1:L}=\{g\}$.
Let $F_{l,\opn{comb}}(C,D)=\{f_1\circ f_2\,\mid\,f_1\in F_{l+1,L},\ f_2\in F_{1:l,\opn{inj}}(C,D)\}$.
The following proposition shows the connection between the Rademacher complexity of $F_{l,\opn{comb}}(C,D)$ and that of $F_{l+1:L}$.
\begin{proposition}\label{prop:combine}
%The Rademacher complexity $R(\mathbf{x},F_{l,\opn{comb}}(C,D))$ is bounded as
Let $\tilde{\mathbf{x}}=(\tilde{x}_1,\ldots,\tilde{x}_n)\in(\r{d_l})^n$.
Let $v_n(\omega)=\sum_{i=1}^ns_i(\omega)k_{p_0}(\cdot,x_i)$, $\tilde{v}_n(\omega)=\sum_{i=1}^ns_i(\omega)k_{p_l}(\cdot,\tilde{x}_i)$, 
$\mcl{W}=\{(W_1,\ldots,W_l)\mid W_j\in\mcl{W}_j(C,D)\}$,
and $\gamma_n=\Vert v_n\Vert_{H_0}/\Vert \tilde{v}_n\Vert_{H_l}$.
Then, 
\begin{align*}
\hat{R}_n(\mathbf{x},F_{l,\opn{comb}}(C,D))
&\le\sup_{(W_1,\ldots,W_l)\in\mcl{W}}%{\substack{W_j\in\mcl{W}_j(C,D)\\ (j=1,\ldots,l)}}
\prod_{j=1}^l\bigg\Vert \frac{p_{j-1}}{p_{j-1}\circ W_j^*}\bigg\Vert_{\mcl{R}(W_j),\infty}^{1/2}\frac{G_j\Vert K_{\sigma_j}\Vert}{{\opn{det}({W_j^*W_j)}}^{1/4}}\\
&\qquad\cdot\bigg(\hat{R}_n(\tilde{\mathbf{x}},F_{l+1:L})+\frac{B}{\sqrt{n}}\inf_{h_1\in F_{l+1:L}}\!\mr{E}^{\frac12}\bigg[\!\sup_{h_2\in F_{l+1:L}}\bigg\Vert h_1-\frac{\Vert h_2\Vert_{H_l}\gamma_n}{\Vert \tilde{v}_n\Vert_{H_l}}\tilde{v}_n\bigg\Vert_{H_l}^2\bigg]\bigg).
\end{align*}
\end{proposition}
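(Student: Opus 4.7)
The plan is to extend the Koopman-based argument of Theorem~\ref{thm:injective} by treating the reference points $\tilde{\mathbf{x}}\in(\r{d_l})^n$ as a bridge between the first $l$ Koopman-controlled layers and the remaining subnetwork $F_{l+1:L}$. Writing $f=h_2\circ f_2$ with $f_2\in F_{1:l,\opn{inj}}(C,D)$ and $h_2\in F_{l+1:L}$, the reproducing property in $H_0$ together with the Koopman representation of $f_2$ gives
\[
\sum_{i=1}^n s_i f(x_i)=\langle v_n,K_{f_2}h_2\rangle_{H_0}=\langle K_{f_2}^{*}v_n,h_2\rangle_{H_l}.
\]
The chain of Koopman-norm bounds already used in the proof of Theorem~\ref{thm:injective} yields $\|K_{f_2}\|\le A$, where $A$ denotes the product supremum in the statement, so $\|K_{f_2}^{*}v_n\|_{H_l}\le A\|v_n\|_{H_0}=A\gamma_n\|\tilde{v}_n\|_{H_l}$.

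Next, for any reference $h_1\in F_{l+1:L}$, I would decompose
\[
\langle K_{f_2}^{*}v_n,h_2\rangle_{H_l}=\langle K_{f_2}^{*}v_n,c_{h_2}\tilde{v}_n\rangle_{H_l}+\langle K_{f_2}^{*}v_n,h_2-c_{h_2}\tilde{v}_n\rangle_{H_l}
\]
with $c_{h_2}=\|h_2\|_{H_l}\gamma_n/\|\tilde{v}_n\|_{H_l}$, the unique scaling that makes $\|c_{h_2}\tilde{v}_n\|_{H_l}=\|h_2\|_{H_l}\gamma_n$ saturate the Cauchy-Schwarz bound $\|K_{f_2}^{*}v_n\|\|h_2\|\le A\|v_n\|_{H_0}\|h_2\|_{H_l}$. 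The first piece is intended to reduce, after bounding $\langle K_{f_2}^{*}v_n,\tilde{v}_n\rangle$ by $A\|v_n\|\|\tilde{v}_n\|$ and cancelling one factor of $\gamma_n$, to $A\langle\tilde{v}_n,h_2\rangle_{H_l}$, whose supremum in $h_2$ will give $An\,\hat{R}_n(\tilde{\mathbf{x}},F_{l+1:L})$. For the second piece, I would insert $h_1$ via $h_2-c_{h_2}\tilde{v}_n=(h_2-h_1)-(c_{h_2}\tilde{v}_n-h_1)$: the $h_2-h_1$ part is absorbed by re-applying the same decomposition to $h_2-h_1$ (so that iterating to optimality leaves only the $c_{h_2}\tilde{v}_n-h_1$ discrepancy), and the residual satisfies $\langle K_{f_2}^{*}v_n,h_1-c_{h_2}\tilde{v}_n\rangle\le A\|v_n\|_{H_0}\|h_1-c_{h_2}\tilde{v}_n\|_{H_l}$.

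Finally, to obtain the explicit $B/\sqrt n$ prefactor and $\mr{E}^{1/2}$, I would take the supremum over $h_2\in F_{l+1:L}$ and $f_2\in F_{1:l,\opn{inj}}(C,D)$, and then apply the probabilistic Cauchy-Schwarz inequality $\mr{E}[XY]\le\mr{E}^{1/2}[X^{2}]\mr{E}^{1/2}[Y^{2}]$ with $X=\|v_n\|_{H_0}$ and $Y=\sup_{h_2\in F_{l+1:L}}\|h_1-c_{h_2}\tilde{v}_n\|_{H_l}$. The bound $\mr{E}[\|v_n\|_{H_0}^{2}]=\sum_{i=1}^n k_{p_0}(x_i,x_i)\le nB^{2}$ combined with dividing by $n$ delivers the $B/\sqrt n$ prefactor, and since $h_1\in F_{l+1:L}$ was arbitrary, the infimum over $h_1$ completes the argument.

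The main obstacle will be pinning down the precise split in the second step so that the coefficient multiplying $\langle\tilde{v}_n,h_2\rangle_{H_l}$ is exactly $A$, rather than an $h_2$- or $\mathbf{s}$-dependent factor like $A\gamma_n$. The scaling $c_{h_2}=\|h_2\|_{H_l}\gamma_n/\|\tilde{v}_n\|_{H_l}$ is forced by requiring the bound to be tight when $F_{l+1:L}$ contains an element exactly parallel to $\tilde{v}_n$; in that ideal direction the residual vanishes and only the Rademacher complexity $\hat{R}_n(\tilde{\mathbf{x}},F_{l+1:L})$ (scaled by $A$) remains, which is precisely the structure of the claimed bound.
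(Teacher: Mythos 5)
Your overall scaffolding matches the paper's: the reproducing property in $H_0$, the product of Koopman norm bounds for the first $l$ layers, the introduction of $\tilde v_n$ and an arbitrary reference element of $F_{l+1:L}$, and the probabilistic Cauchy--Schwarz step with $\mr{E}[\Vert v_n\Vert_{H_0}^2]\le nB^2$ to produce the $B/\sqrt n$ prefactor. However, the central step --- how $\hat R_n(\tilde{\mathbf{x}},F_{l+1:L})$ actually emerges --- does not go through as you describe, and the obstacle you flag in your last paragraph is fatal rather than cosmetic. Once you bound $\langle K_{f_2}^{*}v_n,\,c_{h_2}\tilde v_n\rangle_{H_l}$ by $A\Vert v_n\Vert_{H_0}\,c_{h_2}\Vert\tilde v_n\Vert_{H_l}=A\gamma_n\Vert v_n\Vert_{H_0}\Vert h_2\Vert_{H_l}$, the resulting quantity depends on $h_2$ only through its norm; there is no way to ``reduce'' it back to $A\langle\tilde v_n,h_2\rangle_{H_l}$, because an inner product is bounded \emph{above} by the product of norms, not below. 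Similarly, ``absorbing $h_2-h_1$ by re-applying the decomposition and iterating to optimality'' is not a valid argument: there is no contraction that makes such an iteration close, and nothing in the iteration produces the supremum over $F_{l+1:L}$ evaluated against $\tilde v_n$ that defines $\hat R_n(\tilde{\mathbf{x}},F_{l+1:L})$.

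The paper resolves this by ordering the steps differently. It first applies Cauchy--Schwarz all the way down, arriving at $\prod_{j=1}^l\beta_j\,\Vert v_n\Vert_{H_0}\Vert h_2\Vert_{H_l}$, and then rewrites the product of norms \emph{exactly} as the inner product $\bigl\langle\tilde v_n,\;\frac{\Vert h_2\Vert_{H_l}\Vert v_n\Vert_{H_0}}{\Vert\tilde v_n\Vert_{H_l}^2}\tilde v_n\bigr\rangle_{H_l}$, which is an identity since $\langle\tilde v_n,\tilde v_n\rangle_{H_l}=\Vert\tilde v_n\Vert_{H_l}^2$. Only after this rewriting is an arbitrary $h\in F_{l+1:L}$ inserted, splitting the term into $\langle\tilde v_n,h\rangle_{H_l}+\langle\tilde v_n,\,c_{h_2}\tilde v_n-h\rangle_{H_l}$. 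The first summand is bounded by $\sup_{h_1\in F_{l+1:L}}\langle\tilde v_n,h_1\rangle_{H_l}$ --- it is this supremum over the \emph{inserted reference}, not over the original $h_2$, whose expectation yields $n\,\hat R_n(\tilde{\mathbf{x}},F_{l+1:L})$ --- while the second summand is controlled by Cauchy--Schwarz, $\mr{E}^{1/2}[\Vert\tilde v_n\Vert_{H_l}^2]\le B\sqrt n$ (via $\gamma_n$ and $\Vert v_n\Vert_{H_0}$), and the infimum over the arbitrary $h=h_1$ at the very end. If you reorganize your argument in this order (the adjoint $K_{f_2}^{*}$ is unnecessary; plain Cauchy--Schwarz with the operator norm suffices), the remaining pieces of your proposal do go through.
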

The complexity of the whole network is decomposed into the Koopman-based bound for the first $l$ layers and the complexity of the remaining $L-l$ layers, together with a term describing the approximation power of the function class corresponding to $L-l$ layers.
%\begin{remark}
Note that Proposition~\ref{prop:combine} generalizes Theorem~\ref{thm:injective} up to the multiplication of a constant.
Indeed, if $l=L$, we have $\hat{R}_n(\mathbf{x},F_{l+1,L})=0$.
%\begin{align*}
%R(\mathbf{x},F_{l+1,L})=\frac1n\mr{E}\bigg[\sum_{i=1}^ns_ig(x_i)\bigg]=\frac1n\sum_{i=1}^ng(x_i)\mr{E}[s_i]=0.
%\end{align*}
In addition, we have
%\begin{align*}
$\inf_{h_1\in F_{l+1:L}}\mr{E}^{\frac12}[\sup_{h_2\in F_{l+1:L}}\Vert h_1-{\Vert h_2\Vert\gamma_n \tilde{v}_n}/{\Vert \tilde{v}_n\Vert}\Vert^2]
=\mr{E}^{\frac12}[\Vert g-{\Vert g\Vert \gamma_n\tilde{v}_n}/{\Vert \tilde{v}_n\Vert}\Vert^2]\le \Vert g\Vert\mr{E}^{\frac12}[(1+\gamma_n)^2]$.
%\end{align*}
%Therefore, we obtain
%\begin{align*}
%R(\mathbf{x},F_{l,\opn{comb}}(C,D))
%&\le\frac{2B}{\sqrt{n}}\sup_{W_j\in\mcl{W}_j(C,D)}\prod_{j=1}^LG_j\bigg\Vert \frac{p_{j}}{p_{j-1}\circ W_j^*}\bigg\Vert_{\mcl{R}(W_j),\infty}^{1/2}\frac1{{\opn{det}({W_j^*W_j)}}^{1/4}}\Vert K_{\sigma_j}\Vert\Vert g\Vert.
%\end{align*}
%\end{remark}

\begin{remark}\label{rmk:combine}
We can also combine our Koopman-based approach with the existing ``peeling'' approach, e.g., by \citet{neyshabur15,golowich18} (see Appendix~\ref{ap:combine}).
Then, for $1\le l\le L$, we obtain a bound such as
\begin{align*}
 O\bigg(\bigg(\prod_{j=l+1}^{L}\Vert W_j\Vert_{2,2}\bigg)\bigg(\prod_{j=1}^{l}\frac{\Vert W_j\Vert^{s_j}}{\opn{det}(W_j^*W_j)^{1/4}}\bigg)\bigg).   
\end{align*} %where $\Vert\cdot\Vert_{\mr{F}}$ is the Frobenius norm.
Typically, in many networks, the width grows sequentially near the input layer, i.e., $d_{j-1}\le d_j$ for small $j$ and decays near the output layer, i.e., $d_{j-1}\ge d_j$ for large $j$.
Therefore, this type of combination is suitable for many practical cases for deriving a tighter bound than existing bounds.
\end{remark}

%\begin{remark}\label{rmk:layer_role}
Proposition~\ref{prop:combine} and Remark~\ref{rmk:combine} theoretically implies that our Koopman-based bound is suitable for lower layers.
Practically, we can interpret that signals are transformed on the lower layers so that its essential information is extracted on the higher layers.
%The transformation leads to the better extraction of signals on the higher layers.
%If we only need the layers for extracting signals, we only need linear layers.
This interpretation also supports the result in Figure 1 by~\cite{arora18}.
Noise is removed (i.e., signals are extracted) by the higher layers, but it is not completely removed by lower layers.
We will investigate the behavior of each layer numerically in Section~\ref{sec:numerical_results} and Appendix~\ref{subsec:transformation}.

%\end{remark}

\section{Numerical results}\label{sec:numerical_results}
%In this section, we confirm our results numerically.
\paragraph{Validity of the bound}
To investigate our bound numerically, we consider a regression problem on $\r{3}$, where the target function $t$ is $t(x)=\mr{e}^{-\Vert 2x-1\Vert^2}$.
We constructed a simple network $f(x)=g(W_2\sigma (W_1x+b_1)+b_2)$, where $W_1\in\r{3\times 3}$, $W_2\in\r{6\times 3}$, $b_1\in\r{3}$, $b_2\in\r{6}$, $g(x)=\mr{e}^{-\Vert x\Vert^2}$, and $\sigma$ is a smooth version of Leaky ReLU proposed by~\citet{biswas22}. %, defined by using the Gaussian error function $\opn{erf}$.
We created a training dataset from samples randomly drawn from the standard normal distribution.
%We set the error function as $l_{\theta}(x,y)=\vert f_{\theta}(x)-y\vert^2$ and the regularization term as $0.01(\prod_{j=1}^2\opn{det}(W_j^*W_j)^{-1/2}+10\prod_{j=1}^2\Vert W_j\Vert)$.
Figure~\ref{fig:mainfig} (a) illustrates the relationship between the generalization error and our bound $O(\prod_{j=1}^L\Vert W_j\Vert^{s_j}/(\mathrm{det}(W_j^*W_j)^{1/4}))$.
Here, we set $s_j=(d_j+0.1)/2$.
In Figure \ref{fig:mainfig} (a), we can see that our bound gets smaller in proportion to the generalization error.
In addition, we investigated the generalization property of a network with a regularization based on our bound.
We considered the classification task with MNIST.
For training the network, we used only $n=1000$ samples to create a situation where the model is hard to generalize.
We constructed a network with four dense layers and \red{trained} it with and without a regularization term $\Vert W_j\Vert+1/\opn{det}(I +W_j^*W_j)$, which makes both the norm and determinant of $W_j$ small.
%Since our bound is more suitable for layers near the input, we applied the regularization for $j=1,2$.
Figure~\ref{fig:mainfig} (b) shows the test accuracy.
We can see that the regularization based on our bound leads to better generalization property, which implies the validity of our bound.%\vspace{-.5cm}
\begin{figure}[t]
    \vspace{-.8cm}
    \centering
    %\subfigure[]{\includegraphics[scale=0.24]{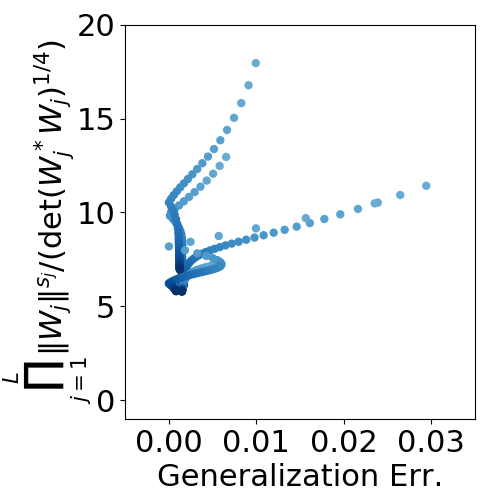}}
    %\subfigure[]{\includegraphics[scale=0.24]{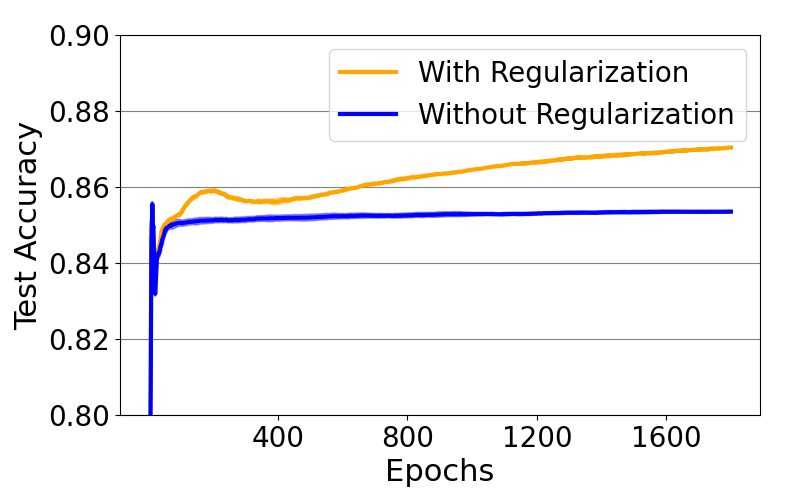}}
    %\subfigure[]{\includegraphics[scale=0.24]{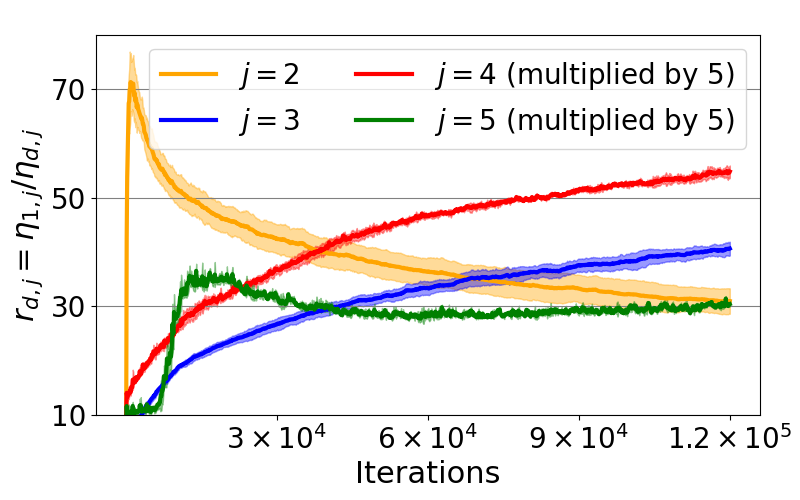}}\vspace{-.5cm}
    %\includegraphics[scale=0.22]{syn_scatter.png}
    %\includegraphics[scale=0.22]{Figure_mnist.png}
    %\includegraphics[scale=0.22]{cifar10_sing.png}\vspace{-.5cm}
    \begin{minipage}{0.03\textwidth}
    (a)
    \end{minipage}%
    \begin{minipage}{0.22\textwidth}
    \includegraphics[scale=0.27]{syn_scatter.png} 
    \end{minipage}%
    \begin{minipage}{0.03\textwidth}
    (b)
    \end{minipage}%
    \begin{minipage}{0.34\textwidth}
    \includegraphics[scale=0.27]{Figure_mnist.png} 
    \end{minipage}%
    \begin{minipage}{0.034\textwidth}
    (c)
    \end{minipage}%
    \begin{minipage}{0.3\textwidth}
    \includegraphics[scale=0.27]{cifar10_sing.png}
    \end{minipage}\vspace{-.3cm}
    \caption{\small (a) Scatter plot of the generalization error versus our bound (for 5 independent runs). The color is set to get dark as the epoch proceeds. (b) Test accuracy with and without the regularization based on our bound. (c) The condition number $r_{d,j}=\eta_{1,j}/\eta_{d,j}$ of the weight matrix for layer $j=2,\ldots,4$.}\label{fig:mainfig}
\end{figure}\vspace{-.2cm}
\paragraph{Singular values of the weight matrices}
We investigated the difference in the behavior of singular values of the weight matrix for each layer.
We considered the classification task with CIFAR-10 and AlexNet~\citep{krizhevsky12}.
AlexNet has five convolutional layers followed by dense layers.
For each $j=2,\ldots,5$, we computed the condition number $r_{d,j}$ of the weight matrix.
The results are illustrated in Figure~\ref{fig:mainfig} (c).
We scaled the values for $j=4,5$ for readability.
Since the weight matrix of the first layer ($j=1$) is huge and the computational cost of computing its singular values is expensive, we focus on $j=2,\ldots,5$.
We can see that for the second layer ($j=2$), $r_{d,j}$ tends to be small as the learning process proceeds (as the test accuracy grows).
On the other hand, for the third and fourth layers ($j=3,4$), the $r_{d,j}$ tends to be large.
This means that the behavior of the singular values is different depending on the layers.
According to the paragraph after Proposition~\ref{prop:nonsingular1}, our bound becomes smaller as $r_{d,j}$ becomes smaller, but the existing bound becomes smaller as $r_{d,j}$ becomes larger.
In this case, our bound describes the behavior of the second layer, and the existing bound describes that of the third and fourth layers.
See Appendix~\ref{app:experiment} for more details and results.

\section{Conclusion and discussion}\label{sec:conclusion}
In this paper, we proposed a new uniform complexity bound of neural networks using Koopman operators.
Our bound describes why networks with full-rank weight matrices generalize well and justifies the generalization property of networks with orthogonal matrices.
In addition, we provided an operator-theoretic approach to analyzing generalization property of neural networks.
There are several possible limitations.
%First, as a final nonlinear transformation, the softmax and sigmoid, which are common for practical implementations, do not meet our setting.
%However, these functions could also behave as filters since their first derivatives decay for most of the directions.
%In addition, the activation functions should be smooth. %and the Jacobian of their inverse should be bounded.
First, our setting excludes non-smooth activation functions.
Generalizing our framework to other function spaces may help us understand these final nonlinear transformations and activation functions.
Moreover, although the factor $\Vert K_{\sigma_j}\Vert$ is bounded if we set certain activation functions, how this factor changes depending on the choice of the activation function has not been clarified yet.
Further investigation of this factor is required.
Furthermore, we assume a modification of the structure of the neural network for deriving a bound for non-injective matrices.
Thus, we still have room for improvement about this bound.
We simply decomposed the product of Koopman operators into every single Koopman operator.
For more refined analysis, they should be tied together to investigate the connection between layers.
Considering a function space on a manifold that contains the range of the transformation corresponding to each layer may also be effective for resolving this issue.
These challenging topics are left to future work.

%Our framework provides a new operator-theoretic approach to analyzing generalization of neural networks, but these challenging topics are left to future work.
%\subsubsection*{Acknowledgments}
%Use unnumbered third level headings for the acknowledgments. All
%acknowledgments, including those to funding agencies, go at the end of the paper.

\subsection*{Acknowledgement}
SS was partially supported by JST PRESTO JPMJPR2125 and JST CREST JPMJCR2015.
II was partially supported by JST CREST JPMJCR1913 and JST ACT-X JPMJAX2004.
AN was partially supported by JSPS KAKENHI 22H03650 and JST PRESTO JPMJPR1928.
TS was partially supported by JSPS KAKENHI 20H00576 and JST CREST.

\bibliography{koopman_nn}
\bibliographystyle{iclr2024_conference}

\clearpage
\appendix
\setcounter{proposition}{0}
\renewcommand{\theproposition}{\Alph{proposition}}
\section*{Appendix}
\section{Proofs}\label{ap:proofs}
We provide the proofs of the theorems, propositions, and lemmas in the main text.
\begin{mythm}[Proposition~\ref{prop:activation_koopman}]
Let $p(\omega)=1/(1+\Vert \omega\Vert^2)^s$ for $\omega\in\r{d}$ $s\in\mathbb{N}$, and $s>d/2$.
If the activation function $\sigma$ has the following properties, then $K_{\sigma}:H_p(\r{d})\to H_p(\r{d})$ is bounded.
\begin{itemize}[nosep,leftmargin=*]
    \item $\sigma$ is $s$-times differentiable and its derivative $\partial^{\alpha}\sigma$ is bounded for any multi-index $\alpha\in\{(\alpha_1,\ldots,\alpha_d)\,\mid\,\sum_{j=1}^d\alpha_j\le s\}$.
    \item $\sigma$ is bi-Lipschitz, i.e., $\sigma$ is bijective and both $\sigma$ and $\sigma^{-1}$ are Lipschitz continuous.
\end{itemize}
\end{mythm}
\begin{proof}
For $h\in H(\r{d})$, we have $\Vert K_{\sigma}h\Vert_{H_p(\r{d})}^2=\sum_{\vert \alpha\vert\le s}\Vert\partial^{\alpha}(h\circ\sigma)\Vert_{L^2(\r{d})}^2$.
We denote $\sigma(x)=(\sigma_1(x),\ldots,\sigma_d(x))$ and $D_{\gamma}\sigma(x)=(\partial^{\gamma}\sigma_1(x),\ldots,\partial^{\gamma}\sigma_d(x))$ for $\gamma\in\mathbb{N}^d$.
By the Fa\`{a} di Bruno formula, we have
\red{\begin{align*}
    \partial^{\alpha}(h\circ\sigma)(x)=\sum_{\vert\beta\vert\le \vert\alpha\vert}\partial^{\beta}h(\sigma(x))\sum_{i=1}^{\vert\alpha\vert}\sum_{\gamma\in p(\alpha,\beta)}\alpha!\prod_{j=1}^i\frac{(\partial^{l_j}\sigma(x))^{k_j}}{k_j!(l_j!)^{\vert k_j\vert}},
\end{align*}
where $p(\alpha,\beta)=\{\gamma=(k_1,\ldots,k_s,l_1,\ldots,l_s)\,\mid\,0\le l_1\le\cdots\le l_s,\ \sum_{j=1}^sk_j=\alpha,\ \sum_{j=1}^s\vert k_j\vert l_j=\beta \}$.}
Thus, $\partial^{\alpha}(h\circ\sigma)(x)$ is written as the finite weighted sum of $\partial^{\beta}h(\sigma(x))\prod_{i=1}^m(D_{\gamma_i}\sigma(x))^{\delta_i}$ for some $m\le \vert\alpha\vert$ and $\beta,\gamma_i,\delta_i\in\mathbb{N}^d$, $\vert\beta\vert\le \vert\alpha\vert$, $\vert\gamma_i\vert\le \vert\alpha\vert$, $\vert\delta_i\vert\le \vert\alpha\vert$.
By the boundedness of the derivatives of $\sigma$, there exists ${C}_{\beta,\gamma,\delta}>0$ such that
\begin{align*}
\int_{\r{d}} \bigg\vert \partial^{\beta}h(\sigma(x))\prod_{i=1}^m(D_{\gamma_i}\sigma(x))^{\delta_i}\bigg\vert^2\mr{d}x
\le {C}_{\beta,\gamma,\delta}\int_{\r{d}} \vert \partial^{\beta}h(\sigma(x))\vert^2\mr{d}x.
\end{align*}
Moreover, by the Lipschitzness of $\sigma^{-1}$, there exists $\tilde{C}>0$ such that
\begin{align*}
\int_{\r{d}} \vert \partial^{\beta}h(\sigma(x))\vert^2\mr{d}x
\le\Vert \opn{det}(J\sigma^{-1})\Vert_{\infty}\int_{\r{d}} \vert \partial^{\beta}h(x)\vert^2\mr{d}x
\le \tilde{C}\int_{\r{d}} \vert \partial^{\beta}h(x)\vert^2\mr{d}x,
\end{align*}
\red{where $J\sigma^{-1}$ is the Jacobian of $\sigma^{-1}$,}
which shows the boundedness of $K_{\sigma}$.
\if0
\begin{align*}
&\Vert K_{\sigma}h\Vert_{H_p(\r{d})}^2=\sum_{\vert \alpha\vert\le s}\Vert\partial^{\alpha}h\circ\sigma\Vert_{L^2(\r{d})}^2\\
&=\int_{\r{d}} \vert h(\sigma(x))\vert^2\mr{d}x+\int_{\r{d}} \mathbf{1}^T\vert (J\sigma)(x) (\nabla h)(\sigma(x))\vert^2\mr{d}x\\
&+\int_{\r{d}} \mathbf{1}^T\vert (J\sigma)(x) (H h)(\sigma(x))(J\sigma)(x)^T\vert^2\mathbf{1}\mr{d}x
+\int_{\r{d}} \mathbf{1}^T\vert (\tilde{H}\sigma)(x)(\nabla h)(\sigma(x))\vert^2\mathbf{1}\mr{d}x+\cdots\\
&\le\int_{\r{d}} \vert h(\sigma(x))\vert^2\mr{d}x+\Vert J\sigma\Vert_{\infty}^2\int_{\r{d}} \vert (\nabla h)(\sigma(x))\vert^2\cdot\mathbf{1}\mr{d}x\\
&+\Vert (J\sigma)\Vert_{\infty}^4\int_{\r{d}} \mathbf{1}^T\vert (H h)(\sigma(x))\vert^2\mathbf{1}\mr{d}x
+\Vert \tilde{H}\sigma\Vert_{\infty}^2\int_{\r{d}} \mathbf{1}^T\vert (\nabla h)(\sigma(x))\vert^2\mr{d}x+\cdots\\
&=\Vert\opn{det}(J\sigma^{-1})\Vert_{\infty}\bigg(\int_{\r{d}} \vert h(x)\vert^2\mr{d}x+\Vert J\sigma\Vert_{\infty}^2\int_{\r{d}} \mathbf{1}^T\vert (\nabla h)(x)\vert^2\cdot\mathbf{1}\mr{d}x\\
&+\Vert J\sigma\Vert_{\infty}^4\int_{\r{d}} \vert (H h)(x)\vert^2\mr{d}x
+\Vert \tilde{H}\sigma\Vert_{\infty}^2\int_{\r{d}} \mathbf{1}^T\vert (\nabla h)(x)\vert^2\mr{d}x+\cdots\bigg)\\
&\le \Vert\opn{det}(J\sigma^{-1})\Vert_{\infty}\max\{1,\Vert J\sigma\Vert_{\infty}^2+\Vert \tilde{H}\sigma\Vert_{\infty}^2+\cdots,\Vert J\sigma\Vert_{\infty}^4+\cdots,\ldots\}\sum_{\vert \alpha\vert\le s}\Vert\partial^{\alpha}h\Vert_{L^2(\r{d})}^2,
\end{align*}
where $\mathbf{1}$ is the vector in $\r{d}$ all of whose elements are $1$ and $(\cdot)^T$ represents the transpose of the vector.
In addition, $J\sigma$ and $H\sigma$ are the Jacobian and Hessian of $\sigma$, and $\tilde{H}$ is the 3rd-order tensor defined as $\tilde{H}\sigma=[\partial_i\partial_j\sigma_k]_{i,j,k}$.
For a tensor-valued function $v$ on $\r{d}$, $\Vert v\Vert_{\infty}$ is defined as $\Vert v\Vert_{\infty}=\max_{i,j,k}\Vert v_{i,j,k}\Vert_{\infty}$.
\fi
\end{proof}

\begin{mythm}[Theorem~\ref{thm:nonsingular}]
The Rademacher complexity $\hat{R}_n(\mathbf{x},{F}_{\opn{inv}}(C,D))$ is bounded as 
\begin{align*}
\hat{R}_n(\mathbf{x},{F}_{\opn{inv}}(C,D))\le \frac{B\Vert g\Vert_{H_L}}{\sqrt{n}} \!\!\!\!\!\sup_{W_j\in\mcl{W}(C,D)}\bigg(\prod_{j=1}^L\frac{\Vert {p_{j}}/({p_{j-1}\circ W_j^*)}\Vert_{\infty}^{1/2}}{\vert \opn{det}(W_j)\vert^{1/2}}\bigg)\bigg(\prod_{j=1}^{L-1}\Vert K_{\sigma_j}\Vert\bigg).%\label{eq:bound_invertible}
\end{align*}
\end{mythm}

We use the following lemma to show Theorem~\ref{thm:nonsingular}.
\begin{lemma}\label{lem:Kw_norm} %[Lemma~\ref{lem:Kw_norm}]
Assume $W_j:\r{d}\to\r{d}$ is invertible for $j=1,\ldots,L$.
Then, for $j=1,\ldots,L$, we have 
\begin{align*}
\Vert K_{W_j}\Vert\le \bigg(\bigg\Vert\frac{p_{j}}{p_{j-1}\circ W_j^*}\bigg\Vert_{\infty}\frac{1}{\vert\opn{det}(W_j)\vert}\bigg)^{1/2},\quad \Vert K_{b_j}\Vert=1.
\end{align*}
\end{lemma}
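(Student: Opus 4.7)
Plan for proving Lemma~\ref{lem:Kw_norm}. The strategy is to work entirely on the Fourier side, since the norm on $H_p(\r{d})$ is defined via the weighted $L^2$ norm of $\hat{h}$. First I will compute $\widehat{h\circ W_j}$ for an invertible linear $W_j$, plug the result into the definition of $\Vert K_{W_j}h\Vert_{H_{p_{j-1}}}^2$, and then perform one change of variables to bring the integral back into the form of $\Vert h\Vert_{H_{p_j}}^2$ times a weight factor, which can be bounded uniformly by $\Vert p_j/(p_{j-1}\circ W_j^*)\Vert_\infty$.

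Concretely, for $h\in H_j$ and invertible $W_j$, a direct substitution $y=W_jx$ in the defining Fourier integral yields
\begin{equation*}
\widehat{h\circ W_j}(\omega)=\frac{1}{\vert\opn{det}(W_j)\vert}\,\hat h\bigl((W_j^*)^{-1}\omega\bigr).
\end{equation*}
Substituting this into $\Vert K_{W_j}h\Vert_{H_{p_{j-1}}}^2=\int_{\r{d}}\vert\widehat{h\circ W_j}(\omega)\vert^2/p_{j-1}(\omega)\,\mr d\omega$ and then changing variables $\eta=(W_j^*)^{-1}\omega$ (so $\omega=W_j^*\eta$, $\mr d\omega=\vert\opn{det}(W_j)\vert\,\mr d\eta$) gives
\begin{equation*}
\Vert K_{W_j}h\Vert_{H_{p_{j-1}}}^2=\frac{1}{\vert\opn{det}(W_j)\vert}\int_{\r{d}}\frac{\vert\hat h(\eta)\vert^2}{p_{j-1}(W_j^*\eta)}\,\mr d\eta.
\end{equation*}
Multiplying and dividing the integrand by $p_j(\eta)$ and pulling out the $L^\infty$ norm of the density ratio produces the desired bound
\begin{equation*}
\Vert K_{W_j}h\Vert_{H_{p_{j-1}}}^2\le\frac{1}{\vert\opn{det}(W_j)\vert}\bigg\Vert\frac{p_j}{p_{j-1}\circ W_j^*}\bigg\Vert_\infty\Vert h\Vert_{H_j}^2,
\end{equation*}
from which the stated operator norm estimate follows by taking the supremum over unit vectors.

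For $K_{b_j}$, where $b_j(x)=x+a_j$, the Fourier shift identity gives $\widehat{h\circ b_j}(\omega)=\mr e^{\mr i a_j\cdot\omega}\hat h(\omega)$, so $\vert\widehat{h\circ b_j}(\omega)\vert=\vert\hat h(\omega)\vert$ pointwise; since the source and target spaces coincide (both equal to $H_j$, with the same density $p_j$), this immediately gives $\Vert K_{b_j}h\Vert_{H_j}=\Vert h\Vert_{H_j}$, hence $\Vert K_{b_j}\Vert=1$ (the lower bound coming by taking any $h$ with nonzero norm).

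I do not anticipate a genuine obstacle here; the proof is essentially just careful bookkeeping. The one place where errors are easy to make is in tracking the two Jacobian factors that appear—one from the Fourier transform of $h\circ W_j$ and a second one from the change of variables on the frequency side—and in verifying that they combine to give a single $1/\vert\opn{det}(W_j)\vert$ rather than its square or its reciprocal. Writing both substitutions explicitly, as above, makes the cancellation transparent and the argument routine.
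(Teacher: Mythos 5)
Your proposal is correct and follows essentially the same route as the paper's proof: compute $\widehat{h\circ W_j}$ by substitution, change variables on the frequency side so the two Jacobian factors combine into a single $1/\vert\opn{det}(W_j)\vert$, insert the ratio $p_j/(p_{j-1}\circ W_j^*)$ and bound it by its $L^\infty$ norm, and use the Fourier shift identity for $K_{b_j}$. The only difference is that you write the frequency-side change of variables explicitly, whereas the paper compresses it into one displayed inequality; the sign in your shift factor differs from the paper's but is immaterial since only the modulus enters.
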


\begin{proof}
For $h\in H_{j}$, we have
%\begin{align*}
$\widehat{(h\circ W_j)}(\omega)=\int_{\mathbb{R}^d}h(W_jx)\mr{e}^{-\mr{i}x\cdot \omega}\mr{d}x
={\hat{h}(W_j^{-*}\omega)}/{\vert\opn{det}(W_j)\vert}$.
%\end{align*}
Thus, the norm of the Koopman operator is bounded as
\begin{align*}
\Vert K_{W_j}h\Vert_{H_{j-1}}^2&=\int_{\mathbb{R}^d}\frac{\vert \hat{h}(W_j^{-*}\omega)\vert^2}{\vert\opn{det}(W_j)\vert^2p_{j-1}(\omega)}\mr{d}\omega
%=\int_{\mathbb{R}^d}\frac{\vert \hat{h}(\omega)\vert^2}{\vert\opn{det}W_j\vert p_{j-1}(W_j^*\omega)}\mr{d}\omega\\
\le\Vert h\Vert_{H_j}^2\sup_{\omega\in\mathbb{R}^d}\bigg\vert\frac{p_{j}(\omega)}{p_{j-1}(W_j^*\omega)}\bigg\vert\frac{1}{\vert\opn{det}(W_j)\vert}.
\end{align*}
In addition, for $h\in H_{j}$, we have
$\widehat{(h\circ b_j)}(\omega)=\mr{e}^{-\mr{i}a_j\cdot\omega}\hat{h}(\omega)$.
Thus, we obtain $\Vert K_{b_j}h\Vert^2=\Vert h\Vert^2$.
\end{proof}

\begin{proof}[Proof of Theorem~\ref{thm:nonsingular}]
\red{Let $x_1,\ldots,x_n\in\mathbb{R}^{d_0}$ and $s_1,\ldots,s_n$ be i.i.d. Rademacher variables (random variables following the uniform distribution on $\{-1,1\}$.}
By the reproducing property of $H_0$ and the Cauchy--Schwartz inequality, we have
\begin{align}
&\frac{1}{n}\mr{E}\bigg[\sup_{f\in{F}_{\opn{inv}}(C,D)}\sum_{i=1}^ns_if(x_i)\bigg]
=\frac{1}{n}\mr{E}\bigg[\sup_{f\in{F}_{\opn{inv}}(C,D)}\sum_{i=1}^n\bracket{s_ik_{p_0}(\cdot,x_i),f}_{H_0}\bigg]\nn\\
&\qquad\le \frac{1}{n}\mr{E}\bigg[\sup_{f\in{F}_{\opn{inv}}(C,D)} \bigg(\sum_{i,j=1}^ns_is_jk_{p_0}(x_i,x_j)\bigg)^{1/2} \Vert f\Vert_{H_0}\bigg]\nn\\
&\qquad\le \frac{1}{n}\sup_{f\in{F}_{\opn{inv}}(C,D)}\Vert f\Vert_{H_0}\,\mr{E}^{\frac12}\bigg[\sum_{i,j=1}^ns_is_jk_{p_0}(x_i,x_j)\bigg]\nn\\
&\qquad\le\frac{1}{n}\sup_{f\in{F}_{\opn{inv}}(C,D)}\Vert f\Vert_{H_0}\bigg(\sum_{i=1}^nk_{p_0}(x_i,x_i)\bigg)^{1/2}\nn\\
&\qquad\le \frac{B}{\sqrt{n}}\sup_{W_j\in\mcl{W}(C,D)}\Vert K_{W_1}K_{b_1}K_{\sigma_1}\cdots K_{W_L}K_{b_L}g\Vert_{H_0}\nn\\
&\qquad\le\frac{B}{\sqrt{n}}\sup_{W_j\in\mcl{W}(C,D)}\bigg(\prod_{j=1}^L\Vert K_{W_j}\Vert\Vert K_{b_j}\Vert\Vert K_{\sigma_j}\Vert\bigg)\Vert g\Vert_{H_L},\label{eq:rademacher_proof}
%&\le \frac{C}{\sqrt{n}}\sup_{W_i\in\mcl{W}}\bigg(\prod_{i=1}^L \bigg\Vert\frac{p_{i}}{p_{i-1}\circ W_i^*}\bigg\Vert_{\infty}\frac{1}{\vert\opn{det}W_i\vert}\Vert K_{\sigma_i}\Vert\bigg)\Vert h\Vert_{p_L}
\end{align}
where the third inequality is derived by the Jensen's inequality.
By Lemma~\ref{lem:Kw_norm}, we obtain the final result.
\end{proof}

\begin{mythm}[Lemma~\ref{lem:Wnorm}]
Let $p(\omega)=1/(1+\Vert\omega\Vert^2)^s$ for $s>d/2$ and $p_j=p$ for $j=0,\ldots,L$.
Then, we have
$\Vert{p}/({p\circ W_j^*})\Vert_{\infty}\le {\max\{1,{\Vert W_j\Vert^{2s}}\}}$.
\end{mythm}

\begin{proof}
By the definition of $p$, we have
\begin{align*}
\bigg\Vert\frac{p}{p\circ W_j^*}\bigg\Vert_{\infty}
&=\sup_{\omega\in\r{d}}\bigg\vert \frac{p(\omega)}{p(W_j^*\omega)}\bigg\vert
=\sup_{\omega\in\r{d}}\bigg\vert\bigg(\frac{1+\Vert W_j^*\omega\Vert^2}{1+\Vert \omega\Vert^2}\bigg)^s\bigg\vert
%&\le \max\{1,\Vert W_j\Vert^{2s}\}\sup_{\omega\in\r{d}}\bigg\vert\bigg(\frac{1+\Vert \omega\Vert^2}{1+\Vert \omega\Vert^2}\bigg)^s\bigg\vert
\le\max\{1,\Vert W_j\Vert^{2s}\}.
\end{align*}
\end{proof}

\begin{mythm}[Theorem~\ref{thm:injective}]
\red{Let $\mcl{W}_j(C,D)=\{W\in\r{d_{j-1}\times d_{j}}\,\mid\,d_j\ge d_{j-1},\ \Vert W\Vert\le C,\ \sqrt{\opn{det}({W^*W})}\ge D\}$ and $F_{\opn{inj}}(C,D)=\{f\in F\,\mid\, W_j\in\mcl{W}_j(C,D)\}$.}
The Rademacher complexity $\hat{R}_n(\mathbf{x},F_{\opn{inj}}(C,D))$ is bounded as 
\begin{align*}
\hat{R}_n(\mathbf{x},F_{\opn{inj}}(C,D)))\le \frac{B\Vert g\Vert_{H_L}}{\sqrt{n}} \!\!\!\!\!\sup_{W_j\in\mcl{W}_j(C,D)}\!\!\bigg(\prod_{j=1}^L\frac{\Vert {\red{p_{j-1}}}/({p_{j-1}\circ W_j^*})\Vert_{\mcl{R}(W_j),\infty}^{1/2}G_j}{{\opn{det}({W_j^*W_j)}}^{1/4}}\bigg)\bigg(\prod_{j=1}^{L-1}\Vert K_{\sigma_j}\Vert\bigg),%\label{eq:bound_injectve}
\end{align*}
\red{where $G_j=\Vert f_j|_{\mcl{R}(W_j)}\Vert_{H_{\red{p_{j-1}}}(\mcl{R}(W_j))}/\Vert f_j\Vert_{H_{j}}$ and $f_j=g\circ b_L \circ W_L\circ \sigma_{L-1}\circ b_{L-1}\circ W_{L-1}\circ\cdots \circ \sigma_j\circ b_j$.}
\end{mythm}
\begin{proof}
%Let $\tilde{h}\in H_{j}$ and let $\tilde{h}\mid_{\mcl{R}(W_j)}=h$.
For $h\in H_j$, we have
\begin{align*}
\widehat{({h}\circ W_j)}(\omega)&=\int_{\mathbb{R}^{d_{j-1}}}h(W_jx)\mr{e}^{-\mr{i}x\cdot \omega}\mr{d}x
=\int_{\mcl{R}(W_j)}h(x)\mr{e}^{-\mr{i}x\cdot W_j^{-*}\omega}\mr{d}x\frac1{\vert\opn{det}(R_j)\vert}
=\frac{\hat{h}(W_j^{-*}\omega)}{\vert\opn{det}(R_j)\vert},
\end{align*}
where $W_j=Q_jR_j$ is the QR decomposition of $W_j$ and \red{$\mcl{R}(W_j)$ is the range of $W_j$}.
In addition, we regard $W_j:\r{d_{j-1}}\to\mcl{R}(W_j)$.
Since $\vert\opn{det}(R_j)\vert=(\opn{det}(R_j^*R_j))^{1/2}=(\opn{det}(W_j^*W_j))^{1/2}$, the norm of the Koopman operator is bounded as
\begin{align}
\Vert K_{W_j}{h}\Vert^2_{H_{j-1}}&=\int_{\mathbb{R}^{d_{j-1}}}\frac{\vert \hat{h}(W_j^{-*}\omega)\vert^2}{\opn{det}(W_j^*W_j) p_{j-1}(\omega)}\mr{d}\omega
=\int_{\mcl{R}(W_j)}\frac{\vert \hat{h}(\omega)\vert^2}{\opn{det}(W_j^*W_j)^{1/2} p_{j-1}(W_j^*\omega)}\mr{d}\omega\nn\\
&\le\Vert h|_{\mcl{R}(W_j)}\Vert_{H_{p_{\red{j-1}}}(\mcl{R}(W_j))}^2\sup_{\omega\in\mcl{R}(W_j)}\bigg\vert\frac{p_{\red{j-1}}(\omega)}{p_{j-1}(W_j^*\omega)}\bigg\vert\frac{1}{\opn{det}(W_j^*W_j)^{1/2}}.\label{eq:koopman_norm_injective}
%&=G_j\Vert h\Vert_{H_{j}}^2\sup_{\omega\in\mcl{R}(W_j)}\bigg\vert\frac{p_{j}(\omega)}{p_{j-1}(W_j^*\omega)}\bigg\vert\frac{1}{\vert\opn{det}(W_j^*W_j)\vert^{1/2}}.
\end{align}
Thus, we have
\begin{align*}
\Vert f\Vert_{H_0}&=\Vert K_{W_1}f_1\Vert_{H_0}\le \Vert f_1|_{\mcl{R}(W_1)}\Vert_{H_{\red{p_0}}(\mcl{R}(W_1))}\sup_{\omega\in\mcl{R}(W_1)}\bigg\vert\frac{\red{p_{0}}(\omega)}{p_{0}(W_1^*\omega)}\bigg\vert^{1/2}\frac{1}{\opn{det}(W_1^*W_1)^{1/4}}\\
&= G_1\Vert f_1\Vert_{H_1}\sup_{\omega\in\mcl{R}(W_1)}\bigg\vert\frac{\red{p_{0}}(\omega)}{p_{0}(W_1^*\omega)}\bigg\vert^{1/2}\frac{1}{\opn{det}(W_1^*W_1)^{1/4}}\\
&\le G_1\Vert K_{\sigma_1}\Vert_{H_1}\Vert K_{W_2}f_2\Vert_{H_1}\sup_{\omega\in\mcl{R}(W_1)}\bigg\vert\frac{\red{p_{0}}(\omega)}{p_{0}(W_1^*\omega)}\bigg\vert^{1/2}\frac{1}{\opn{det}(W_1^*W_1)^{1/4}}.
\end{align*}
Applying the inequality~\eqref{eq:koopman_norm_injective} iteratively, we obtain
\begin{align}
\Vert f\Vert_{H_0}\le \prod_{j=1}^L\bigg\Vert \frac{\red{p_{j-1}}}{p_{j-1}\circ W_j^*}\bigg\Vert_{\mcl{R}(W_j),\infty}^{1/2}\,\frac{G_j\Vert K_{\sigma_j}\Vert}{{\opn{det}({W_j^*W_j)}}^{1/4}}.\label{eq:f_norm}
\end{align}
Applying the inequality~\eqref{eq:f_norm} to $\Vert f\Vert_{H_0}$ in the inequality~\eqref{eq:rademacher_proof} completes the proof.
\end{proof}

\if0
\begin{myremark}
For the application of Lemma~\ref{lem:K_w_norm_injective} to Theorem~\ref{thm:injective}, we note that the network $f$ is defined only on $\r{d_0}$.
Indeed, let $f_j=g\circ b_L \circ W_L\circ \sigma_{L-1}\circ b_{L-1}\circ W_{L-1}\circ\cdots \circ \sigma_j\circ b_j$.
Then, even if we replace $f_j$ with a function $\tilde{f}_j$ satisfying $f_j|_{\mcl{R}(W_j)}=\tilde{f}_j|_{\mcl{R}(W_j)}$, the whole network $f$ as a function on $\r{d_0}$ does not change.
\end{myremark}
\fi
\begin{mythm}[Lemma~\ref{lem:Wnorm_injective}]
Let $p_j(\omega)=1/(1+\Vert\omega\Vert^2)^{s_j}$ for $s_j>d_j/2$ and for $j=0,\ldots,L$.
%Let $s_j\ge s_{j-1}$.
%Assume $W_j:\r{d}\to\r{d}$ is invertible.
Then, we have
%\begin{align*}
$\Vert {\red{p_{j-1}}}/({p_{j-1}\circ W_j^*})\Vert_{\mcl{R}(W_j),\infty}\le {\max\{1,{\Vert W_j\Vert^{2s_{j-1}}}\}}$.
\end{mythm}
\begin{proof}
By the definition of $p_j$ and the assumption of $s_j\ge s_{j-1}$, we have
\begin{align*}
\bigg\Vert\frac{\red{p_{j-1}}}{p_{j-1}\circ W_j^*}\bigg\Vert_{\mcl{R}(W_j),\infty}
&=\sup_{\omega\in\mcl{R}(W_j)}\bigg\vert \frac{\red{p_{j-1}}(\omega)}{p_{j-1}(W_j^*\omega)}\bigg\vert\\
%=\sup_{\omega\in\mcl{R}(W_j)}\bigg\vert\frac{(1+\Vert W_j^*\omega\Vert^2)^{s_{j-1}}}{(1+\Vert \omega\Vert^2)^{s_{j}}}\bigg\vert\\
&\le \sup_{\omega\in\mcl{R}(W_j)}\bigg\vert\frac{(1+\Vert W_j^*\omega\Vert^2)^{s_{j-1}}}{(1+\Vert \omega\Vert^2)^{s_{j-1}}}\bigg\vert\\
&\le \max\{1,\Vert W_j\Vert^{2s_{j-1}}\}\sup_{\omega\in\mcl{R}(W_j)}\bigg\vert\bigg(\frac{1+\Vert \omega\Vert^2}{1+\Vert \omega\Vert^2}\bigg)^{s_{j-1}}\bigg\vert\\
&=\max\{1,\Vert W_j\Vert^{2s_{j-1}}\}.
\end{align*}
\end{proof}

\begin{mythm}[Lemma~\ref{lem:G_j}]
%Assume $p_j(\omega_1+\omega_2)\le p_j(\omega_1)$ for $\omega_1\in\mcl{R}(W_j)$ and $\omega_2\in\mcl{R}(W_j)^{\perp}$.
%Assume $f_j(\omega)=f_{j,1}(\omega_1)f_{j,2}(\omega_2)$.
\red{Let $p_j(\omega)=1/(1+\Vert\omega\Vert^2)^{s_j}$ for $s_j>d_j/2$ and for $j=0,\ldots,L$.
Let $s_j\ge s_{j-1}$ and $\tilde{H}_j$ be the Sobolev space on $\mcl{R}(W_j)^{\perp}$ with $p(\omega)=1/(1+\Vert\omega\Vert^2)^{s_j-s_{j-1}}$.}
Then, $G_j\le G_{j,0}\Vert f_j|_{\mcl{R}(W_j)}\cdot f_j|_{\mcl{R}(W_j)^{\perp}}\Vert_{H_j}/\Vert f_j\Vert_{H_j}$, where 
$G_{j,0}=\Vert f_j\Vert_{\red{\tilde{H}_j}}^{-1}$.
%$G_{j,0}^2=1/\int_{\mcl{R}(W_j)^{\perp}}\vert\hat{f_j}(\omega)\vert^2\mr{d}\omega$.
\end{mythm}
\begin{remark}
\red{Since $W_j$ is injective, $\opn{dim}(\mcl{R}(W_j)^{\perp})=d_j-d_{j-1}$.
Thus, we have $s_j-s_{j-1}>\opn{dim}(\mcl{R}(W_j)^{\perp})/2$.}
\end{remark}
\begin{proof}
%Let ${\psi}\in H_{p_j}(\mcl{R}(W_j)^{\perp})$ be a function satisfying $\int_{\mcl{R}(W_j)^{\perp}}\vert \hat{\psi}(\omega)\vert^2\mr{d}\omega\ge 1$, and $\psi(0)= 1$.
%For example, $\psi$ can be set as $\hat{\psi}(\omega)=\mr{e}^{-\Vert \omega\Vert^2/(4\pi)}$.
%For $h\in H_j$, we have
By the definition of $G_{j,0}$, \red{and since $\mcl{R}(W_j)$ and $\mcl{R}(W_j)^{\perp}$ are orthogonal}, we have
\begin{align*}
&\Vert f_j\Vert_{H_{\red{p_{j-1}}}(\mcl{R}(W_j))}^2=\int_{\mcl{R}(W_j)}\vert \hat{f_j}(\omega)\vert^2\frac{1}{\red{p_{j-1}}(\omega)}\mr{d}\omega\\
&=G_{j,0}^2\int_{\mcl{R}(W_j)}\vert \hat{f_j}(\omega_1)\vert^2\red{(1+\Vert\omega_1\Vert^2)^{s_{j-1}}}\mr{d}\omega_1\int_{\mcl{R}(W_j)^{\perp}}\vert \hat{f_j}(\omega_2)\vert^2\red{(1+\Vert\omega_2\Vert^2)^{s_j-s_{j-1}}}\mr{d}\omega_2\\
&=G_{j,0}^2\int_{\mcl{R}(W_j)}\int_{\mcl{R}(W_j)^{\perp}}\vert \hat{f_j}(\omega_1)\vert^2\vert \hat{f_j}(\omega_2)\vert^2\frac{1}{p_j(\omega_1+\omega_2)}\red{\frac{(1+\Vert\omega_1\Vert^2)^{s_{j-1}}(1+\Vert\omega_2\Vert^2)^{s_j-s_{j-1}}}{(1+\Vert\omega_1+\omega_2\Vert^2)^{s_j}}}\mr{d}\omega_2\mr{d}\omega_1\\
&\red{=G_{j,0}^2\int_{\mcl{R}(W_j)}\int_{\mcl{R}(W_j)^{\perp}}\vert \hat{f_j}(\omega_1)\vert^2\vert \hat{f_j}(\omega_2)\vert^2\frac{1}{p_j(\omega_1+\omega_2)}}\\
&\phantom{=G_{j,0}^2\int_{\mcl{R}(W_j)}\int_{\mcl{R}(W_j)^{\perp}}}\red{\cdot\frac{(1+\Vert\omega_1\Vert^2)^{s_{j-1}}(1+\Vert\omega_2\Vert^2)^{s_j-s_{j-1}}}{(1+\Vert\omega_1\Vert^2+\Vert\omega_2\Vert^2)^{s_{j-1}}(1+\Vert\omega_1\Vert^2+\Vert\omega_2\Vert^2)^{s_j-s_{j-1}}}\mr{d}\omega_2\mr{d}\omega_1}\\
%&=\int_{\mcl{R}(W_j)}\int_{\mcl{R}(W_j)^{\perp}}\vert \hat{h}(\omega_1)\hat{\psi}(\omega_2)\vert^2\frac{1}{p_j(\omega_1+\omega_2)}\frac{(1+\Vert \omega_1\Vert^2)^{s_j}}{(1+\Vert \omega_1\Vert^2+\Vert\omega_2\Vert^2)^{s_j}}\mr{d}\omega_2\mr{d}\omega_1\\
&\le G_{j,0}^2\int_{\mcl{R}(W_j)}\int_{\mcl{R}(W_j)^{\perp}}\vert \hat{f_j}(\omega_1)\hat{f_j}(\omega_2)\vert^2\frac{1}{p_j(\omega_1+\omega_2)}\mr{d}\omega_2\mr{d}\omega_1\\
&=G_{j,0}^2\int_{\r{d_j}}\vert \widehat{\tilde{f_j}}(\omega)\vert^2\frac{1}{p_j(\omega)}\mr{d}\omega
=G_{j,0}^2\Vert \tilde{f_j}\Vert_{H_j}^2=G_{j,0}^2\Vert f_j\Vert_{H_j}^2\frac{\Vert \tilde{f_j}\Vert^2_{H_j}}{\Vert {f_j}\Vert^2_{H_j}},
\end{align*}
where $\tilde{h}(x)=h(x_1)h(x_2)$ for $x=x_1+x_2$, $x_1\in\mcl{R}(W_j)$, and $x_2\in\mcl{R}(W_j)^{\perp}$.
Note that since $\mcl{R}(W_j)$ and $\mcl{R}(W_j)^{\perp}$ are orthogonal, we have $\hat{h}(\omega_1)\hat{h}(\omega_2)=\widehat{\tilde{h}}(\omega)$.
%The equality $\tilde{h}_0(x_1)=h(x_1)\psi(0)=h(x_1)$ for $x_1\in\mcl{R}(W_j)$ completes the proof.
\end{proof}

\begin{mythm}[Proposition~\ref{prop:weighted_koopman}]
Let $p_j(\omega)=1/(1+\Vert\omega\Vert^2)^{s_j}$ with $s_j\in\mathbb{N}$, $s_j\ge s_{j-1}$, and $s_j>d_j/2$.
Let $\psi_j$ be a function satisfying $\psi_j(x)=\psi_{j,1}(x_1)$ for some $\psi_{j,1}\in H_{p_{j-1}}(\opn{ker}(W_j))$, where $x=x_1+x_2$ for $x_1\in\opn{ker}(W_j)$ and $x_2\in\opn{ker}(W_j)^{\perp}$ \red{and $\opn{ker}(W_j)$ is the kernel of $W_j$}.
\red{Let $\mcl{W}_{\opn{r},j}(C,D)=\{W\in\r{d_{j-1}\times d_{j}}\,\mid\,\Vert W\Vert\le C,\ \vert\opn{det}(W_{\opn{r}})\vert\ge D\}$
and $F_{\mr{r}}(C,D)=\{f\in F\,\mid\,{W}_{j}\in{\mcl{W}}_{\mr{r},j}(C,D)\}$.}
\red{Moreover, let $G_j=\Vert f_j\Vert_{H_{j-1}(\opn{ker}(W_j)^{\perp})}/\Vert f_j\Vert_{H_j}$.}
Then, we have
\begin{align*}
\red{\hat{R}_n(\mathbf{x},{F}_{\mr{r}}(C,D)))\le \frac{B\Vert {g}\Vert_{H_L}}{\sqrt{n}}\!\!\!\!\!\!\sup_{{W}_j\in\mcl{W}_{\opn{r},j}(C,D)}\!\!\bigg(\prod_{j=1}^L\frac{\Vert \psi_{j,1}\Vert_{\tilde{H}_{j-1}}\!\!G_j\max\{1,\Vert W_j\Vert^{s_{j-1}}\}}{\vert\opn{det}(W_{{\opn{r},j}})\vert^{1/2}}\bigg)\bigg(\prod_{j=1}^{L-1}\Vert K_{\sigma_j}\Vert\bigg).}
\end{align*}
\end{mythm}
\begin{proof}
For $h\in H_j$, we have $\Vert\tilde{K}_{W_j}h\Vert^2_{H_{j-1}}=\sum_{\vert \alpha\vert\le s_{j-1}}\Vert\partial^{\alpha}(\psi_j\cdot h\circ W_j)\Vert_{L^2(\r{d_{j-1}})}^2$, where the directions of the derivatives are along the directions of $\opn{ker}(W_j)^{\perp}$ and $\opn{ker}(W_j)$.
%By the Leibniz rule, we have \red{$\partial^{\alpha}(\psi_j\cdot h\circ W_j)(x)=\sum_{\beta+\gamma=\alpha}\binom{\alpha}{\beta}\partial^{\beta}\psi_j(x)\partial^{\gamma}(h\circ W_j)(x)$}. %is written as the finite weighted sum of $\partial^{\beta}\psi_j(x)\partial^{\gamma}(h\circ W_j)(x)$ for some $\beta,\gamma\in\mathbb{N}^d$, $\beta+\gamma=\alpha$.
We have
\begin{align}
\int_{\r{d_{j-1}}}\vert \partial^{\beta}\psi_j(x)\partial^{\gamma}(h\circ W_j)(x)\vert^2\mr{d}x
\le \Vert W_j\Vert^{2\vert\gamma\vert}\int_{\r{d_{j-1}}}\vert \partial^{\beta}\psi_j(x)(\partial^{\gamma}h)(W_jx)\vert^2\mr{d}x.\label{eq:weighted_1}
\end{align}
In addition, let $\phi$ be a function satisfying $\phi(x)=\phi_{1}(x_1)$ for some $\phi_{1}\in H_{p_{j-1}}(\opn{ker}(W_j))$, where $x=x_1+x_2$ for $x_1\in\opn{ker}(W_j)$ and $x_2\in\opn{ker}(W_j)^{\perp}$.
Let $u\in H_{p_{j-1}}(\r{d_{j-1}})$.
Then, we have
\begin{align}
\int_{\mathbb{R}^{d_{j-1}}}\vert \phi(x)u(W_jx)\vert^2\mr{d}x
&=\int_{\opn{ker}(W_j)}\int_{\opn{ker}(W_j)^{\perp}}\vert \phi_1(x_1)u(W_jx_2)\vert^2\mr{d}x_2\mr{d}{x_1}\nn\\
&=\int_{\opn{ker}(W_j)}\vert\phi_1(x_1)\vert^2\mr{d}x_1\int_{\opn{ker}(W_j)^{\perp}}\vert u(W_jx_2)\vert^2\mr{d}x_2\nn\\
&=\Vert \phi_1\Vert^2_{L^2(\opn{ker}(W_j))}\int_{\opn{ker}(W_j)^{\perp}}\vert u(W_jx_2)\vert^2\mr{d}x_2.\label{eq:weighted_2}
\end{align}
Combining Eqs.~\eqref{eq:weighted_1} and \eqref{eq:weighted_2}, we obtain
\begin{align*}
&\int_{\r{d_{j-1}}}\vert \partial^{\beta}\psi_j(x)(\partial^{\gamma}h\circ W_j)(x)\vert^2\mr{d}x
\le \Vert W_j\Vert^{2\vert\gamma\vert}\,\Vert \partial^{\beta}\psi_j\Vert^2_{L^2(\opn{ker}(W_j))}\int_{\opn{ker}(W_j)^{\perp}}\vert (\partial^{\gamma}h)(W_jx)\vert^2\mr{d}x\\
&\qquad\le \Vert W_j\Vert^{2\vert\gamma\vert}\,\Vert \partial^{\beta}\psi_j\Vert^2_{L^2(\opn{ker}(W_j))}\frac{1}{\vert\opn{det}(W_{\opn{r},j})\vert}\int_{\opn{ker}(W_j)^{\perp}}\vert \partial^{\gamma}h(x)\vert^2\mr{d}x.
\end{align*}
\red{As a result, we have
\begin{align*}
&\Vert\tilde{K}_{W_j}h\Vert^2_{H_{j-1}}=\sum_{\vert \alpha\vert\le s_{j-1}}c_{\alpha,s_{j-1},d_{j-1}}\Vert\partial^{\alpha}(\psi_j\cdot h\circ W_j)\Vert_{L^2(\r{d_{j-1}})}^2\\
&= \sum_{\vert \alpha\vert\le s_{j-1}}c_{\alpha,s_{j-1},d_{j-1}}\int_{\opn{ker}(W_j)}\int_{\opn{ker}(W_j)^{\perp}}\vert \partial^{\beta}\psi_{j,1}(x_1)\partial^{\alpha-\beta}(h\circ W_j)(x_2)\vert^2\mr{d}x_2\mr{d}x_1\\
&= \sum_{\vert \beta\vert\le s_{j-1}}\sum_{\vert \gamma\vert\le s_{j-1}-\vert\beta\vert}\!\!\!\!\!c_{\beta,s_{j-1},r_j}c_{\gamma,s_{j-1}-\vert\beta\vert,d_{j-1}-r_j}\int_{\opn{ker}(W_j)}\int_{\opn{ker}(W_j)^{\perp}}\!\!\!\!\!\!\!\!\!\!\!\!\!\!\!\vert\partial^{\beta}\psi_{j,1}(x_1)\partial^{\gamma}(h\circ W_j)(x_2)\vert^2\mr{d}x_2\mr{d}x_1\\
%&\le \sum_{\vert \alpha\vert\le s_{j-1}}c_{\alpha,s,d}\int_{\opn{ker}(W_j)}\vert \partial^{\gamma}\psi_{j,1}(x_1)\vert^2\mr{d}x_1\int_{\opn{ker}(W_j)^{\perp}}\vert(\partial^{\alpha-\gamma}h\circ W_j)(x_2)\vert^2\mr{d}x_2\\
&\le \sum_{\vert \beta\vert\le s_{j-1}}\sum_{\vert \gamma\vert\le s_{j-1}-\vert\beta\vert}\!\!\!\!\!\!\!\!c_{\beta,s_{j-1},r_j}c_{\gamma,s_{j-1}-\vert\beta\vert,d_{j-1}-r_j}\,\Vert \partial^{\beta}\psi_{j,1}\Vert^2_{L^2(\opn{ker}(W_j))}\frac{\Vert W_j\Vert^{2\vert\gamma\vert}}{\vert\opn{det}(W_{\opn{r},j})\vert}\Vert \partial^{\gamma}h\Vert_{L^2(\opn{ker}(W_j)^{\perp})}^2\\
&\le \frac{\max\{1,\Vert W_j\Vert^{2s_{j-1}}\}}{\opn{det}(W_{\opn{r},j})} \sum_{\vert \beta\vert\le s_{j-1}}\!\!\!\!\!c_{\beta,s_{j-1},r_j}\Vert \partial^{\beta}\psi_{j,1}\Vert_{L^2(\opn{ker}(W_j))}^2\sum_{\vert \gamma\vert\le s_{j-1}}\!\!\!\!\!c_{\gamma,s_{j-1},d_{j-1}-r_j}\Vert \partial^{\gamma}h\Vert_{L^2(\opn{ker}(W_j)^{\perp})}^2\\
&\le \frac{\max\{1,\Vert W_j\Vert^{2s_{j-1}}\}}{\opn{det}(W_{\opn{r},j})}\Vert \psi_{j,1}\Vert_{H_{j-1}(\opn{ker}(W_j))}^2\Vert h\Vert_{H_{j-1}(\opn{ker}(W_j)^{\perp})}^2,
%&\le \sum_{\vert \alpha\vert\le s_{j-1}}c_{\alpha,s,d}\int_{\r{d_{j-1}}}\vert \sum_{\vert\gamma\vert\le \vert\alpha\vert}\binom{\alpha}{\gamma}\partial^{\gamma}\psi_j(x)(\partial^{\alpha-\gamma}h\circ W_j)(x)\vert^2\mr{d}x\\
%&\le \sum_{\vert \alpha\vert\le s_{j-1}}\sum_{\vert\gamma\vert\le \vert\alpha\vert}c_{\alpha,s,d}\binom{\alpha}{\gamma}\Vert W_j\Vert^{2\vert\alpha-\gamma\vert}\,\Vert \partial^{\gamma}\psi_j\Vert^2_{L^2(\opn{ker}(W_j))}\frac{1}{\opn{det}(W_{\opn{r},j})}\int_{\opn{ker}(W_j)^{\perp}}\vert \partial^{\alpha-\gamma}h(x)\vert^2\mr{d}x\\
%&\le \frac{\max\{1,\Vert W_j\Vert^{2s_{j-1}}\}}{\opn{det}(W_{\opn{r},j})}\sum_{\vert\alpha\vert\le s_{j-1}}c_{\alpha,s,d}\vert\alpha\vert^{r_j}\sum_{\vert\gamma\vert\le \vert\alpha\vert}\Vert \partial^{\gamma}\psi_j\Vert^2_{L^2(\opn{ker}(W_j))}\Vert \partial^{\alpha-\gamma}h\Vert_{L^2(\opn{ker}(W_j)^{\perp})}^2\\
%&\le \frac{\max\{1,\Vert W_j\Vert^{2s_{j-1}}\}}{\opn{det}(W_{\opn{r},j})}{s_{j-1}}^{r_j}\sum_{\vert\gamma\vert\le s_{j-1}}\Vert \partial^{\gamma}\psi_j\Vert^2_{L^2(\opn{ker}(W_j))}\sum_{\vert\alpha\vert\le s_{j-1}}c_{\alpha,s,d}\Vert \partial^{\alpha}h\Vert_{L^2(\opn{ker}(W_j)^{\perp})}^2\\
%&\le \frac{\max\{1,\Vert W_j\Vert^{2s_{j-1}}\}}{\opn{det}(W_{\opn{r},j})}{s_{j-1}}^{r_j}\Vert \psi_j\Vert^2_{H_{j-1}(\opn{ker}(W_j))}\Vert h\Vert_{H_{j-1}(\opn{ker}(W_j)^{\perp})}^2,
\end{align*}
where $\beta$ in the second line of the above formula is the multi index whose elements corresponding to $\opn{ker}(W_j)$ equal to those of $\alpha$ and other elements are zero.
In addition, $r_j=\opn{dim}(\opn{ker}(W_j))$ and $c_{\alpha,s,d}=(2\pi)^ds!/\alpha!/(s-\vert\alpha\vert)!$.}
%Note that by the definition of $\psi_j$, the multi index $\gamma$ takes its value on only $\mathbb{Z}^{r_j}$.
Therefore, setting $h=f_j$, we have
\begin{align*}
\red{\Vert \tilde{K}_{W_j}f_j\Vert_{H_{j-1}}\le \Vert \psi_{j,1}\Vert_{H_{j-1}(\opn{ker}(W_j))}\frac{G_j}{\vert\opn{det}(W_{{\opn{r},j}})\vert^{1/2}}\max\{1,\Vert W_j\Vert^{s_{j-1}}\}\Vert{f_j}\Vert_{H_j},}
\end{align*}
which completes the proof of the proposition.
\end{proof}

\begin{mythm}[Proposition~\ref{prop:combine}]
Let $\tilde{\mathbf{x}}=(\tilde{x}_1,\ldots,\tilde{x}_n)\in(\r{d_l})^n$.
Let $v_n(\omega)=\sum_{i=1}^ns_i(\omega)k_{p_0}(\cdot,x_i)$, $\tilde{v}_n(\omega)=\sum_{i=1}^ns_i(\omega)k_{p_l}(\cdot,\tilde{x}_i)$, and $\gamma_n=\Vert v_n\Vert_{H_0}/\Vert \tilde{v}_n\Vert_{H_l}$.
Then, we have
\begin{align*}
\hat{R}_n(\mathbf{x},F_{l,\opn{comb}}(C,D))
&\le\sup_{\substack{W_j\in\mcl{W}_j(C,D)\\ (j=1,\ldots,l)}}\prod_{j=1}^l\bigg\Vert \frac{p_{j-1}}{p_{j-1}\circ W_j^*}\bigg\Vert_{\mcl{R}(W_j),\infty}^{1/2}\frac{G_j\Vert K_{\sigma_j}\Vert}{{\opn{det}({W_j^*W_j)}}^{1/4}}\\
&\cdot\bigg(\hat{R}_n(\tilde{\mathbf{x}},F_{l+1:L})+\frac{B}{\sqrt{n}}\inf_{h_1\in F_{l+1:L}}\!\!\!\mr{E}^{\frac12}\bigg[\sup_{h_2\in F_{l+1:L}}\!\bigg\Vert h_1-\frac{\Vert h_2\Vert_{H_l}\gamma_n}{\Vert \tilde{v}_n\Vert_{H_l}}\tilde{v}_n\bigg\Vert_{H_l}^2\bigg]\bigg).
\end{align*}
\end{mythm}
\begin{proof}
To simplify the notation, let
\begin{equation*}
    \beta_j=\bigg\Vert \frac{p_{j-1}}{p_{j-1}\circ W_j^*}\bigg\Vert_{\mcl{R}(W_j),\infty}^{1/2}\frac{G_j\Vert K_{\sigma_j}\Vert}{{\opn{det}({W_j^*W_j)}}^{1/4}}.
\end{equation*}
%In addition, let $\tilde{v}_n(\omega)=\sum_{i=1}^ns_i(\omega)k_{p_l}(\cdot,x_i)$.
By the reproducing property of $H_0$ and the Cauchy--Shwartz inequality, we have
\begin{align*}
&\hat{R}_n(\mathbf{x},F_{l,\opn{comb}}(C,D))
=\frac{1}{n}\mr{E}\bigg[\sup_{f\in{F}_{l,\opn{comb}}(C,D)}\sum_{i=1}^ns_if(x_i)\bigg]
=\frac{1}{n}\mr{E}\bigg[\sup_{f\in{F}_{l,\opn{comb}}(C,D)}\bracket{v_n,f}_{H_0}\bigg]\\
&\le \frac{1}{n}\mr{E}\bigg[\sup_{f\in{F}_{l,\opn{comb}}(C,D)}\Vert v_n\Vert_{H_0} \Vert K_{W_1}K_{b_1}K_{\sigma_1}\cdots K_{W_{l}}K_{b_{l}}K_{\sigma_{l}} K_{W_{l+1}}K_{b_{l+1}}K_{\sigma_{l+1}}\cdots K_{W_L}K_{b_L}g\Vert_{H_0} \bigg]\\
&\le \frac{1}{n}\mr{E}\bigg[\sup_{f\in{F}_{l,\opn{comb}}(C,D)}\Vert v_n\Vert_{H_0} \prod_{j=1}^l\bigg\Vert \frac{p_{j-1}}{p_{j-1}\circ W_j^*}\bigg\Vert_{\mcl{R}(W_j),\infty}^{1/2}\frac{G_j\Vert K_{\sigma_j}\Vert}{{\opn{det}({W_j^*W_j)}}^{1/4}}\\
&\qquad\qquad\qquad\qquad\cdot\Vert K_{W_{l+1}}K_{b_{l+1}}K_{\sigma_{l+1}}\cdots K_{W_L}K_{b_L}g\Vert_{H_l} \bigg]\\
&\le \frac{1}{n}\mr{E}\bigg[\sup_{\substack{W_j\in\mcl{W}_j(C,D)\\(j=1,\ldots,l)}}\prod_{j=1}^l\beta_j\sup_{h_2\in F_{l+1:L}}\bracket{\tilde{v}_n,\frac{\Vert h_2\Vert_{H_l}\Vert v_n\Vert_{H_0}}{\Vert \tilde{v}_n\Vert_{H_l}^2}\tilde{v}_n}_{H_l} \bigg]\\
&= \frac{1}{n}\sup_{\substack{W_j\in\mcl{W}_j(C,D)\\(j=1,\ldots,l)}}\prod_{j=1}^l\beta_j
\mr{E}\bigg[\sup_{h_2\in F_{l+1:L}}\bigg(\bracket{\tilde{v}_n,h}_{H_l}+\bracket{\tilde{v}_n,\frac{\Vert h_2\Vert_{H_l}\gamma_n}{\Vert \tilde{v}_n\Vert_{H_l}}\tilde{v}_n-h}_{H_l} \bigg)\bigg]\\
&\le \frac{1}{n}\sup_{\substack{W_j\in\mcl{W}_j(C,D)\\(j=1,\ldots,l)}}\prod_{j=1}^L\beta_j\mr{E}\bigg[\sup_{h_1\in F_{l+1:L}}\bracket{\tilde{v}_n,h_1}_{H_l}+\sup_{h_2\in F_{l+1:L}}\bracket{\tilde{v}_n,\frac{\Vert h_2\Vert_{H_l}\gamma_n}{\Vert \tilde{v}_n\Vert_{H_l}}\tilde{v}_n-h}_{H_l}\bigg]\\
&\le \sup_{\substack{W_j\in\mcl{W}_j(C,D)\\(j=1,\ldots,l)}}\prod_{j=1}^L\beta_j \bigg(\hat{R}_n(\mathbf{x},F_{l+1:L})+\frac{1}{n}\mr{E}\bigg[\Vert \tilde{v}_n\Vert_{H_l}\sup_{h_2\in F_{l+1:L}} \bigg\Vert \frac{\Vert h_2\Vert_{H_l}\gamma_n}{\Vert \tilde{v}_n\Vert_{H_l}}\tilde{v}_n-h\bigg\Vert_{H_l}\bigg]\bigg)
\end{align*}
for any $h\in F_{l+1:L}$.
%Moreover,
%\begin{align*}
%\mr{E}\bigg[\frac{\Vert v_n\Vert_{H_0}}{\Vert\tilde{v}_n\Vert_{H_l}}\bigg]
%&=\mr{E}\bigg[\bigg(\frac{\sum_{i,j=1}^ns_i(\omega)s_j(\omega)k_{p_0}(x_i,x_j)}{\sum_{i,j=1}^ns_i(\omega)s_j(\omega)k_{p_l}(x_i,x_j)}\bigg)^{1/2}\bigg]\\
%&\le \mr{E}\bigg[\bigg(\frac{\sum_{i,j=1}^ns_i(\omega)s_j(\omega)k_{p_0}(x_i,x_j)}{\sum_{i,j=1}^ns_i(\omega)s_j(\omega)k_{p_l}(x_i,x_j)}\bigg)\bigg]^{1/2}
%\end{align*}
Moreover, again by the Cauchy--Schwartz inequality, we have
\begin{align*}
\mr{E}\bigg[\Vert \tilde{v}_n\Vert_{H_l}\sup_{h_2\in F_{l+1:L}} \bigg\Vert \frac{\Vert h_2\Vert_{H_l}\gamma_n}{\Vert \tilde{v}_n\Vert_{H_l}}\tilde{v}_n-h\bigg\Vert_{H_l}\bigg]
&\le \mr{E}^{\frac12}[\Vert v_n\Vert_{H_l}^2]\mr{E}^{\frac12}\bigg[\sup_{h_2\in F_{l+1:L}} \bigg\Vert \frac{\Vert h_2\Vert_{H_l}\gamma_n}{\Vert \tilde{v}_n\Vert_{H_l}}\tilde{v}_n-h\bigg\Vert^2_{H_l}\bigg]\\
&\le B\sqrt{n}\mr{E}^{\frac12}\bigg[\sup_{h_2\in F_{l+1:L}} \bigg\Vert \frac{\Vert h_2\Vert_{H_l}\gamma_n}{\Vert \tilde{v}_n\Vert_{H_l}}\tilde{v}_n-h\bigg\Vert^2_{H_l}\bigg],
\end{align*}
where the second inequality is derived in the same manner as the proof of Theorem~\ref{thm:nonsingular}.
Since $h\in F_{l+1:L}$ is arbitrary, we obtain the final result.
\end{proof}

\section{Details of Remark~\ref{rmk:k_sigma}}\label{ap:k_sigma}
\red{%We consider deriving a bound of $\VVert K_{\sigma_j}\VVert$.
To derive a bound $\Vert K_{\sigma}\Vert$, we bound $\sum_{\vert \alpha\vert\le s}c_{\alpha,s,d}\Vert \partial^{\alpha}(h\circ\sigma)\Vert^2$ by $\sum_{\vert \alpha\vert\le s}c_{\alpha,s,d}\Vert\partial^{\alpha}h\Vert^2$.
As the proof of the boundedness of $\Vert K_{\sigma}\Vert$, one strategy is using the Fa\`{a} di Bruno formula.}

\red{By the Fa\`{a} di Bruno formula, we have
\begin{align*}
    \partial^{\alpha}(h\circ\sigma)(x)=\sum_{\vert\beta\vert\le \vert\alpha\vert}\partial^{\beta}h(\sigma(x))\sum_{i=1}^{\vert\alpha\vert}\sum_{\gamma\in p(\alpha,\beta)}\alpha!\prod_{j=1}^i\frac{(\partial^{l_j}\sigma(x))^{k_j}}{k_j!(l_j!)^{\vert k_j\vert}},
\end{align*}
where $p(\alpha,\beta)=\{\gamma=(k_1,\ldots,k_s,l_1,\ldots,l_s)\,\mid\,0\le l_1\le\cdots\le l_s,\ \sum_{j=1}^sk_j=\alpha,\ \sum_{j=1}^s\vert k_j\vert l_j=\beta \}$.
If $\sigma$ is elementwise, $l_j$ and $k_j$ are chosen so that each of them has only one nonzero element, such as $(\vert l_j\vert,0,\ldots,0)$.
By counting the number of terms in the summation and calculating the coefficients of the terms, we can derive a bound of $\Vert K_{\sigma}\Vert$.
However, analytically representing the number of terms in the summation is a challenging task.
%If $p(\omega)=1/(1+\Vert \omega\Vert^2)^s$ for $s\in\mathbb{N}$, the norm $\Vert K_{\sigma}\Vert$ is bounded by $\max_{\vert\beta\vert\le s}\sum_{\vert\gamma\vert,\vert\delta\vert\le s}C_{\beta,\gamma,\delta}$, where $C_{\beta,\gamma,\delta}$ is determined by the derivative of $\sigma$.
%If $\sigma(x)=[\tilde{\sigma}(x_1),\ldots,\tilde{\sigma}(x_d)]$ for $x=(x_1,\ldots,x_d)$ and some $\tilde{\sigma}:\mathbb{R}\to\mathbb{R}$, i.e., elementwise, then we only need to consider the derivatives of $\tilde{\sigma}$.
%If $\sigma$ is elementwise with a map $\tilde{\sigma}:\mathbb{R}\to\mathbb{R}$,
%the bound is reduced to $\max(C_1,\ldots,C_s)$, where 
%$$C_i=\Vert (\tilde{\sigma}^{-1})'\Vert_{\infty}\sum_{j=1}^i\sum_qi!\prod_{k=1}^{i-j+1}(\Vert\tilde{\sigma}^{(k)}\Vert_{\infty}/k!)^{q_k}/q_k.$$
%$$C_j=\Vert (\tilde{\sigma}^{-1})'\Vert_{\infty}\sum_{i=1}^s\sum_qi!\prod_{k=1}^{i-j+1}(\Vert\tilde{\sigma}^{(k)}\Vert_{\infty}/k!)^{q_k}/q_k!.$$
%The sum is taken over nonnegative integers $q_1,\ldots,q_{i-j+1}$ such that $\sum_{k=1}^{i-j+1}q_k=i$ and $\sum_{k=1}^{i-j+1}kq_k=d$.
%Note that $s$ should be chosen as $s>d/2$.
%The dependence of $\max(C_1,\ldots,C_s)$ on $s$ is nearly exponential, which means the dependency on $d$ is also nearly exponential.
We admit that this strategy does not give us a tight bound.
There may be a more sophisticated approach to deriving a tight bound of $\Vert K_{\sigma}\Vert$.
However, the main goal of this paper is to investigate how the property of the weight matrices affects the generalization property.
Since $\Vert K_{\sigma}\Vert$ does not depend on the weight matrices, if we assume the structure of the network is given, the property of the weight matrices does not affect $\Vert K_{\sigma}\Vert$.
As we stated in Section~\ref{sec:conclusion}, investigating $\Vert K_{\sigma}\Vert$ and deriving a tighter bound is future work.}

\section{Details of Remark~\ref{rmk:G_j}}\label{ap:G_j}
We first show that $G_j$ is bounded by a constant that is independent of $f_j$.
Since $G_j$ depends on $\opn{ker}(W_j)$, we denote it by $G_j(\opn{ker}(W_j))$.
Let $\mcl{W}$ be a $k$-dimensional subspace of $\r{d}$ and $\{u_1,\ldots,u_k\}$ be an orthonormal basis of $\mcl{W}$.
We consider the average of $G_j(\mcl{W})$ on the Grassmann manifold $\mcl{G}_{d,k}$.
For this purpose, we fix an orthonormal basis $e_1,\ldots,e_d$ on $\mathbb{R}^d$ and denote by $\partial_if$ the derivative of $f$ in the direction of $e_i$.
In addition, we denote $\partial_U^{\alpha}=\prod_{j=1}^k(\sum_{i=1}^du_{j,i}\partial_i)^{\alpha_j}$, where $u_{j,i}=\bracket{u_j,e_i}$.
Let $s\in\mathbb{N}$.
Then, we have
\begin{align}
\Vert f\Vert_{H^s(\mcl{W})}^2
&=\sum_{\vert\alpha\vert\le s}c_{\alpha,s,d}\Vert \partial_U^{\alpha}f\Vert_{L^2(\mcl{W})}^2
\le \sum_{l=1}^s\sum_{\vert\alpha\vert=l}\sum_{\vert\beta\vert= l}c_{\alpha,s,d}D_{s,d,k}\Vert \partial^{\beta}f\Vert_{L^2(\mcl{W})}^2\nn\\
&=D_{s,d,k}\sum_{l=1}^s\sum_{\vert\alpha\vert=l}c_{\alpha,s,d}\sum_{\vert\beta\vert= l}\Vert \partial^{\beta}f\Vert_{L^2(\mcl{W})}^2\nn\\
&=D_{s,d,k}\sum_{\vert\alpha\vert\le s}c_{\alpha,s,d}\sum_{\vert\beta\vert\le s}\Vert \partial^{\beta}f\Vert_{L^2(\mcl{W})}^2\nn\\
&\le D_{s,d,k}(2\pi)^d(d+1)^s\sum_{\vert\beta\vert\le s}c_{\beta,s,d}\Vert \partial^{\beta}f\Vert_{L^2(\mcl{W})}^2.\label{eq:change_cor}
\end{align}
Here, we used the Cauchy--Schwartz inequality and derive the second inequality as follows:
\begin{align*}
\bigg\vert\prod_{j=1}^k\bigg(\sum_{i=1}^du_{j,i}\partial_i\bigg)^{\alpha_j}f\bigg\vert^2
&\le \bigg(\prod_{j=1}^k\bigg(\sum_{i=1}^du_{j,i}^2\bigg) ^{\alpha_j}\bigg)\bigg(\prod_{j=1}^k\bigg(\sum_{i=1}^d\mcl{D}_i^2\bigg)^{\alpha_j}f\bigg)\\
&=\prod_{j=1}^k\bigg(\sum_{i=1}^d\mcl{D}_i^2\bigg)^{\alpha_j}f
= \prod_{j=1}^k\sum_{\vert\beta\vert=\alpha_j}\binom{\alpha_j}{\beta}\mcl{D}_{\beta}^2f
\le D_{s,d,k}\sum_{\vert\beta\vert=\vert\alpha\vert}\vert\partial^{\beta}f\vert^2
\end{align*}
for some $D_{s,d,k}>0$ that depends on $s$, $d$, and $k$.
Here $\mcl{D}_i^2$ is the operator defined as $\mcl{D}_i^2f=\vert\partial_if\vert^2$ and $\mcl{D}_{\beta}^2f=\vert\partial_i^{\beta_i}f\vert^2$.
Assume $\{x\in\r{d}\,\mid\,\Vert x\Vert\le\epsilon\}$ is not contained in the support of $f$.
In this case, we have
\begin{align}
\sum_{\vert\beta\vert\le s}c_{\beta,s,d}\Vert \partial^{\beta}f\Vert_{L^2(\mcl{W})}^2
\le \epsilon^{k-d}\sum_{\vert\beta\vert\le s}c_{\beta,s,d}\int_{\mcl{W}}\vert\partial^{\beta}f(x)\vert^2\vert x\vert^{d-k}\mr{d}x\label{eq:shpere}
\end{align}
Integrating the both sides of \eqref{eq:shpere} and
by Theorem 2 by \citet{rubin18}, we have
\begin{align*}
\int_{\mcl{G}_{d,k}}\sum_{\vert\beta\vert\le s}c_{\beta,s,d}\Vert \partial^{\beta}f\Vert_{L^2(\mcl{W})}^2\mr{d}\mcl{W}
\le \epsilon^{k-d}\sum_{\vert\beta\vert\le s}c_{\beta,s,d}\frac{\sigma_d}{\sigma_k}\int_{\r{d}}\vert\partial^{\beta}f(x)\vert^2\mr{d}x,
\end{align*}
where $\sigma_d=2\pi^{d/2}/\Gamma(d/2)$ and $\Gamma$ is the Gamma function.
In addition, $\mr{d}\mcl{W}$ is the integration with respect to the $O(d)$-invariant probability
measure on $\mcl{G}_{d,k}$ and $O(d)$ is the orthogonal group in $\r{d}$.
Combining with Eq.~\eqref{eq:change_cor}, we obtain
\begin{align*}
\int_{\mcl{G}_{d,k}}\Vert f\Vert_{H^s(\mcl{W})}^2\mr{d}\mcl{W}
\le D_{s,d,k}(2\pi)^{d}(d+1)^s\epsilon^{k-d}\frac{\sigma_d}{\sigma_k}\Vert f\Vert_{H^s(\r{d})}^2.
\end{align*}
As a result, we obtain
\begin{align}
\int_{\mcl{G}_{d_j,d_{j-1}}}G_j(\mcl{W})^2\mr{d}\mcl{W}
&\le 
\int_{\mcl{G}_{d_j,d_{j-1}}} \frac{\Vert f\Vert_{H^{s_j}(\mcl{W})}^2}{\Vert f\Vert_{H_j}^2}\mr{d}\mcl{W}\nn\\
&\le D_{s_j,d_j,d_{j-1}}(d_j+1)^{s_j}(2\pi)^{d_j}\epsilon^{d_{j-1}-d_j}\frac{\sigma_{d_j}}{\sigma_{d_{j-1}}}.\label{eq:indep}
\end{align}
We admit that this inequality is not tight from the perspective of the dependence on $s_j$, $d_j$, and $d_{j-1}$.
However, surprisingly, the inequality~\eqref{eq:indep} shows 
that the factor $G_j$ is bounded by a constant that is independent of $f_j$ if $\{x\in\r{d}\,\mid\,\Vert x\Vert\le\epsilon\}$ is not contained in the support of $f_j$.
The assumption about $\{x\in\r{d}\,\mid\,\Vert x\Vert\le\epsilon\}$ can be satisfied if the input is transformed so that it does not take the value near $0$.

One of the reasons for the looseness of the above bound is that we upper bounded $\Vert f\Vert_{H^{s_{j-1}}(\mcl{W})}$ by $\Vert f\Vert_{H^{s_j}(\mcl{W})}$.
If $f_j$ can be controlled by the Gaussian, then the factor $G_j$ does not seriously affect the bound.
Indeed, let $\phi_c(x)=\mathrm{e}^{-\pi^2\Vert x\Vert^2/c}$.
In the case of $\vert \hat{f}_j(\omega_1+\omega_2)\vert\ge \vert\hat{f}_j(\omega_1)\vert\phi_c(\omega_2)$ for $\omega_1\in\mcl{R}(W_j)$ and $\omega_2\in\mcl{R}(W_j)^{\perp}$, we can evaluate $G_j$ as follows:
\begin{align}
\Vert &\phi_c\Vert_{H^s(\mathbb{R}^d)}^2=\int_{\omega\in\mathbb{R}^d}\vert\phi_c(\omega)\vert^2(1+\Vert\omega\Vert^2)^s\mathrm{d}\omega
=\int_{\omega\in\mathbb{R}^d}\mathrm{e}^{-2\pi^2\Vert\omega\Vert^2/c}(1+\Vert\omega\Vert^2)^s\mathrm{d}\omega\nn\\
&= \int_{0}^{\infty}\mathrm{e}^{-2\pi^2r^2/c}(1+r^2)^sr^{d-1}\mathrm{d}r\cdot 2\pi\prod_{i=1}^{d-2}\tilde{c}_i\nn\\
&= 2\pi\int_{0}^{\infty}\mathrm{e}^{-2\pi^2r^2/c}\sum_{i=0}^s\binom{s}{i}r^{2i+d-1}\mathrm{d}r\prod_{i=1}^{d-2}\tilde{c}_i\nn\\
&= 2\pi\sum_{i=0}^s\binom{s}{i}\int_{0}^{\infty}\mathrm{e}^{-t}t^{i+(d-1)/2}\bigg(\frac{c}{2\pi^2}\bigg)^{i+(d-1)/2}\frac{\sqrt{c}}{\pi\sqrt{8 t}}\mathrm{d}t\prod_{i=1}^{d-2}\tilde{c}_i\nn\\
&\sim c^{s+d/2}\pi^{-2s-d+1}2^{-s-d/2}\int_{0}^{\infty}\mathrm{e}^{-t}t^{s+d/2-1}\mathrm{d}t\prod_{i=1}^{d-2}\tilde{c}_i\nn\\
&=c^{s+d/2}\pi^{-2s-d+1}2^{-s-d/2}\Gamma(s+d/2)\prod_{i=1}^{d-2}\tilde{c}_i,\label{eq:gaussian_eval}
%&\sim c^{s-d/2}\pi^{-2s+1}2^{-s+d/2}\int_{0}^{\infty}\mathrm{e}^{-t}t^{s+d/2-1}\mathrm{d}r\prod_{i=1}^{d-2}\tilde{c}_i
%=c^{s-d/2}\pi^{-2s+1}2^{-s+d/2}\Gamma(s+d/2)\prod_{i=1}^{d-2}\tilde{c}_i,
\end{align}
where $a\sim b$ means $a/b\to 1$ as $s\to\infty$ and $d\to\infty$. In addition, $\tilde{c}_i=\int_{0}^{\pi}\sin^{i}\theta\mr{d}\theta$.
Thus, we have
\begin{align*}
G_j^2&=\frac{\Vert f_j|_{\mathcal{R}(W_j)}\Vert_{H_{{p_{j-1}}}(\mathcal{R}(W_j))}^2}{\Vert f_j\Vert_{H_{j}}^2}
=\frac{\int_{\mcl{R}(W_j)}\vert\hat{f}_j(\omega_1)\vert^2(1+\Vert\omega_1\Vert^2)^{s_{j-1}}\mathrm{d}\omega_1}{\int_{\mathbb{R}^d}\vert\hat{f}_j(\omega_1+\omega_2)\vert^2(1+\Vert\omega_1\Vert^2+\Vert\omega_2\Vert^2)^{s_j}\mathrm{d}\omega}\\
&\le \frac{\int_{\mcl{R}(W_j)}\vert\hat{f}_j(\omega_1)\vert^2(1+\Vert\omega_1\Vert^2)^{s_{j-1}}\mathrm{d}\omega_1}{\int_{\mathbb{R}^d}\vert\hat{f}_j(\omega_1)\vert^2\phi_c(\omega_2)^2(1+\Vert\omega_1\Vert^2+\Vert\omega_2\Vert^2)^{s_j}\mathrm{d}\omega}\\
&\le\frac{\int_{\mcl{R}(W_j)}\vert\hat{f}_j(\omega_1)\vert^2(1+\Vert\omega_1\Vert^2)^{s_{j-1}}\mathrm{d}\omega_1}{\int_{\mathbb{R}^d}\vert\hat{f}_j(\omega_1)\vert^2\phi_c(\omega_2)^2(1+\Vert\omega_1\Vert^2)^{s_{j-1}}(1+\Vert\omega_2\Vert^2)^{\tilde{s}_j}\mathrm{d}\omega}\\
&=\frac{1}{\int_{\mcl{R}(W_j)^{\perp}}\phi_c(\omega_2)^2(1+\Vert\omega_2\Vert^2)^{\tilde{s}_j}\mathrm{d}\omega}\\
&\sim \bigg(c^{\tilde{s}_j+\tilde{d}_j/2}\pi^{-2\tilde{s}_j-\tilde{d}_j+1}2^{-\tilde{s}_j-\tilde{d}_j/2}\Gamma(\tilde{s}_j+{\tilde{d}_j}/{2})\prod_{i=1}^{\tilde{d}_j-2}\tilde{c}_i\bigg)^{-1},
%= \frac{c^{s_{j-1}}\pi^{-2s_{j-1}-(d_{j-1}-1)/2}\Gamma(s_{j-1}+d_{j-1}/2)}{c^{s_j}\pi^{-2s_j-(d_j-1)/2}\Gamma(s_j+d_j/2)}\prod_{i=d_{j-1}-1}^{d_j-2}\tilde{c}_i\\
%&\le c^{-(s_{j}-s_{j-1})}\pi^{2(s_j-s_{j-1})+(d_j-d_{j-1})/2}\bigg(\prod_{i=d_{j-1}-1}^{d_j-2}\tilde{c}_i\bigg)(s_j+d_j/2)^{-(s_j-s_{j-1})-(d_j-d_{j-1})/2}.
\end{align*}
where $\tilde{s}_j=s_j-s_{j-1}$ and $\tilde{d}_j=d_j-d_{j-1}$.
Note that since $s_j\ge s_{j-1}$ and $d_j\ge d_{j-1}$, $G_j$ becomes small as $c$ becomes large and $s_j$ and $d_j$ becomes large.
The assumption $\vert \hat{f}_j(\omega_1+\omega_2)\vert\ge \vert\hat{f}_j(\omega_1)\vert\phi_c(\omega_2)$ means that $\hat{f}_j$ decays slower or equal to the speed of the Gaussian in the direction of $\omega_2$.
Even if $c$ is chosen small to satisfy the condition, the factor $\Gamma(\tilde{s}_j+{\tilde{d}_j}/{2})$ becomes sufficiently large if $s_j$ is sufficiently large.
As a result, the upper bound becomes sufficiently small if $s_j$ is sufficiently large.

\red{Moreover, the factor $G_j$ can alleviate the dependency of $\Vert K_{\sigma_j}\Vert$ on $d_j$ and $s_j$.
Even if the dependence of $\Vert K_{\sigma_j}\Vert$ on $d_j$ and $s_j$ is exponential, since the exponents appeared in the above evaluation are $-(d_j-d_{j-1})$ and $-(s_j-s_{j-1})$, we can expect that the dependency on $d_j$ and $s_j$ is reduced to the dependency on $d_{j-1}$ and $s_{j-1}$.}

\section{Details of Remark~\ref{rmk:bump_func}}\label{ap:bump_function}
\red{As an example of $\psi$, we can use a bump function ${\psi}(x)=1-g((\Vert x\Vert^2-a^2)/(b^2-a^2))$ on $\mathbb{R}^d$ for $0<a<b$, where $g(x)=f(x)/(f(x)-f(1-x))$, $f(x)=\mathrm{e}^{-1/x}$ for $x>0$, and $f(x)=0$ for $x\le 0$. %and $\Vert x\Vert$ is the Euclidean norm.
In this case, the support of ${\psi}$ is $\{x\in\mathbb{R}^d\,\mid\,\Vert x\Vert\le b\}$ and $\psi(x)=1$ for $x\in \{x\in\mathbb{R}^d\,\mid\,\Vert x\Vert\le a\}$. 
If the output of each layer is bounded on $\{x\in\mathbb{R}^d\,\mid\,\Vert x\Vert\le a\}$, %$\{x\in\mathbb{R}^d\,\mid\,\Vert x\Vert=b\}$, 
then we can obtain a modified network that is exactly the same on $\{x\in\mathbb{R}^d\,\mid\,\Vert x\Vert\le a\}$ with this bump function.
%For $\mathbb{R}^d$, we can construct a bump function $\psi(x)=\prod_{k=1}^d\tilde{\psi}(x_k)$ for $x=(x_1,\ldots,x_d)$.
If $a$ and $b$ are large, then the support of $\psi$ becomes large, which makes the $L^2$-norm of $\psi$ large, and the Sobolev norm of $\psi$ also becomes large.
If $a-b$ is small, then $\vert \psi(x)-\psi(y)\vert/\Vert x-y\Vert$ for $\Vert x\Vert^2=a$ and $y=(b/a)x$ becomes large.
Thus, the $L^2$-norms of the derivatives of $\psi$ are expected to be large, and the Sobolev norm of $\psi$ also becomes large if $a-b$ is small.}

\section{Details of Remark~\ref{rmk:g_grow}}\label{ap:g_grow}
The factor $\Vert \tilde{g}\Vert_{\tilde{H}_L}$ grows as $\Omega$ becomes large,
\red{where $\Omega$ is the region such that $\tilde{f}(x)=f(x)$ on $x\in\Omega$ for the network $f$ and the modified network $\tilde{f}$. }
Indeed, if $p_L(\omega)=1/(1+\Vert\omega\Vert^2)^{s_L}$ for $s_L\in\mathbb{N}$, then we have
\begin{align*}
\Vert \tilde{g}\Vert_{\tilde{H}_L}^2&=
\sum_{\vert \alpha\vert\le s_L}\red{c_{\alpha,s_L,\delta_{L-1}+d_L}}\Vert\partial^{\alpha}(\psi\cdot g)\Vert_{L^2(\r{\delta_L})}^2\\
&=\red{\sum_{\vert \alpha\vert\le s_L}c_{\alpha,s_L,\delta_{L-1}+d_L}\Vert\partial^{\beta}\psi\partial^{\alpha-\beta}g\Vert_{L^2(\r{\delta_{L-1}+d_L})}^2}\\
&=\red{\sum_{\vert \alpha\vert\le s_L}c_{\alpha,s_L,\delta_{L-1}+d_L}\Vert\partial^{\beta}\psi\Vert_{L^2(\r{\delta_{L-1}})}^2\Vert\partial^{\alpha-\beta}g\Vert_{L^2(\r{d_L})}^2,}
%&=\int_{\r{d_L}}\int_{\r{\delta_{L-1}}} \vert g(x_1)\psi(x_2)\vert^2\mr{d}x_2\mr{d}x_1
%+\int_{\r{d_L}}\int_{\r{\delta_{L-1}}} \mathbf{1}^T\vert g(x_1)(\nabla\psi)(x_2)\vert^2\mr{d}x_2\mr{d}x_1\\
%&\qquad +\int_{\r{d_L}}\int_{\r{\delta_{L-1}}} \mathbf{1}^T\vert (\nabla g)(x_1)\psi(x_2)\vert^2\mr{d}x_2\mr{d}x_1
%+\int_{\r{d_L}} \int_{\r{\delta_{L-1}}}\mathbf{1}^Tg(x_1)\vert (H\psi)%(x_2)\vert^2\mathbf{1}\mr{d}x_2\mr{d}x_1\\
%&\qquad +2\int_{\r{d_L}}\int_{\r{\delta_{L-1}}}\mathbf{1}^T\vert (\nabla g)(x_1)(\nabla\psi)(x_2)^T\vert^2\mathbf{1}\mr{d}x_2\mr{d}x_1\\
%&\qquad +\int_{\r{d_L}}\int_{\r{\delta_{L-1}}} \mathbf{1}^T\vert (H g)(x_1)\psi(x_2)\vert^2\mathbf{1}\mr{d}x_2\mr{d}x_1+\cdots\\
%&= \Vert g\Vert^2_{L^2(\r{d_L})}\Vert \psi\Vert^2_{L^2(\r{\delta_{L-1}})}
%+\red{s}\Vert g\Vert^2_{L^2(\r{d_L})}\sum_{\vert\alpha\vert=1}\Vert \partial^{\alpha}\psi\Vert^2_{L^2(\r{\delta_{L-1}})}\\
%&\quad+\red{s}\sum_{\vert\alpha\vert=1}\Vert \partial^{\alpha} g\Vert^2_{L^2(\r{d_L})}\Vert \psi\Vert^2_{L^2(\r{\delta_{L-1}})}
%+\red{\frac{s(s-1)}{\alpha!}}\Vert g\Vert^2_{L^2(\r{d_L})}\sum_{\vert\alpha\vert=2}\Vert \partial^{\alpha}\psi\Vert^2_{L^2(\r{\delta_{L-1}})}\\
%&\quad+2\bigg(\sum_{\vert\alpha\vert=1}\Vert \partial^{\alpha}g\Vert^2_{L^2(\r{d_L})}\bigg)\bigg(\sum_{\vert\alpha\vert=1}\Vert \partial^{\alpha}\psi\Vert^2_{L^2(\r{\delta_{L-1}})}\bigg)
%+\sum_{\vert\alpha\vert=2}\Vert \partial^{\alpha}g\Vert^2_{L^2(\r{d_L})}\Vert \psi\Vert^2_{L^2(\r{\delta_{L-1}})}\cdots,
\end{align*}
\red{where $\beta$ is the multi index whose elements corresponding to $\delta_{L-1}$ equals to those of $\alpha$ and the other elements are zero.}
The factor $\Vert \psi\Vert^2_{L^2(\r{\delta_{L-1}})}$ becomes large as the volume of $\Omega$ gets large.
Thus, under the condition that $\Vert \partial^{\alpha}\psi\Vert_{L^2(\r{\delta_{L-1}})}$ does not change, $\Vert \tilde{g}\Vert_{\tilde{H}_L}$ becomes large as $\Omega$ gets large. 
\red{Indeed, assume $\delta_L=1$, $\psi(x)=1$ for $x\in\Omega$, and $\Omega$ is an interval (e.g., $\psi$ is the bump function defined in Appendix~\ref{ap:bump_function}).
Then we can create a new function $\tilde{\psi}$ that satisfies $\Vert \tilde{\psi}^{(i)}\Vert_{L^2(\mathbb{R})}=\Vert \psi^{(i)}\Vert_{L^2(\mathbb{R})}$ for any $i=1,2,\ldots$, from $\psi$ as follows.
Here, $\psi^{(i)}$ is the $i$th derivative of $\psi$.
For simplicity, we consider the case where $\Omega=[-a,a]$ for some $a>0$.
For $c>0$, we set $\tilde{\psi}(x)=1$ for $x\in [0,c]$, $\tilde{\psi}(x)=\psi(x-c)$ for $x\in (c,\infty)$, $\tilde{\psi}(x)=1$ for $x\in [-c,0]$, $\tilde{\psi}(x)=\psi(x+c)$ for $x\in (-\infty,0)$.}

\section{Details of Remark~\ref{rmk:psi_norm}}\label{ap:psi_norm}
\red{In Proposition~\ref{prop:weighted_koopman}, by combining with the factor $G_j$, the norm of $\psi_j$ can be canceled as follows.
Let $p_j(\omega)=1/(1+\Vert\omega\Vert^2)^{s_j}$ and $s_j=2s_{j-1}$.
We have
\begin{align*}
&\Vert\psi_j\Vert^2_{H_{j-1}(\mathrm{ker}(W_j))}\Vert f_j\Vert^2_{H_{j-1}(\mathrm{ker}(W_j)^{\perp})}\\
&\qquad=\int_{\mathrm{ker}(W_j)}\vert \hat{\psi_j}(\omega_1)\vert^2(1+\Vert\omega_1\Vert^2)^{s_{j-1}}\mathrm{d}\omega_1\int_{\mathrm{ker}(W_j)^{\perp}}\vert \hat{f_j}(\omega_2)\vert^2 (1+\Vert\omega_2\Vert^2)^{s_{j-1}}\mathrm{d}\omega_2\\
&\qquad\le\int_{\mathrm{ker}(W_j)}\int_{\mathrm{ker}(W_j)^{\perp}}\vert \hat{\psi_j}(\omega_1)\hat{f_j}(\omega_2)\vert^2(1+\Vert\omega_1+\omega_2\Vert^2)^{s_{j}}\mathrm{d}\omega_1\mathrm{d}\omega_2\\
&\qquad=\Vert\psi_{j,1}f_j|_{\mathrm{ker}(W_j)^{\perp}}\Vert_{H_j(\r{d_{j-1}})}^2.
\end{align*}
%\begin{align*}
%\Vert\psi_j\Vert^2_{H_{j-1}(\mathrm{ker}(W_j))}\Vert f_j\Vert^2_{H_{j-1}(\mathrm{ker}(W_j)^{\perp})}
%&=\sum_{\vert\alpha\vert\le s_{j-1}}\Vert \partial^{\alpha}\psi_j\Vert_{L^2(\mathrm{ker}(W_j))}^2\sum_{\vert\alpha\vert\le s_{j-1}}\Vert \partial^{\alpha}f_j\Vert_{L^2(\mathrm{ker}(W_j)^{\perp})}^2\\
%&\le\sum_{\vert\alpha\vert\le s_{j}}\Vert \partial^{\alpha}\psi_jf_j|_{\mathrm{ker}(W_j)^{\perp}}\Vert_{L^2(\mathbb{R}^{d_{j-1}})}^2.
%\end{align*}
%Here, we used the inequality
%\begin{align*}
%\Vert \partial^{\alpha}\psi_j\Vert_{L^2(\mathrm{ker}(W_j))}^2\Vert \partial^{\beta}f_j\Vert_{L^2(\mathrm{ker}(W_j)^{\perp})}^2
%&=\int_{\mathrm{ker}(W_j)}\int_{\mathrm{ker}(W_j)^{\perp}}\partial^{\alpha}\psi_j(x_1)\partial^{\beta}f_j(x_2)\mathrm{d}x_2\mathrm{d}x_1\\
%&\le \int_{\mathrm{ker}(W_j)}\int_{\mathrm{ker}(W_j)^{\perp}}\partial^{\alpha}\partial^{\beta}\psi_j(x_1)f_j(x_2)\mathrm{d}x_2\mathrm{d}x_1.
%\end{align*}
Thus, we obtain
\begin{align*}
\Vert\psi_j\Vert_{H_{j-1}(\mathrm{ker}(W_j))}G_j&=\frac{\Vert\psi_j\Vert_{H_{j-1}(\mathrm{ker}(W_j))}\Vert f_j\Vert_{H_{j-1}(\mathrm{ker}(W_j)^{\perp})}}{\Vert f_j\Vert_{H_j}}\\
&\le \frac{\Vert\psi_{j,1}f_j|_{\mathrm{ker}(W_j)^{\perp}}\Vert_{H_j(\mathbb{R}^{d_{j-1}})}}{\Vert f_j\Vert_{H_j}}.
\end{align*}
If $\psi_{j,1}f_j|_{\mathrm{ker}(W_j)^{\perp}}=f_j$, e.g. $\psi_{j,1}$ and $f_j$ are the Gaussian, and $d_{j-1}=d_j$, then the factor $\Vert\psi_j\Vert_{H_{j-1}(\mathrm{ker}(W_j))}G_j$ is bounded by $1$.}

\section{Details of Remark~\ref{rmk:combine}}\label{ap:combine}
We can combine our Koopman-based approach with the existing ``peeling'' approach.
For $1\le l\le L$, let $F^{\opn{ReLU}}_l(C_1)$ be the set of $l$-layer ReLU networks where the Frobenius norms of $W_1,\ldots,W_l$ are bounded by $C_1$, considered by~\citet{neyshabur15}.
Let $\tilde{F}_{1:l}$ be the set of functions defined in the same manner as Eq.~\eqref{eq:nn_middle}, except for replacing $\sigma_l$ with $g\in H_l$.
In addition, let $\tilde{F}_{1:l,\opn{inj}}(C_2,D)=\{f\in \tilde{F}_{1:l}\,\mid\, W_j\in\mcl{W}_j(C_2,D)\}$.
We combine $F^{\opn{ReLU}}_{L-l}(C_1)$ and $\tilde{F}_{1:l,\opn{inj}}(C_2,D)$, and we define
$F^{\opn{ReLU},L}_{1:l,\opn{inj}}(C_1,C_2,D)=\{h\circ f\,\mid\, h\in F^{\opn{ReLU}}_{L-l}(C_1),\ f\in \tilde{F}_{1:l,\opn{inj}}(C_2,D)\}$.
Then, applying Theorem 1 by~\citet{neyshabur15}, we can bound the complexity of $L$-layer networks using that of $(L-1)$-layer networks and the Frobenius norm of $W_L$.
For example, by Eq.~(8) by \citet{golowich18}), we have
\begin{align*}
\hat{R}_n(\mathbf{x},F^{\opn{ReLU,L}}_{1:l,\opn{inj}}(C_1,C_2,D))
&\le \Vert W_L\Vert_{2,2}\hat{R}_n\big(\mathbf{x},\sigma\big(F^{\opn{ReLU,L-1}}_{1:l,\opn{inj}}(C_1,C_2,D)\big)\big)\\
&\le 2\Vert W_L\Vert_{2,2}\hat{R}_n(\mathbf{x},F^{\opn{ReLU,L-1}}_{1:l,\opn{inj}}(C_1,C_2,D)),
\end{align*}
where $\hat{R}_n(\mathbf{x},\mcl{F})$ for a vector-valued function class $\mcl{F}$ is defined as $\mr{E}[\sup_{f\in\mcl{F}}1/n\Vert\sum_{i=1}^ns_if(x_i)\Vert]$ and $\sigma$ is the ReLU.
As a result, we obtain
\begin{align*}
\hat{R}_n(\mathbf{x},F^{\opn{ReLU,L}}_{1:l,\opn{inj}}(C_1,C_2,D))
&\le 2\Vert W_L\Vert_{2,2}\hat{R}_n(\mathbf{x},F^{\opn{ReLU,L-1}}_{1:l,\opn{inj}}(C_1,C_2,D))\\
&\le 
2^{L-l}\prod_{j=l+1}^{L}\Vert W_j\Vert_{2,2}\hat{R}_n(\mathbf{x},\tilde{2,2}_{1:l,\opn{inj}}(C_2,D))\\
&\le 2^{L-l}\Vert g\Vert_{H_l}\bigg(\prod_{j=l+1}^{L}\Vert W_j\Vert_{2,2}\bigg)\,\bigg(\prod_{j=1}^{l}\frac{G_j\Vert K_{\sigma_j}\Vert\Vert W_j\Vert^{s_j}}{\opn{det}(W_j^*W_j)^{1/4}}\bigg).
\end{align*}
We can also apply other peeling approaches in the same manner as the above case.
%Typically, the width grows sequentially near the input layer, i.e., $d_{i-1}\le d_i$ for small $i$ and decays near the output layer, i.e., $d_{i-1}\ge d_i$ for large $i$.
%Therefore, this type of combination is suitable for many practical cases for deriving a tighter bound than existing bounds.

\section{Examples of concrete Koopman-based bounds}
\red{We show examples of our Koopman-based bounds.}
\begin{example}\label{ex:bound1}
\red{Let $g(x)=\mathrm{e}^{-c\Vert x\Vert^2}$. %$g(x)=\prod_{i=1}^{d_1}\tilde{g}(x_i)$ for $x=(x_1,\ldots,x_{d_1})$ and some $\tilde{g}:\mathbb{R}\to\mathbb{R}$.
Let $p_j(\omega)=1/(1+\Vert\omega\Vert^2)^{s_j}$ for $s_j>d_j/2$.
We consider a shallow network $f(x)=g(Wx+b)$.
Assume $d_1\ge d_0$ and $W$ is full-rank.
%The standard shallow network is often represented as $\tilde{f}(x)=W_2\sigma(W_1x+b)$.
The final nonlinear transformation $g$ in $f$ maps the high dimensional vector on $\mathbb{R}^{d_1}$ to a scalar value. %which has the same role as $W_2$ and $\sigma$ in $\tilde{f}$.
In this case, $f_1(x)=g(x+b)$ is also the Gaussian.
%In this case, since $f_1(x)=g(x+b)$ is isotropic, we have $\Vert f_1|_{\mathcal{R}(W)}\cdot f_1|_{\mathcal{R}(W)^{\perp}}\Vert_{H_1}/\Vert f_1\Vert_{H_1}=1$.
We have
$$\hat{R}_n(\mathbf{x},F_{\mathrm{inj}}(C,D))\le \frac{B}{\sqrt{n}}G_{1}\Vert g\Vert_{H_1} \frac{\max\{1,\Vert W\Vert\}^{s_0}}{\mathrm{det}(W^*W)^{1/4}}.$$
Since $\Vert g\Vert_{H_1}=\Vert f_1\Vert_{H_1}$, by Eq.~\eqref{eq:gaussian_eval}, we have
\begin{align*}
G_1^2\Vert g\Vert_{H_1}^2=\frac{\Vert f_1|_{\mathcal{R}(W)}\Vert_{H_{{p_0}(\mathcal{R}(W))}}\Vert g\Vert_{H_1}}{\Vert f_1\Vert_{H_{1}}}
\sim c^{s_0-d_0/2}\pi^{-2s_0+1}2^{-s_0-d_0/2}\Gamma(s_0+d_0/2)\prod_{i=1}^{d_0-2}\tilde{c}_i.
%c^{s_0}\pi^{-2s_0-(d_0-1)/2+1}\bigg(\prod_{i=1}^{d_0-2}\tilde{c}_i\bigg)\Gamma(s_0+d_0/2).
\end{align*}
%\begin{align*}
%\Vert g\Vert_{H_1}^2&=\int_{\mathcal{R}(W)}\int_{\mathcal{R}(W)^{\perp}}\vert\hat{g}(\omega_1)\vert^2\vert\hat{g}(\omega_2)\vert^2(1+\Vert\omega_1+\omega_2\Vert^2)^{s_1}\mathrm{d}\omega_1\mathrm{d}\omega_2\\
%&\le \int_{\mathcal{R}(W)}\int_{\mathcal{R}(W)^{\perp}}\vert\hat{g}(\omega_1)\vert^2\vert\hat{g}(\omega_2)\vert^2(1+\Vert\omega_1\Vert)^{s_1}(1+\Vert\omega_2\Vert^2)^{s_1}\mathrm{d}\omega_1\mathrm{d}\omega_2
%\le \Vert g\Vert_{H^{s_1}(\mathcal{R}(W))}\Vert g\Vert_{H^{s_1}(\mathcal{R}(W)^{\perp})}.
%\end{align*}
%Since $G_{1,0}=1/\Vert f_1\Vert_{\tilde{H}_1}$ and $\Vert f_1\Vert_{\tilde{H}_1}=\Vert g\Vert_{\tilde{H}_1}$, we have
%$$G_{1,0}\Vert g\Vert_{H_1}\le \frac{\Vert g\Vert_{H^{s_1}(\mathcal{R}(W))}\Vert g\Vert_{H^{s_1}(\mathcal{R}(W)^{\perp})}}{\Vert g\Vert_{\tilde{H}_1}}.$$
%If $d_1\gg d_0$, i.e., $f$ is a wide and shallow network, then we have $\Vert g\Vert_{H^{s_1}(\mathcal{R}(W)^{\perp})}\approx \Vert g\Vert_{\tilde{H}_1}$.
%Thus, we have $G_{1,0}\Vert g\Vert_{H_1}\lesssim \Vert g\Vert_{H^{s_1}(\mathcal{R}(W))}$.
As a result, we have
\begin{align*}
&\hat{R}_n(\mathbf{x},F_{\mathrm{inj}}(C,D)) \\
&\lesssim\frac{B}{\sqrt{n}}c^{s_0/2-d_0/4}\pi^{-s_0+1/2}2^{-s_0/2-d_0/4}\bigg(\prod_{i=1}^{d_0-2}\tilde{c}_i\bigg)^{1/2}\Gamma(s_0+d_0/2)^{1/2}\frac{\max\{1,\Vert W\Vert\}^{s_0}}{\mathrm{det}(W^*W)^{1/4}}.
\end{align*}
Note that $\mathrm{dim}(\mathcal{R}(W))=d_0$ is the dimension of the input and $s_0$ is chosen as $s_0>d_0/2$.
They are independent of the structure of the network.}
\end{example}

\begin{example}
\red{Let $\sigma(x)=(\mathrm{e}^{-c_1\Vert x\Vert^2},\ldots,\mathrm{e}^{-c_{d_1}\Vert x\Vert^2})$. 
Let $p_j(\omega)=1/(1+\Vert\omega\Vert^2)^{s_j}$ for $s_j>d_j/2$.
We consider a shallow network $f(x)=W_2\sigma(W_1x+b)$.
Assume $d_1\ge d_0$, $d_2=1$ and $W$ is full-rank.
Using the ``peeling'' approach and Example~\ref{ex:bound1}, we obtain
\begin{align*}
&\hat{R}_n(\mathbf{x},F_{\mathrm{1:1,inj}}^2(C_1,C_2,D))\le \Vert W_2\Vert_{2,2}\hat{R}_n(\mathbf{x},F_{\mathrm{inj}}^1(C_2,D))\\
&\lesssim \Vert W_2\Vert_{2,2}\sum_{i=1}^{d_1}\frac{B}{\sqrt{n}}c_i^{s_0/2-d_0/4}\pi^{-s_0+1/2}2^{-s_0/2-d_0/4}\bigg(\prod_{i=1}^{d_0-2}\tilde{c}_i\bigg)^{1/2}\!\!\!\Gamma(s_0+d_0/2)^{1/2}\frac{\max\{1,\Vert W_1\Vert\}^{s_0}}{\mathrm{det}(W_1^*W_1)^{1/4}}.
\end{align*}
Here, we used the inequality
\begin{align*}
\hat{R}_n(\mathbf{x},\mcl{F})
&=\frac{1}{n}\mr{E}\bigg[\sup_{f\in\mcl{F}}\bigg\Vert\sum_{i=1}^ns_if(x_i)\bigg\Vert\bigg]
=\frac{1}{n}\mr{E}\bigg[\sup_{f\in\mcl{F}}\sqrt{\sum_{j=1}^d\bigg(\sum_{i=1}^ns_i(f(x_i))_j\bigg)^2}\bigg]\\
&\le \frac{1}{n}\mr{E}\bigg[\sup_{f\in\mcl{F}}\sum_{j=1}^d\sqrt{\bigg(\sum_{i=1}^ns_i(f(x_i))_j\bigg)^2}\bigg]
=\frac{1}{n}\mr{E}\bigg[\sup_{f\in\mcl{F}}\sum_{j=1}^d\bigg\vert\sum_{i=1}^ns_i(f(x_i))_j\bigg\vert\bigg]\\
&\le\frac{1}{n}\sum_{j=1}^d\mr{E}\bigg[\sup_{f\in\mcl{F}}\bigg\vert\sum_{i=1}^ns_i(f(x_i))_j\bigg\vert\bigg],
\end{align*}
where $(f(x_i))_j$ is the $j$th element in the vector $f(x_i)$.}
In this case, the bound depends on $d_1$ linearly.
\end{example}

\section{Injectivity of $\tilde{W}_j$}\label{ap:injective}
\red{The operator $\tilde{W}_j$ defined in Subsection~\ref {subsec:injective_based} is injective.
Indeed, assume $(W_1x,P_1x)=(W_1y,P_1y)$ for $x,y\in\mathbb{R}^{d_0}$.
Then, we have $x-y\in ker(W_1)$.
On the other hand, we have $P_1(x-y)=0$.
Since $x-y\in ker(W_1)$, we have $x-y=P_1(x-y)=0$.
The case of $j\ge 2$ is the same.}

\section{Experimental details and additional experimental results}\label{app:experiment}
We explain details of experiments in Section~\ref{sec:numerical_results} and show additional experimental results.
All the experiments were executed with Python 3.9 and TensorFlow 2.6.
\subsection{Validity of the bound (Synthetic data)}\label{app:ex_detail1}
We constructed a network $f_{\theta}(x)=g(W_2\sigma(W_1x+b_1)+b_2)$, where $W_1\in\r{3\times 3}$, $W_2\in\r{6\times 3}$, $b_1\in\r{3}$, $b_2\in\r{6}$, $\theta=(W_1,b_1,W_2,b_2)$, $\sigma(x)=((1+\alpha)x+(1-\alpha)x\,\mr{erf}(\mu(1-\alpha)x))/2$, and $g(x)=\mr{e}^{-\Vert x\Vert^2}$.
Here, $\mr{erf}$ is the Gaussian error function, and $\sigma$ is a smooth version of Leaky ReLU proposed by~\citet{biswas22}.
We set $\alpha=\mu=0.5$.
For training the network, we used $n=1000$ samples $x_i\ (i=1,\ldots,1000)$ drawn independently from the normal distribution with mean 0 and standard deviation 1.
The weight matrices are initialized by Kaiming Initialization~\citep{kaiming15}, and we used the SGD for the optimizer.
In addition, we set the error function as $l_{\theta}(x,y)=\vert f_{\theta}(x)-y\vert^2$, and added the regularization term $0.01(\prod_{j=1}^2\opn{det}(W_j^*W_j)^{-1/2}+10\prod_{j=1}^2\Vert W_j\Vert)$.
We added this regularization term since, according to our bound, both the determinant and the operator norm of $W_j$ should be small for achieving a small generalization error.
The generalization error here means $\vert\mathrm{E}[l_{{\theta}}(x,t(x))]-1/n\sum_{i=1}^nl_{{\theta}}(x_i,t(x_i))\vert$, which is compared to our bound $O(\prod_{j=1}^L\Vert W_j\Vert^{s_j}/(\mathrm{det}(W_j^*W_j)^{1/4}))$ in Figure~\ref{fig:mainfig} (a).
Here, we set $s_j=(d_j+0.1)/2$.

\subsection{Validity of the bound (MNIST)}\label{app:mnist}
We constructed a network $f_{\theta}(x)=g(W_4\sigma(W_3\sigma(W_2\sigma(W_1x+b_1)+b_2)+b_3)+b_4)$ with dense layers, where $W_1\in\r{1024\times 784}$, $W_2\in\r{2048\times 1024}$, $W_3\in\r{2048\times 2048}$, $W_4\in\r{10\times 2048}$, $b_1\in\r{1024}$, $b_2\in\r{2048}$, $b_3\in\r{2048}$, $b_4\in\r{10}$, $\theta=(W_1,b_1,W_2,b_2,W_3,b_3,W_4,b_4)$, $\sigma$ is the same function as Section~\ref{app:ex_detail1}, and $g$ is the softmax.
See Remark~\ref{rmk:final_trans} for the validity of our bound for the case where $g$ is the softmax.
For training the network, we used only $n=1000$ samples to create a situation where the model is hard to generalize.
We consider the regularization term $\lambda_1\Vert W_j\Vert +\lambda_2/\opn{det}(I+W_j^*W_j)$, where $\lambda_1=\lambda_2=0.01$ to make both the norm and the determinant of $W_j$ small (thus, makes our bound small).
Based on the observation in the last part of Subsection~\ref{subsec:combine}, we set the regularization term for only $j=1,2$.
The weight matrices are initialized by the orthogonal initialization for $j=1,2$ and by the samples from a truncated normal distribution for $j=3,4$, and we used Adam~\citep{kingma15} for the optimizer.
In addition, we set the error function as the categorical cross-entropy loss.

\subsubsection{Transformation of signals by lower layers}\label{subsec:transformation}
We also investigated the transformation by lower layers, as we stated in the last part of Subsection~\ref{subsec:combine}.
We computed $\vert \cos(\theta)\vert$,
where $\theta$ is the maximum value of the angles between the output of the second layer and the directions of singular vectors of $W_3$ associated with the singular values that are larger than $0.1$.
The results are illustrated in Figure~\ref{fig:angle}.
We can see that with the regularization based on our bound, as the test accuracy grows, the value $\vert \cos(\theta)\vert$ also grows.
This result means that the signals turn to the directions of the singular vectors of the subsequent weight matrix associated with large singular values.
That makes the extraction of information from the signals in higher layers easier.
On the other hand, without the regularization, neither the test accuracy nor the value $\vert \cos(\theta)\vert$ do not become large after a sufficiently long learning process (see also Figure~\ref{fig:mainfig} (b)).
The results in Figures~\ref{fig:mainfig} (b) and \ref{fig:angle} are obtained by three independent runs.

\begin{figure}[t]
    \centering
    \includegraphics[scale=0.3]{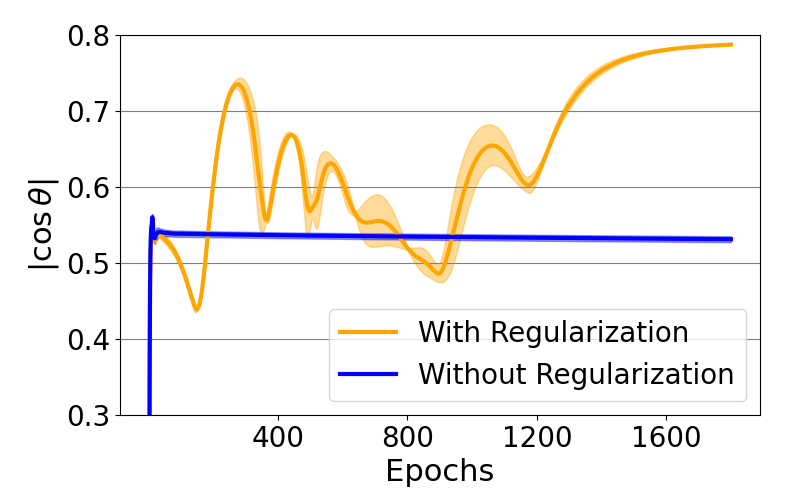}
    \caption{Behavior of the value $\vert \cos(\theta)\vert$.
Here, $\theta$ is the maximum value of the angles between the output of the second layer and the directions of singular vectors of $W_3$ associated with the singular values that are larger than $0.1$.}
    \label{fig:angle}
\end{figure}

\begin{figure}[t]
    \centering
    \includegraphics[scale=0.3]{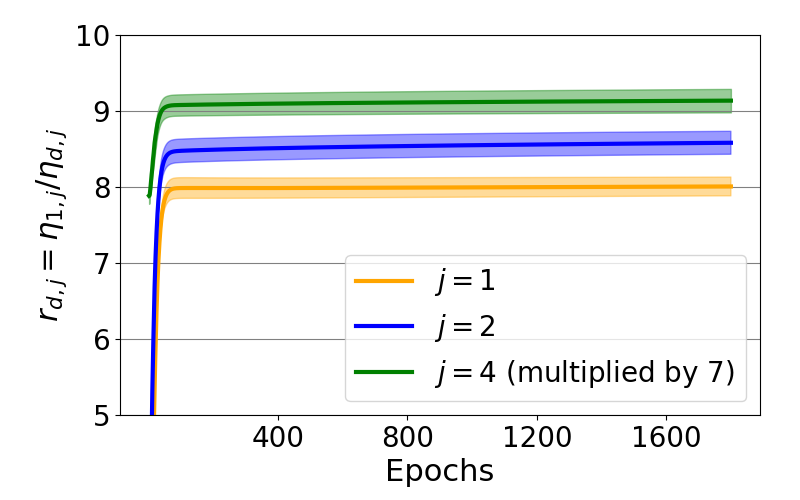}
    \includegraphics[scale=0.3]{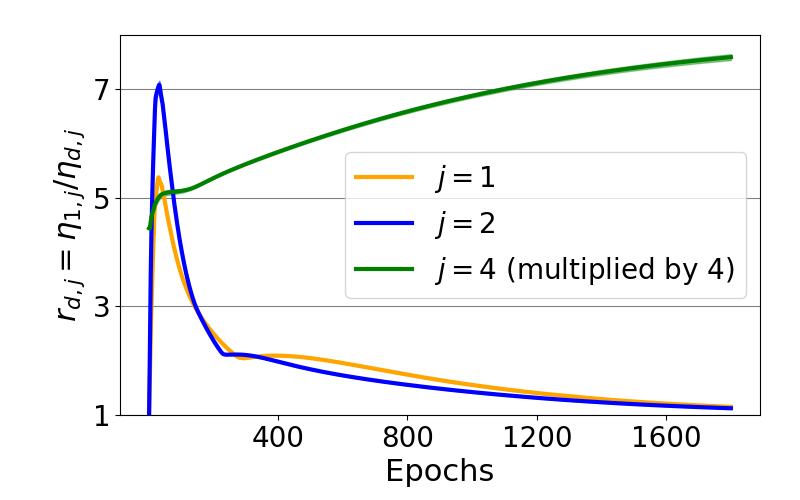}
    \caption{The ratio $r_{d,j}=\eta_{1,j}/\eta_{d,j}$ of singular values (condition number) of weight matrices for layers $j=1,2,4$. (Right) Without regularization (Left) With the regularization based on our bound.}
    \label{fig:singval_mnist}
\end{figure}

\begin{figure}[t]
    \centering
    \includegraphics[scale=0.3]{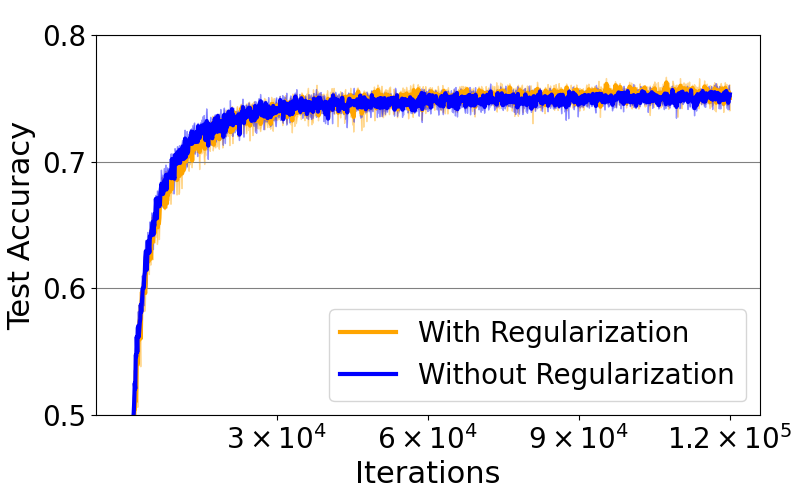}
    \caption{Test accuracy of AlexNet \red{traind} by CIFAR-10 \red{with and without regularization}.}
    \label{fig:acc}
\end{figure}

\begin{figure}[t]
    \centering
    \includegraphics[scale=0.3]{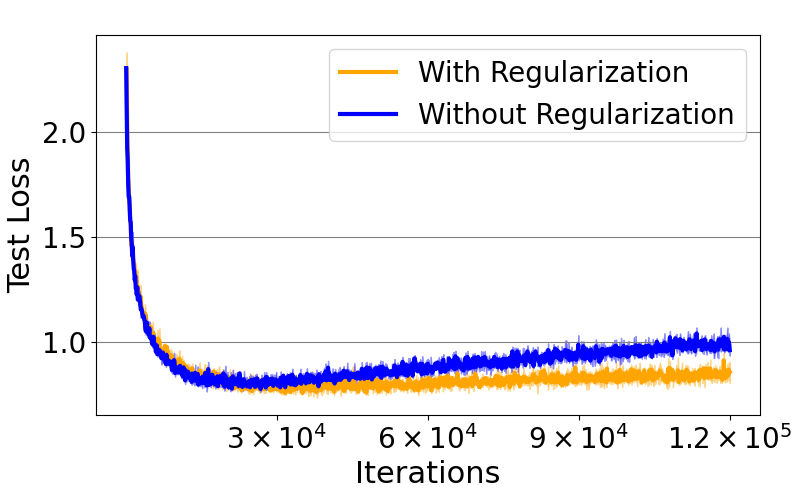}
    \includegraphics[scale=0.3]{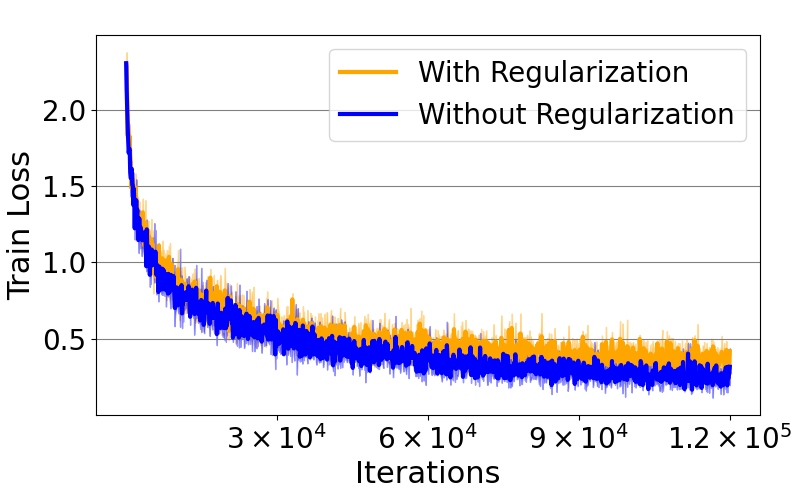}
    \caption{\red{Test and train loss of AlexNet trained by CIFAR-10 with and without regularization. (Right) Test loss (Left) Train loss.}}
    \label{fig:cifar_regularization}
\end{figure}

\subsection{Singular values of the weight matrices}\label{ap:exp_singular}
We constructed an AlexNet~\cite{krizhevsky12} where the ReLU activation function is replaced by the smooth version of Leaky ReLU ($\sigma$ in Section~\ref{app:ex_detail1}) to meet our setting.
For training the network, we used $n=50000$ samples and used the Adam optimizer.
We set the error function as the categorical cross-entropy loss.
We show the test accuracy through the learning process in Figure~\ref{fig:acc}.
In addition to the AlexNet, we also computed the ratio $r_{d,j}$ of the largest and the smallest singular values (condition number) of the weight matrices for the network we used in Section~\ref{app:mnist}.
Since the behavior of the singular values of $W_3$ was unstable and did not have clear patterns, we only show the result for $j=1,2,4$ in Figure~\ref{fig:singval_mnist}.
We scaled the values for $j=4$ for readability.
In the case of the AlexNet, the behavior of singular values of each weight matrix was different depending on the layer.
However, in the case of the network in Section~\ref{app:mnist}, without the regularization, the condition number $r_{d,j}$ stagnates for $j=1,2,4$ through the learning process, and the test accuracy also stagnates.
On the other hand, with the regularization based on our bound, $r_{d,j}$ becomes small for $j=1,2$ as the learning process proceeds by virtue of the regularization.
We can also see that $r_{d,j}$ grows for $j=4$ as the learning process proceeds, which makes the angle $\theta$ in Figure~\ref{fig:angle} large.
As discussed in the last part of Subsection~\ref{subsec:combine}, we can conclude that the regularization transforms the signals in lower layers ($j=1,2$) and makes it easier for them to be extracted in higher layers ($j=4$), and the test accuracy becomes higher than the case without the regularization.
The results in Figures~\ref{fig:mainfig} (c), \ref{fig:singval_mnist}, and \ref{fig:acc} are obtained by three independent runs.

\subsection{Validity of the bound (CIFAR-10)}
\red{We used the same network and the same dataset as Appendix~\ref{ap:exp_singular} and observed the generalization property with and without a regularization based on our result.
We consider the regularization term $\lambda_1\Vert W_j\Vert +\lambda_2/\Vert 0.01I+W_j^*W_j\Vert$, where $\lambda_1=0.1$ and $\lambda_2=0.001$ to make both the largest and the smallest singular values of $W_j$ small (thus, makes our bound small).
Since the AlexNet is composed of convolutional layers, we represented the convolutional layers as matrices.
For the convolution $\sum_{i=1}^n\sum_{j=1}^mf_{k-i,l-j}x_{k,j}$ with a convolutional filter $F=[f_{i,j}]$, we can construct a tensor $\tilde{W}_{i,j,k,l}=f_{k-i,l-j}$.
If $i$ or $j$ is out of the bound of the index of the filter, then we set $f_{i,j}=0$.
We can combine the indices $(i,j)$ and $(k,l)$ in $\tilde{W}_j$ and obtain a matrix $W_j$ that represents the convolution.
%The matrix $W_j$ above means the Toeplitz matrix.
Note that since the dimension of $W_j$ is large, setting a regularization term with the determinant of $W_j$ can cause numerical overflows.
Thus, we set $\Vert 0.01I+W_j^*W_j\Vert$ instead of its determinant in the same manner as Appendix~\ref{app:mnist}.
Based on the observation in the last part of Subsection~\ref{subsec:combine}, we set the regularization term for only $j=1,2$.
Figure~\ref{fig:acc} shows the test accuracy obtained with and without the regularization.
The behavior of the accuracy obtained with and without the regularization are similar.
Figure~\ref{fig:cifar_regularization} shows the test and train loss.
We can see that without the regularization, although the train loss becomes small, the test loss becomes large as the iteration proceeds.
On the other hand, with the regularization, the train loss becomes small, and the test loss does not become so large as the case without the regularization.}

\section{Norm of the Sobolev space}\label{ap:sobolev_norm}
Let $p(\omega)=(1+\Vert\omega\Vert^2)^s$ with $s\in\mathbb{N}$.
We can represent the Sobolev norm $\Vert f\Vert_{H_p(\mathbb{R}^d)}$ using the derivatives of $f$ if $s\in\mathbb{N}$.
Indeed, we have
\begin{align*}
\Vert f\Vert_{H_p(\mathbb{R}^d)}^2
&=\int_{\r{d}}\vert\hat{f}(\omega)\vert^2(1+\Vert\omega\Vert^2)^s\mr{d}\omega
=\int_{\r{d}}\vert\hat{f}(\omega)\vert^2\sum_{i=0}^s\binom{s}{i}\Vert\omega\Vert^{2i}\mr{d}\omega\\
&=\int_{\r{d}}\vert\hat{f}(\omega)\vert^2\sum_{i=0}^s\binom{s}{i}\bigg(\sum_{j=1}^d\omega_j^{2}\bigg)^i\mr{d}\omega\\
%=\int_{\r{d}}\vert\hat{f}(\omega)\vert^2\sum_{i=0}^s\binom{s}{i}\bigg(\sum_{j=1}^d\omega_j^{2}\bigg)^i\mr{d}\omega\\
&=\int_{\r{d}}\vert\hat{f}(\omega)\vert^2\sum_{i=0}^s\binom{s}{i}\sum_{\vert\alpha\vert=i}\binom{i}{\alpha}(\omega^{\alpha})^2\mr{d}\omega
=\sum_{\vert\alpha\vert\le s}\frac{s!}{(s-\vert\alpha\vert)!\alpha!}\int_{\r{d}}\vert\hat{f}(\omega)\omega^{\alpha}\vert^2\mr{d}\omega\\
&=\sum_{\vert\alpha\vert\le s}\frac{s!}{(s-\vert\alpha\vert)!\alpha!}(2\pi)^d\Vert \partial^{\alpha}f\Vert^2_{L^2(\r{d})}.
\end{align*}

\end{document}